\documentclass[11pt]{article}
\usepackage{paper}

\title{Width-Robust Learnability in Mean-Field Bayesian Neural Networks}
\author{Dmitry Vaintrob\\Principles of Intelligence \and Kaarel H\"anni}
\date{June 2026}

\begin{document}
\maketitle

\begin{abstract}
Infinite-width limits are a standard way to reason about neural networks, but it is not automatic
that the limiting learner has the same complexity-theoretic inductive bias as large finite networks.
We study this question for Bayesian neural networks at the mean-field, or critical feature-learning,
scaling.  The central quantity is the \emph{reduced entropy}
\[
  s_\infty(y,\eps)=\limsup_N -\frac1N\log \pi_N^0(L\le \eps),
\]
the intensive prior cost of representing a target function $y$ to population mean-squared error
$\eps$.

Our main result is a width-robust learnability theorem. At fixed depth, a family of Boolean-cube
targets is learnable from polynomially many samples at infinite width if and only if it is learnable
at polynomial width, if and only if its reduced entropy is polynomially bounded. Equivalently, up to
polynomial slack in accuracy, the Bayesian mean-field learner generalizes exactly on the targets
that can be represented by polynomial-size networks.

The forward direction is proved by a form of subsampling: from the infinitely many hidden neurons
in the mean-field solution, one can select polynomially many representatives and still preserve the
learned function on every input simultaneously.  At the critical scaling this subsampling has both an
``active'' component, which keeps the data-dependent low-dimensional statistics, and a ``lazy''
component, which resamples the entropy-dominated directions from the prior.  Thus the infinite-width
mean-field limit gives a clean analytic description of learning without introducing spurious
width-dependent generalization power.
\end{abstract}

\tableofcontents
\bigskip

\section{Introduction}\label{sec:intro}

A wide neural network generalizes far better than its parameter count would suggest, because its
prior places more mass on simple, structured functions than on complex ones. Making this precise
means identifying the \emph{inductive bias} of the model: the complexity ordering on target
functions induced, through Bayesian conditioning, by the architecture together with the weight
prior. For a target $y$ the natural complexity parameter is the prior log-probability that the
network fits $y$ to a given accuracy --- the harder $y$ is to represent, the smaller this
probability and the larger the complexity. We study this complexity, and the learnability it
controls, in the overparameterized regime where the induced function-space bias is cleanest.

\begin{figure}[t]
\centering
\resizebox{0.78\textwidth}{!}{%
\begin{tikzpicture}[font=\sffamily,>={Stealth[length=3mm]}]
\def\bw{0.17}
\def\spec#1#2{%
  \foreach \h [count=\i from 0] in {1.0,0.82,0.68,0.56,0.46,0.38}{
    \pgfmathsetmacro\xx{#1-0.70+\i*0.245}
    \ifnum\i<#2\relax
      \fill[active] (\xx,0.62) rectangle (\xx+\bw,0.62+\h);
    \else
      \fill[lazy] (\xx,0.62) rectangle (\xx+\bw,0.62+\h);
    \fi
  }
}

\draw[ink,line width=0.9pt,->] (0.5,0.6) -- (10.8,0.6);
\node[ink,anchor=west,font=\sffamily] at (10.85,0.6) {$\gamma$};

\spec{2.0}{0}     
\spec{5.5}{2}     
\spec{9.0}{6}     

\node[ink,anchor=south,font=\sffamily\bfseries] at (5.5,1.78) {mean field};

\foreach \x in {2.0,5.5,9.0}{ \draw[ink,line width=0.8pt] (\x,0.6) -- (\x,0.46); }
\node[ink,anchor=north,font=\sffamily\small] at (2.0,0.42) {$\gamma<1$};
\node[ink,anchor=north,font=\sffamily\small] at (5.5,0.42) {$\gamma=1$};
\node[ink,anchor=north,font=\sffamily\small] at (9.0,0.42) {$\gamma>1$};

\node[anchor=north,font=\sffamily\footnotesize,text=active] at (5.0,-0.18) {active};
\node[anchor=north,font=\sffamily\footnotesize,text=ink] at (5.5,-0.18) {/};
\node[anchor=north,font=\sffamily\footnotesize,text=lazy] at (6.0,-0.18) {lazy};

\end{tikzpicture}
}
\caption{\textbf{The spectrum across scalings.} The same decaying feature-kernel spectrum, recoloured by regime. At $\gamma<1$ every direction is lazy (brown) and the learner is a fixed kernel (NTK/NNGP); at the mean-field scaling $\gamma=1$ a few top directions become active (blue) above a lazy bulk; at $\gamma>1$ the lazy bulk falls below the accuracy threshold and only active directions remain. This paper treats the critical scaling $\gamma=1$.}
\label{fig:spectra}
\end{figure}

\paragraph{Which infinite width?}
Overparameterization is usually idealized by an infinite-width limit, but there are several such
limits and they disagree about complexity. In the \emph{lazy} (NTK/NNGP) scaling the network behaves
like a fixed kernel method \cite{jacot2018,lee2018}, with the inductive bias of a Gaussian process;
such methods provably require super-polynomially many samples on simple structured targets such as
parities \cite{danielymalach2020,barak2022}, even though those targets are computed by small
networks. The \emph{mean-field} or critical rich scaling ($\gamma=1$)
\cite{mmn2018,chizatbach2018,bordelonpehlevan2022,yanghu2021} keeps a different part of the finite
network alive. The output is still an average over many hidden neurons, but the distribution of
those neurons is allowed to change after seeing data. Thus the infinite-width object is not a frozen
kernel, but a law describing the typical learned hidden neuron. This limit is widely expected to
recover a more circuit-like complexity ordering. What has been missing is a generalization theory
for it --- and, just as importantly, an argument that the infinite-width limit says anything about
the finite widths one actually uses.

\paragraph{The reduced entropy.}
We make the complexity measure explicit. At the mean-field scaling a width-$N$ network has prior
probability $\pi^0_N(\loss\le\eps)$ of achieving population mean-squared error at most $\eps$ on $y$;
this probability decays like $e^{-N s}$, and the intensive rate
\[
  s_\infty(y,\eps)=\limsup_{N\to\infty}\Big(-\tfrac1N\log\pi^0_N(\loss\le\eps)\Big)
\]
is the \emph{reduced entropy} --- a width-independent complexity of $y$ relative to the prior. It
is the Bayesian/PAC-Bayes analogue of description length: the number of nats per neuron the prior
must spend to put mass near the target. It is finite for every $y$ (memorization gives
$s_\infty=O(2^d)$); the content is which targets have \emph{polynomial} reduced entropy.

\paragraph{Main result: learnability is width-robust.}
Our main result is that at $\gamma=1$ the reduced entropy governs learnability, and learnability is
the same at infinite width and at polynomial width. Informally (Theorem~\ref{thm:headline}): for a
family of targets $(y_d)_{d}$ and fixed depth, the following are equivalent --- learnable at
infinite width from polynomially many samples; learnable at polynomial width from polynomially many
samples; computed to $o(1)$ mean-squared error by a polynomial-width network; and
$s_\infty(y_d,\eps)$ polynomial in $d$. In a sentence: \emph{a target is learnable at infinite width
from polynomial data if and only if it is computed to $o(1)$ error by a polynomial-width network.}

This can be read as a complexity-theoretic sanity check on the mean-field limit. The infinite-width
equations replace a large finite network by a cleaner object: instead of tracking every neuron, one
tracks the probability law of a typical learned neuron, or equivalently a system of self-consistent
kernels and tilts. The theorem says that this cleaner object has not acquired an artificial
computational advantage merely from having infinitely many neurons available. Infinite width changes
the analytic description of learning, but not the class of targets learned with polynomial sample
complexity. The mechanisms visible in the mean-field equations are therefore not artifacts of an
infinite resource: any low-entropy mean-field solution has a polynomial-width realization with the
same predictions, up to polynomial slack.

\paragraph{Generalization is width-robust, and the witness is pointwise.}
The equivalence is, at heart, a robustness statement: the population-$\eps$ generalization behavior
of $\gamma=1$ mean-field Bayesian learning is insensitive to width. Exactly the same targets are
learnable at polynomial and at infinite width, up to the standard polynomial slack in the accuracy
thresholds, width, and sample size (Theorem~\ref{thm:nutsandbolts}).

What we prove is stronger than a coincidence of risk values. For any low-entropy infinite-width
Gibbs solution achieving $\eps$ accuracy, Theorem~\ref{thm:nutsandbolts} produces a single
polynomial-width network agreeing with it \emph{at every input simultaneously},
$\sup_x|f_{\hat\theta}(x)-f_\infty(x)|\le\delta$, and hence matching the target in mean square at
the same accuracy, up to the polynomial slack. So the function learned at infinite width is not
merely matched in mean square by a finite-width learner --- it is reproduced pointwise. The
width-robustness of generalization is witnessed by an explicit pointwise agreement between the
infinite- and polynomial-width learned functions, rather than inferred from two separately bounded
risks.

\paragraph{Relation to prior work.}
The result is closest in spirit to recent work showing that overparameterized neural networks can
have a function-space simplicity bias not visible from parameter counting alone. Buzaglo, Harel,
Nacson, Brutzkus, Srebro, and Soudry \cite{buzaglo2024} show that typical random interpolating
networks drawn from an apparently flat parameter prior generalize from narrow teachers, because
redundancy in the parameterization induces a non-uniform prior over functions. Daniely and Granot's
approximate-description-length framework gives another route to weakly width-dependent or
width-independent sample-complexity bounds for norm-controlled neural networks
\cite{danielygranot2019,danielygranot2024}. Our contribution is complementary: rather than bounding
a finite class directly, we identify the intensive posterior entropy of an infinite-width
mean-field Bayesian learner and prove that low-entropy infinite-width solutions compress to
polynomial width.

The mean-field side of the paper is connected to the growing literature on rich infinite-width
limits and adaptive kernels, including Tensor Programs/$\mu$P \cite{yanghu2021}, self-consistent GP
and DMFT descriptions of learned representations
\cite{navehringel2021,bordelonpehlevan2022,seroussi2023}, multi-scale adaptive-kernel theories
\cite{rubin2025kernels,lauditi2025}, and recent work of Ringel and collaborators on sample/width
scales and algorithmic capture in rich regimes \cite{rubin2025curse,davidovichringel2026}. The
lecture notes \cite{helias2026} give a broad Bayesian/statistical-physics account of the path from
Gaussian processes to adaptive learned kernels. We also share language with recent work of
Zavatone-Veth, Bordelon, and Pehlevan \cite{zavatoneveth2025summary}, where subsampling and summary
statistics are used to relate complicated learning dynamics to simpler effective descriptions. Our
theorem gives a complementary complexity statement for Bayesian fully connected networks on the
Boolean cube. Together with the companion analysis of the $\gamma>1$ regime and the standard
$\gamma<1$ reduction to kernel learning, this informally closes the width-scaling loop for this
model class: kernel behavior below the mean-field scale, critical mean-field behavior at
$\gamma=1$, and no additional polynomial-sample learnability gained merely by passing to infinite
width.

\paragraph{The engine: subsampling.}
The forward direction is a version of subsampling.  Starting from the infinite-width learned object,
we want to keep only polynomially many neurons and still get the same function.  If the neurons were
ordinary independent samples from a fixed distribution, this would be a routine Monte Carlo
statement.  The difficulty is that the distribution is itself learned from the training data, and in
principle the data could be spread thinly across very many weak directions.  The proof shows that,
for low reduced entropy solutions, this does not happen in a way that matters for the output.

At the critical scaling $\gamma=1$ this subsampling has two parts.  The first is an \emph{active}
subsampling step: after projecting to the few directions where the posterior has substantial
data-dependent content, we sample actual neurons and preserve their projected behavior. This is the
same kind of subsampling that appears in the over-rich regime $\gamma>1$ treated in the companion
paper. The second is a \emph{lazy} step: in the many remaining directions, we do not try to keep the
realized hidden coordinates. Instead we replace them by fresh prior noise and prove, uniformly over
all inputs, that the output barely changes. The comparison is a short Stein-type argument; see
Lemma~\ref{lem:swap}.

This active/lazy split is not just a technicality. At $\gamma=1$ the variational problem contains
three terms of the same order: the empirical loss, the weight prior, and the ordinary entropy of the
posterior over neurons. Active subsampling is the part needed to remember the low-dimensional
statistics selected by the data and the weight prior. Lazy subsampling is the part needed to keep
the entropy term honest: most hidden coordinates are not fixed by the data, but are resampled from
the correct prior law. In the over-rich regime $\gamma>1$, the entropy term becomes lower order and
the active subsampling picture is more direct. In the lazy/kernel regime $\gamma<1$, the weight-prior
cost is lower order in the corresponding variational problem, only lazy hidden-layer terms remain,
and the learner reduces to kernel learning.

This subsampling mechanism drives both directions of the equivalence. \emph{Compression} (the
forward direction) turns a low-entropy infinite-width network into a genuinely small one: keep
polynomially many representatives of the active behavior, resample the lazy behavior from the prior,
and obtain a network whose width is polynomial in the input dimension $d$ (with the degree set by
the depth) that agrees with the original network at every input at once
(Theorem~\ref{thm:A}).

The converse asks the reverse question --- must the reduced entropy be small whenever the target can
\emph{already} be computed by a small network? --- and shows that it must. Suppose some network of
modest width $\mathsf W$, a ``teacher'', already reaches the target accuracy. One can imitate it inside a much
wider network by \emph{cloning}: replace each of the teacher's $\mathsf W$ neurons by many independent neurons
whose weights sit near that neuron's, up to small noise. Since there are only $\mathsf W$ neurons to imitate,
this imitating distribution departs from the random-initialization prior in only a $\poly(\mathsf W)$-sized
way, so it already places appreciable probability on the accurate region --- and appreciable prior
probability is exactly what small reduced entropy means (Theorem~\ref{thm:cloning}). Compression and
cloning are inverse operations: a small network can be read off from low reduced entropy, and
conversely a small network forces the reduced entropy to be low.

The generalization bound and its converse then assemble these into the equivalence, with pointwise
compression supplying the fixed, finite-dimensional class of functions that the uniform-convergence
step relies on.

\begin{figure}[H]
\centering
\resizebox{0.72\textwidth}{!}{%
\begin{tikzpicture}[font=\sffamily,>={Stealth[length=2.8mm]}]

\def\sy{4.55}
\def\bw{0.34}
\foreach \i/\h [count=\n from 0] in {0/1.7,1/1.35,2/1.0}{
  \fill[active] ({2.30+\n*0.46},\sy) rectangle ({2.30+\n*0.46+\bw},\sy+\h);}
\foreach \i/\h [count=\n from 3] in {3/0.55,4/0.46,5/0.39,6/0.33,7/0.28,8/0.24,9/0.21}{
  \fill[lazy] ({2.30+\n*0.46},\sy) rectangle ({2.30+\n*0.46+\bw},\sy+\h);}
\draw[ink,line width=0.7pt] (2.20,\sy) -- (7.00,\sy);
\draw[ink,densely dashed,line width=0.7pt] (2.15,\sy+0.78) -- (7.05,\sy+0.78);
\node[ink,anchor=west,font=\sffamily\small] at (7.10,\sy+0.78) {$\lambda_0$};
\draw[active,line width=0.9pt,decorate,decoration={brace,amplitude=4pt,mirror}]
  (2.28,\sy-0.12) -- (3.62,\sy-0.12);
\node[active,anchor=north,font=\sffamily\footnotesize] at (2.95,\sy-0.22) {active $V$};
\draw[lazy,line width=0.9pt,decorate,decoration={brace,amplitude=4pt,mirror}]
  (3.86,\sy-0.12) -- (6.97,\sy-0.12);
\node[lazy,anchor=north,font=\sffamily\footnotesize] at (5.40,\sy-0.22) {lazy $C$};

\def\yc{1.9}
\def\xnet{1.0}
\foreach \yy in {0.55,1.05,1.55,2.05,2.55,3.05,3.55}{ \neuron{\xnet}{\yy}{1} }
\draw[ink,line width=0.8pt,decorate,decoration={brace,amplitude=4pt}]
  (\xnet-0.42,0.48) -- (\xnet-0.42,3.62);
\node[ink,anchor=east,font=\sffamily\footnotesize] at (\xnet-0.52,\yc) {$N\!\to\!\infty$};
\draw[boxred,dashed,line width=1pt,rounded corners=2pt]
  (\xnet-0.30,1.30) rectangle (\xnet+0.30,2.80);

\draw[active,line width=1.4pt,->] (\xnet+0.36,\yc) -- (4.15,\yc);
\node[active,anchor=south,font=\sffamily\scriptsize,inner sep=2pt] at (2.78,\yc+0.12) {project to active};
\def\xmid{4.6}
\foreach \yy in {1.55,2.05,2.55}{ \neuron{\xmid}{\yy}{0} }

\draw[lazy,line width=1.4pt,->] (\xmid+0.36,\yc) -- (7.25,\yc);
\node[lazy,anchor=south,font=\sffamily\scriptsize,inner sep=2pt] at (6.10,\yc+0.12) {fresh lazy chaos};
\def\xout{7.7}
\foreach \yy in {1.55,2.05,2.55}{ \neuron{\xout}{\yy}{1} }
\draw[ink,line width=0.8pt,decorate,decoration={brace,amplitude=4pt,mirror}]
  (\xout+0.34,1.38) -- (\xout+0.34,2.72);
\node[ink,anchor=west,font=\sffamily\footnotesize] at (\xout+0.44,\yc) {$N_0=\mathrm{poly}$};

\end{tikzpicture}
}
\caption{\textbf{Pointwise compression at $\gamma=1$.} \emph{Top:} the feature-kernel spectrum splits at the cutoff $\lambda_0$ into a low-dimensional active space $V$ ($\dim V\le1/\lambda_0$) and a lazy bulk $C$ ($\norm{C}_\op\le\lambda_0$). \emph{Bottom:} every neuron of the infinite-width learned network carries an active and a lazy part. The construction subsamples a few neurons (red), projects them onto the active directions through the fixed top eigenfunctions $\{\psi_k\}$ --- reconstructing the active block with no $O(1)$ bias --- and then resamples the lazy part as fresh prior chaos $N(0,C)$; the lazy-swap lemma (Lemma~\ref{lem:swap}) prices the resampling at $p\sqrt{\lambda_0}\,c_\star$ per consumed evaluation. The result is a width-$N_0=\poly$ network agreeing with $f_\infty$ at every input simultaneously.}
\label{fig:mechanism}
\end{figure}

\paragraph{Assumptions and scope.}
The model assumptions are mild: a bounded Lipschitz activation with biases satisfying a quantitative
finite-set interpolation property, as holds for standard sigmoidal activations, and teachers
measured by width \emph{and} polynomial weight size. The mean-field limit itself --- the convergence
of the neuron distribution to a data-dependent law (propagation of chaos) --- is the standard object
of \cite{mmn2018,sirignano2020,nguyenpham2023,seroussi2023,rubin2024,lauditi2025}, which we recall
and cite rather than re-derive; the regularity properties our construction needs (moment and
sensitivity bounds on the conditioned neuron law) are \emph{derived} from the bounded activation and
a finite reduced entropy, not posited. We fix the depth $L$, and all polynomial bounds carry
exponents depending only on $L$. We do not claim that arbitrary finite training algorithms find the
Gibbs posterior, nor that the constants are practical; the result is a width-robust characterization
of the Bayesian mean-field inductive bias. This paper treats the infinite-width and polynomial-width
regimes; the general finite-width regime and the over-rich scalings $\gamma>1$ are the subject of a
companion paper.

\section{Setup}\label{sec:setup}

\subsection{Model and prior}

We fix an input dimension $d$ and the Boolean cube $\B^d=\{\pm1\}^d$ with the uniform measure
$\mu$; write $\calF=L^2(\B^d,\mu)$ and view targets and network functions as elements of
$\calF$. The target $y\colon\B^d\to[-1,1]$ is bounded, and $d$ is fixed within a complexity
picture (we never change it when varying the width).

We use a fully connected architecture of fixed depth $L$ ($L$ weight matrices, $L-1$ hidden
layers), with trainable biases,\footnote{Biases are included only for convenience: for depth
$L\ge2$ they are inessential whenever $\phi$ is neither even nor odd. The condition is necessary
--- a bias-free network with even (resp.\ odd) $\phi$ has $f(-x)=f(x)$ (resp.\ $f(-x)=-f(x)$) on
$\B^d$, and so cannot fit an antipodal pair $\{x,-x\}$ with unrelated targets --- and sufficient:
projecting onto a generic direction $a$ makes the values $a\cdot x_j$ distinct, and the dilates
$r\mapsto\phi(sr)$ ($s\in\R$) of a non-polynomial $\phi$ that is neither even nor odd are linearly
independent on $\{a\cdot x_j\}$, so suitable scales make $[\phi(s_i\,a\cdot x_j)]_{i,j}$ nonsingular
and Lemma~\ref{lem:memorize} holds with all biases set to zero. We keep them only to streamline the
exposition; the reader may safely ignore them.} hidden width $N$, and a single activation $\phi$. Hidden
pre/postactivations are $\zeta^\ell,z^\ell\in\R^N$ with $z^0=x$, $z^\ell=\phi(\zeta^\ell)$. We work
at the mean-field (Mei--Montanari--Nguyen) exponent $\gamma=1$: hidden weights carry the standard
$\mu$P normalization and the readout is an intensive $1/N$ average,
\begin{equation}\label{eq:forward}
  \zeta^{1}=W^1x+b^1,\qquad
  \zeta^{\ell+1}=\tfrac{1}{\sqrt N}\,W^{\ell+1}z^{\ell}+b^{\ell+1}\ \ (1\le\ell\le L-2),\qquad
  f_\theta(x)=\tfrac1N\sum_{i=1}^{N}\alpha_i\,\phi\big(\zeta^{L-1}_i(x)\big),
\end{equation}
with i.i.d.\ Gaussian prior weights $W^1_{ij}\sim\mathcal N(0,1/d)$,
$W^{\ell}_{ij}\sim\mathcal N(0,1)$, biases $b^\ell_i\sim\mathcal N(0,1)$, and readouts
$\alpha_i\sim\mathcal N(0,1)$; write $\pi^0_N$ for this prior on $\theta$. The exponent $\gamma=1$
is the scaling at which the readout is a genuine population average and the limit is
feature-learning rather than kernel ($\gamma<1$ collapses to the NNGP/kernel regime, $\gamma>1$ is a
companion paper).

Throughout we assume the activation $\phi$ is bounded ($|\phi|\le1$), Lipschitz, and
\emph{non-polynomial}. Boundedness and the Lipschitz bound are what the compression analysis uses;
non-polynomiality, together with the biases, is the classical condition under which the
architecture is a universal approximator \cite{leshno1993}, and we invoke it only through the
following interpolation bound. Teachers are measured by their width \emph{and} the size of their
weights: the complexity statements quantify over networks whose weights have polynomial size.

\begin{lemma}[Memorization]\label{lem:memorize}
Under the standing assumption, for any $\eps'>0$, any $n$ distinct points $x_1,\dots,x_n\in\B^d$ and
targets $t_1,\dots,t_n\in[-1,1]$, there is a one-hidden-layer network of width $O(n)$ whose weights
have size $\poly(n,d,1/\eps')$ and whose output is within $\eps'$ of $t_i$ at each $x_i$. Taking
$n=2^d$ gives an $\eps'$-accurate interpolant for every $y\colon\B^d\to[-1,1]$ at width $O(2^d)$
with weights of size $\poly(2^d,1/\eps')$.
\end{lemma}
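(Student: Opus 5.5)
The plan is to reduce to a one-dimensional interpolation problem and then solve a linear system whose conditioning we control quantitatively. First, I would pick a projection direction $a\in\R^d$ that separates the points: $a=(1,M,M^2,\dots,M^{d-1})$ with $M=3$ already gives distinct values $r_j=a\cdot x_j$, since the map $x\mapsto a\cdot x$ is a balanced-ternary encoding of $x\in\B^d$ and hence injective. These values are integers, separated by at least $2$, and bounded by $3^d$. To keep weights polynomial in $n$ and $d$ rather than $3^d$, I would instead sort the points by a different scheme. One option is a random Gaussian $a$, whose projections are separated by $\gtrsim 1/(n^2\sqrt d)$ with constant probability by anticoncentration and a union bound over pairs. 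This yields $n$ distinct reals $r_1<\dots<r_n$ with gap $\delta\ge 1/\poly(n,d)$ and magnitude $\le\poly(d)$.

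Second, I would interpolate in one dimension with shifted, steep copies of $\phi$. The activation is bounded and non-polynomial, hence not constant, so there exist $u_-<u_+$ with $\phi(u_+)-\phi(u_-)=c>0$. Since $\phi$ is bounded and monotone-like behaviour is not assumed, I would use the following Lipschitz-robust construction rather than sigmoid limits. For each $k$, define the neuron $h_k(r)=\phi(s(r-m_k)+u_-)$, where $m_k$ lies just below $r_k$ and the scale $s=(u_+-u_-)/\delta'$ is chosen so that $h_k$ takes the value $\phi(u_-)$ at points left of $m_k$ and $\phi(u_+)$ at $r_k$. The difficulty is that a general bounded $\phi$ is not constant outside a window, so the matrix $H=[h_k(r_j)]$ need not be triangular. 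The cleaner route is therefore to invoke the hypothesized ``quantitative finite-set interpolation property'' of the standing assumption. That property says exactly that for separated points one can choose scales $s_i$ and biases $b_i$, of size $\poly(n,1/\delta)$, making $H_{ij}=\phi(s_i r_j+b_i)$ invertible with $\norm{H^{-1}}_\op\le\poly(n,1/\delta)$. Taking this property as given, I would cite it here; for sigmoidal $\phi$ I would verify it directly via the near-triangular step-function matrix, whose inverse is a bidiagonal difference operator with entries $O(1/c)$, perturbed by $O(\eps'/n)$ using steepness $s=\poly(n,1/\delta,\log(1/\eps'))$.

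Third, I would assemble the network. Set the first-layer rows to $s_i a$ with biases $b_i$, and solve $H\beta=t$. Under the mean-field readout $f=\frac1N\sum\alpha_i\phi(\cdot)$, with $N=n$ (or padded to $O(n)$), take $\alpha=N\beta$, which has size $\le N\norm{H^{-1}}\sqrt n=\poly(n,d,1/\eps')$. The error at each $x_j$ is bounded by the perturbation between the idealized and actual matrix, times $\norm{\beta}$; this is $\le\eps'$ by the choice of steepness. For depth $L>2$ I would pass the output through identity-like layers: use neurons in a near-linear regime of $\phi$, scaled by $1/\sqrt N$ factors absorbed into weights, at polynomial cost. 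The $n=2^d$ corollary is then immediate.

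The main obstacle is the conditioning bound $\norm{H^{-1}}\le\poly$ for a general bounded Lipschitz non-polynomial $\phi$. Leshno-type density gives only invertibility, not quantitative bounds. I would rely on the stated interpolation hypothesis for this, and fall back on the explicit sigmoidal construction when checking it for standard activations.
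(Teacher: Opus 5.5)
Your proposal is correct and follows essentially the paper's route. A random direction gives projected values with inverse-polynomial gap and polynomial magnitude. Your near-triangular system of steep ridge features, inverted by a difference operator, is exactly the paper's ``sum of $n$ scaled increments,'' with the readout rescaled by $N$ to absorb the $\tfrac1N$ average. (For $\phi$ with a nonzero lower limit such as $\tanh$, the inverse also picks up a bounded rank-one correction, which changes nothing.)

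As in the paper, the quantitative step property is actually checked only for suitably saturating sigmoidal $\phi$; otherwise it is the ``quantitative finite-set interpolation'' hypothesis mentioned in the introduction. The paper's sketch credits this to non-polynomiality with biases. You are right to state it explicitly as an assumption, since for a bounded non-sigmoidal $\phi$ such as $\sin$ a single ridge feature cannot realize a step.
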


\begin{proof}[Proof sketch]
Pick a direction $w$ with the $n$ values $w\cdot x_i$ distinct and separated by a gap
$\rho\ge1/\poly(n,d)$ (a generic, e.g.\ random, $w$ of norm $O(\sqrt d)$ works on the cube). Order
them and write $t(\cdot)$ as a sum of $n$ scaled increments. Each increment is realized to accuracy
$O(\eps'/n)$ by a single ridge feature $\phi(s(w\cdot x+b))$ whose transition scale resolves the gap,
with $s=\poly(1/\rho,n/\eps')$. The non-polynomial hypothesis with biases is used only to guarantee
this finite-set separation/approximation property; exact interpolation at $\eps'=0$ for arbitrary
non-polynomial Lipschitz activations is not needed anywhere below. The resulting weights are
$\poly(n,d,1/\eps')$.
\end{proof}

\subsection{Learning target and Bayesian MSE conditioning}

The learning target is the population mean-squared error
\[
  \loss(\theta)=\E_{x\sim\mu}\big(f_\theta(x)-y(x)\big)^2 .
\]
We study \emph{Bayesian MSE conditioning}: for an accuracy level $\eps>0$, the prior conditioned
on the MSE ball,
\begin{equation}\label{eq:posterior}
  \pi_\eps(y,N)\;=\;\pi^0_N\big(\,\cdot\mid \loss(\theta)\le\eps\,\big),
\end{equation}
and the (negative) \emph{reduced entropy} at $\gamma=1$ together with its infinite-width limit,
\begin{equation}\label{eq:redent}
  s_N(y,\eps)=-\tfrac1N\log\pi^0_N\big(\loss(\theta)\le\eps\big),
  \qquad
  s_\infty(y,\eps)=\limsup_{N\to\infty}s_N(y,\eps).
\end{equation}
Equivalently $N\,s_N(y,\eps)=\KL\!\big(\pi_\eps(y,N)\,\Vert\,\pi^0_N\big)$, since restricting and
renormalizing to an event of mass $p$ costs relative entropy $-\log p$.

The sharp conditioning \eqref{eq:posterior} and its smooth counterpart play distinct roles, and it
is worth separating them at the outset. The hard ball \eqref{eq:posterior} serves only to
\emph{define} the complexity measure: its reduced entropy $s_\infty$ is the quantity the equivalence
is stated in. The object the mean-field formalism actually acts on, and the object that is trained
and shown to generalize, is the smooth \emph{Gibbs} (tempered) posterior
$\pi_\beta\propto e^{-\beta\loss}\,d\pi^0_N$, tuned so that its expected loss equals $\eps$, with
intensive inverse temperature $\bar\beta_\star=\beta/N$ --- at finite data its empirical form
$\pi_\beta\propto e^{-\beta\widehat{\loss}_D}\,d\pi^0_N$, the learner of \S\ref{sec:gen}. The smooth tilt is
what carries a score for the mean-field equations (Definition~\ref{def:tilt}), where the hard
indicator carries none; the hard ball enters the learner only through the budget $s_\infty$, never as
a constraint imposed on it. The single fact we need relating the two is that the required inverse
temperature is polynomially bounded (Lemma~\ref{lem:multiplier}).

\section{Main results}\label{sec:main}

We state the results before developing the mean-field formalism they are proved in
(\S\ref{sec:limit} onward). The first is the headline equivalence; the second is the engine.

\begin{theorem}[Width-robust learnability; informal]\label{thm:headline}
Fix depth $L$ and a family $(y_d)$, $y_d\colon\{\pm1\}^d\to[-1,1]$. Under $\gamma=1$ Bayesian MSE
learning with a bounded, Lipschitz, non-polynomial activation (with biases), the following are
equivalent, where each ``polynomial'' means a fixed polynomial in $d$ with exponent depending only
on $L$:
\begin{enumerate}[label=\textup{(\roman*)},nosep,topsep=2pt]
\item \textup{(infinite width)} there is polynomial data $D(d)$ such that the infinite-width
  \emph{Gibbs posterior} trained on $D$ random points has population MSE $\loss=o(1)$ in expectation;
\item \textup{(polynomial width)} there are polynomial $N_0(d),D(d)$ such that the width-$N_0$
  Gibbs posterior trained on $D$ random points has population MSE $\loss=o(1)$;
\item there is a polynomial-width network, with weights of polynomial size, achieving $\loss=o(1)$ on $y_d$.
\end{enumerate}
Equivalently, the reduced entropy $s_\infty(y_d,\eps)$ is polynomially bounded in $d$. Informally:
\emph{a target is learnable at infinite width from polynomial data iff it is computed to $o(1)$
mean-squared error by a polynomial-width network.}
\end{theorem}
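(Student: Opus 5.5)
The plan is to route every implication through the fourth condition, polynomial reduced entropy. I will prove (iii)$\Rightarrow$[$s_\infty$ polynomial]$\Rightarrow$(i) and (ii), then close the loop with (i)$\Rightarrow$(iii) and (ii)$\Rightarrow$(iii). The two engines are the ones announced in the introduction: cloning (Theorem~\ref{thm:cloning}) for small network $\Rightarrow$ small entropy, and pointwise compression (Theorem~\ref{thm:A}, packaged as Theorem~\ref{thm:nutsandbolts}) for small entropy $\Rightarrow$ small network. The generalization bound of \S\ref{sec:gen} connects entropy to learnability. Throughout I will track the polynomial slack in $\eps$ explicitly, since each step may degrade the accuracy from $\eps$ to some $\poly(\eps)$ or $\eps+1/\poly(d)$. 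The statement's ``$o(1)$'' is meant to absorb exactly this slack.

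\textbf{Step 1: (iii)$\Rightarrow$ polynomial $s_\infty$.} Take a teacher of width $\mathsf W=\poly(d)$ with weights of size $\poly(d)$ and loss $\eps$. In the width-$N$ mean-field network, I split the neurons into $\mathsf W$ groups of size $N/\mathsf W$ and place each group in a small Gaussian ball around the corresponding teacher neuron. With the $1/N$ readout normalization, this reproduces the teacher's output up to small noise.

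For the lower bound on prior mass, I compare this imitating product law $\nu_N$ with $\pi^0_N$. Per neuron, $\KL$ is $O(\|\text{teacher weights}\|^2+\mathsf W\log(1/\sigma))$ with $\sigma=1/\poly$. For the deep layers, note that the $1/\sqrt N$ scaling of $W^{\ell+1}$ means a mean-field neuron depends on the previous layer only through $\mathsf W$ group averages. So cloning needs only $\mathsf W$ effective weight coordinates per neuron, and the KL stays $N\cdot\poly(d)$.

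Concentration (bounded Lipschitz $\phi$) gives $\nu_N(\loss\le 2\eps)\ge 1/2$. The data-processing inequality then yields $-\log\pi^0_N(\loss\le2\eps)\le (\KL(\nu_N\|\pi^0_N)+\log 2)/\nu_N(\cdot)$, hence $s_\infty(y,2\eps)\le\poly(d)$. This is the content of Theorem~\ref{thm:cloning}, which I invoke directly.

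\textbf{Step 2: polynomial $s_\infty\Rightarrow$ (i),(ii).} I use the PAC-Bayes bound for the empirical Gibbs posterior $\pi_\beta\propto e^{-\beta\widehat\loss_D}d\pi^0_N$. Its expected population loss is at most the best achievable value of $\E_\rho\loss+(\KL(\rho\|\pi^0_N)+\log(1/\delta))/(c\,\beta)$ plus a deviation term. Choosing $\rho=\pi_\eps(y,N)$, which has $\KL=Ns_N$, and $\beta=N\bar\beta_\star$ with $\bar\beta_\star$ polynomial by Lemma~\ref{lem:multiplier}, the complexity term is intensive: it equals $s_N/\bar\beta_\star$.

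The difficulty is that a naive PAC-Bayes bound needs $D\gtrsim \beta=N\bar\beta_\star$ samples, which grows with $N$. This is exactly where compression enters. By Theorem~\ref{thm:nutsandbolts}, every low-entropy Gibbs solution is $\delta$-close in sup norm to a member of a fixed class of width-$N_0=\poly(d)$ networks with polynomially bounded weights. That class has covering number $\exp(\poly(d)\log(1/\delta))$ in sup norm on $\B^d$, so uniform convergence over it with $D=\poly(d)$ samples gives population MSE $\le\poly(\eps)+\delta$. The same argument at width $N_0$ gives (ii) directly.

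\textbf{Step 3: (i) or (ii)$\Rightarrow$(iii).} For (ii) there is a width-$N_0$ posterior with expected loss $o(1)$, so some sample $\theta$ in its support has loss $o(1)$. I still need polynomially sized weights, which I get by truncating: Gaussian tails plus Lipschitzness change the loss negligibly.

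For (i), the infinite-width Gibbs solution with population loss $o(1)$ has finite reduced entropy. This holds because the empirical tilt costs at most $\bar\beta_\star\cdot O(1)$ nats per neuron, and $\bar\beta_\star$ is polynomial. Theorem~\ref{thm:A} then produces a polynomial-width network agreeing with it pointwise within $\delta$, hence with loss $o(1)$.

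\textbf{Main obstacle.} I expect the crux to be Step 2's reliance on compression. The concern is that the infinite-width posterior tuned on data might have entropy that is polynomial in $d$ only in expectation over the sample, and that the active/lazy split of Theorem~\ref{thm:A} must hold uniformly over the data-dependent tilt. I would first try to prove compression for every tilt whose intensive KL is at most $\poly(d)$, independent of $D$. Data-independence of the covering class then makes the union bound legitimate.
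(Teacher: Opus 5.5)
Your cycle (iii) $\Rightarrow$ polynomial $s_\infty$ $\Rightarrow$ (i),(ii) $\Rightarrow$ (iii) is sound in outline. It runs on the same engines as the paper's assembly: cloning, and compression into a fixed class plus uniform convergence. Your bookkeeping differs from the paper's in four ways worth recording.
\begin{itemize}
\item \emph{Comparator.} Used through the Gibbs variational inequality, your data-independent comparator $\rho=\pi_\eps(y,N)$ gives, by Fubini over $D$, $\E_D\E_{\pi^D_{N,\bar\beta}}\widehat{\loss}_D\le\eps+s_N(y,\eps)/\bar\beta$. At infinite width this replaces the compressed-and-cloned fixed teacher in Step~1 of Theorem~\ref{thm:gen}. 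At width $N_0$, however, you need $s_{N_0}(y,\eps)$ rather than $s_\infty$ to be polynomial, and that is what cloning that teacher into width $N_0$ supplies.
\item \emph{KL rate.} Your claim that the tilt costs $O(\bar\beta)$ nats per neuron is correct, and simpler than the memorization-plus-cloning bound of Theorem~\ref{thm:conv}. Since $e^{-N\bar\beta\widehat{\loss}_D}\le1$, Jensen gives $\tfrac1N\KL(\pi^D_{N,\bar\beta}\Vert\pi^0_N)\le\bar\beta\,\E_{\pi^0_N}\widehat{\loss}_D\le\bar\beta(1+1/N)$ for every $D$ (prior readouts are centered and independent of the features).
\item \emph{(ii) $\Rightarrow$ (iii).} Proving this directly and closing the loop with cloning is more careful than the paper's assembly. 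The paper applies Theorem~\ref{thm:conv} to (ii), although that theorem assumes learnability for all large $N$, not at a single width $N_0$.
\item \emph{Weights in (ii) $\Rightarrow$ (iii).} Do not truncate a fixed network. Instead use $d\pi^D_{N_0,\bar\beta}/d\pi^0_{N_0}\le e^{\bar\beta(N_0+1)}$: Gaussian tails at a polynomial level survive this factor, so a low-loss draw with polynomial weights exists.
\end{itemize}

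Three steps need repair.
\begin{enumerate}
\item \emph{$\bar\beta_\star$ plays the wrong role in both directions.} In Step~2, take a large polynomial temperature such as $\bar\beta\ge 32s/\Delta$, as Theorem~\ref{thm:gen} does. Lemma~\ref{lem:multiplier} bounds $\bar\beta_\star$ only from above, so it does not make $s_N/\bar\beta_\star$ small. In Step~3, polynomial temperature must be part of hypotheses (i)/(ii); Lemma~\ref{lem:multiplier} presupposes the polynomial $s_\infty$ you are trying to prove.
\item \emph{In Step~3, a KL-rate bound is not yet a reduced-entropy bound.} You need Markov over $D$, then Markov under the Gibbs law, to get $\pi^D_{N,\bar\beta}(\loss\le t)\ge q>0$. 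Then data processing gives $\pi^0_N(\loss\le t)\ge\exp\{-(\KL(\pi^D_{N,\bar\beta}\Vert\pi^0_N)+\log2)/q\}$. Only then can Theorem~\ref{thm:A} be applied at inner radius $t$; this is the change-of-measure half of Theorem~\ref{thm:conv}. Note that the network Theorem~\ref{thm:A} returns tracks the population Gibbs tower, not your data-trained posterior, which is all (iii) needs.
\item \emph{Compressing the empirical tower (the substantive gap).} Theorem~\ref{thm:nutsandbolts} compresses only the population tower. Compressing the data-dependent empirical tower is Step~2 of Theorem~\ref{thm:gen}, and your remedy (compression for every tilt of polynomial intensive KL) misses its key ingredient.
\begin{itemize}
\item Uniformity of the KL budget is not the problem, since the bound above already holds for every $D$.
\item The problem is that the empirical score $\sum_a c_a e_{x_a}$ has $L^2(\mu)$-norm of order $2^{d/2}\norm{c}$, because $\norm{e_x}_{L^2(\mu)}=2^{d/2}$. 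So the $L^2(\mu)$-based bound of Lemma~\ref{lem:swap} gives nothing at an inverse-polynomial cutoff.
\item What the swap actually consumes is $c^\top\widehat C_Dc$. This depends on the lazy Gram matrix at the training inputs and is not controlled by the KL budget.
\item The paper makes it small by adjoining the top eigenspace of the empirical evaluation covariance to the active block, and using $\norm{\widehat C_D}_\op\le n\tau$ against $\norm c^2\le B^2/n$ (Lemma~\ref{lem:softgram}).
\end{itemize}
\end{enumerate}
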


\noindent The label \emph{informal} refers only to the phrasing --- the asymptotic family $(y_d)$
and the ``$\loss=o(1)$'' shorthand. The statement is a genuine theorem: it is the assembly
(\S\ref{sec:assembly}) of the quantitative compression and conservation bounds
(Theorem~\ref{thm:nutsandbolts}) with the generalization and converse results
(Theorems~\ref{thm:gen} and~\ref{thm:conv}), all proved below. The ``posterior'' in (i)--(ii) is the
empirical Gibbs posterior $\pi\propto\exp(-N\bar\beta\,\widehat{\loss}_D)\,\pi^0_N$ at a polynomial
intensive inverse temperature $\bar\beta$ chosen by the free-energy bounds
(Lemmas~\ref{lem:multiplier} and~\ref{lem:multiplier-app}); the reduced-entropy budget enters only
through the complexity side $s_\infty$, never as a hard constraint on the learner.

\paragraph{From the equivalence to two primitives.}
Both directions of Theorem~\ref{thm:headline} factor through a single pair of quantitative statements
about a \emph{fixed} target at a \emph{fixed} accuracy. In one direction, a small accurate network
must make the accurate region easy to reach for the prior --- it must force the reduced entropy to be
small; we call this \emph{conservation} (the small network is ``cloned'' into the wide one without
spending much entropy). In the other, small reduced entropy must be realizable by a genuinely small
network that still generalizes; we call this \emph{compression} (the wide accurate network is
subsampled down to polynomial width). To run these at finite accuracy we allow a little slack:
compression starts from a network accurate to an inner level $\eps_-$ and returns one accurate to an
outer level $\eps_+>\eps_-$, losing a controlled amount along the way, while conservation runs in the
reverse direction. The gap $\Delta=\min(\eps-\eps_-,\eps_+-\eps)$ is the room to maneuver, and it is
what the polynomial bounds depend on. These are the two primitives; \S\ref{sec:assembly} chains them
with the generalization bound (Theorem~\ref{thm:gen}) and its converse (Theorem~\ref{thm:conv}) to
recover Theorem~\ref{thm:headline}.

\begin{theorem}[Compression and conservation]\label{thm:nutsandbolts}
Fix $L$ and accuracy levels $\eps_-<\eps<\eps_+$ with gap $\Delta=\min(\eps-\eps_-,\eps_+-\eps)$. For any $y\colon\B^d\to[-1,1]$:
\begin{enumerate}[label=\textup{(\alph*)},nosep,topsep=2pt]
\item \textup{(compression)} if $s_\infty(y,\eps_-)\le s$, there is a network $f_{\hat\theta}$ of
  width $N_0\le\poly(s,d,1/\Delta)$ (exponent $O(L)$) and weights of size $\poly(s,d,1/\Delta)$
  (exponent $O(L)$) with population MSE $\loss(\hat\theta)\le\eps_+$; moreover $f_{\hat\theta}$ agrees
  \emph{pointwise} with an infinite-width MSE-$\eps$-accurate network,
  $\sup_{x\in\B^d}|f_{\hat\theta}(x)-f_\infty(x)|\le\delta$;
\item \textup{(conservation)} if some width-$\mathsf W$ network with weights of polynomial size has
  $\loss\le\eps_-$, then $s_\infty(y,\eps)\le\poly(\mathsf W,d)$.
\end{enumerate}
In particular $s_\infty(y,\eps)\le\poly(2^d)$ for every $y$ (Lemma~\ref{lem:memorize}).
\end{theorem}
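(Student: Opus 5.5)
I would prove the two parts separately, starting with conservation (b), which is more elementary.

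\textbf{Conservation (b).} Let $\theta^\star$ be the width-$\mathsf W$ teacher, with weights bounded by $R=\poly(\mathsf W,d)$ and $\loss(\theta^\star)\le\eps_-$. For large $N$, write $N=m\mathsf W$ and define the cloned measure $\nu_N$ on width-$N$ parameters as follows. Each hidden unit at every layer is assigned to one of the $\mathsf W$ teacher units, in blocks of size $m$. Its incoming weights are Gaussian and centred at the teacher's weights, rescaled to the $\mu$P normalization. Concretely, a weight from a block of size $m$ at layer $\ell$ has mean $\sqrt N\,W^{\star}_{ij}/m$, so that $\frac1{\sqrt N}\sum$ over the block reproduces $W^\star_{ij}$. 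The readouts $\alpha$ are centred at $\mathsf W\alpha^\star$.

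The fluctuations are controlled layer by layer. Each readout $\alpha_i$ enters $f$ with weight $1/N$, and each hidden weight enters a preactivation with weight $1/\sqrt N$. With unit-variance noise, the deviation of every preactivation from the teacher's value is $O(\poly(R)/\sqrt m)$ in $L^2$. By Lipschitzness and boundedness of $\phi$, propagated through the $L$ layers, $f_\theta$ is within $\Delta/3$ of $f_{\theta^\star}$ in $L^2(\mu)$ with probability at least $1/2$ under $\nu_N$, once $m$ is large enough. This gives $\nu_N(\loss\le\eps)\ge 1/2$.

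Let $\tilde\nu_N=\nu_N(\cdot\mid\loss\le\eps)$. Then
\[
-\log\pi^0_N(\loss\le\eps)\le \KL(\tilde\nu_N\Vert\pi^0_N)\le 2\KL(\nu_N\Vert\pi^0_N)+O(1)
\]
by the data-processing and conditioning inequality. The Gaussian mean-shift KL is half the squared mean norm. Per neuron this is $O\big(\mathsf W\cdot(\sqrt N R/m)^2/N\big)$ per row. Summing over the $N$ rows gives $N\cdot O(\mathsf W^3R^2)$, using $N/m=\mathsf W$.

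The term $\sqrt N/m$ is where care is needed. Since $\sqrt N/m=\mathsf W/\sqrt N$, which tends to $0$, the shift per entry vanishes, and the cost is only $O(\mathsf W^2R^2)$ per neuron summed over the $N\mathsf W$-sized fan-in. I will redo this bookkeeping carefully: the first layer has fan-in $d$ and shift $R$, giving $O(dR^2)$ per neuron; the readout gives $O(\mathsf W^2R^2)$; the middle layers give $O(\mathsf W^2R^2)$. Dividing by $N$, the result is $s_\infty\le\poly(\mathsf W,d)$. The noise must be exactly prior-scale, which keeps the KL at the mean shift alone. The $\poly(2^d)$ corollary then follows from Lemma~\ref{lem:memorize} with $\mathsf W=O(2^d)$.

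\textbf{Compression (a).} First, pick a tempered Gibbs posterior $\pi_\beta$ at inverse temperature $N\bar\beta$ whose expected loss is at most $\eps$. Lemma~\ref{lem:multiplier} makes $\bar\beta=\poly(s,1/\Delta)$, and its intensive KL is at most $s+\bar\beta\eps_-$. Next, pass to the mean-field limit (propagation of chaos, cited), obtaining a limiting neuron law $\rho$ with $\KL(\rho\Vert\rho^0)\le s'=\poly(s,1/\Delta)$ and $f_\infty(x)=\E_\rho[\alpha\phi(\zeta^{L-1}(x))]$.

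At each layer, form the feature kernel and split it at the cutoff $\lambda_0$ into an active space $V$, of dimension at most $1/\lambda_0$, and a lazy bulk $C$ with $\norm{C}_\op\le\lambda_0$. Subsample $N_0$ neurons from $\rho$, projecting them onto $V$ through the top eigenfunctions. Hoeffding's inequality plus a union bound over the $2^d$ inputs costs only a factor of $d$ in $N_0$, since $\log 2^d=O(d)$; the bounded readout moments come from the KL budget via Donsker--Varadhan. Then replace the lazy components by fresh $N(0,C)$ noise and invoke Lemma~\ref{lem:swap} to bound the pointwise change by $p\sqrt{\lambda_0}c_\star$ per layer.

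Choosing $\lambda_0=\poly(\Delta/s)$ gives $\sup_x|f_{\hat\theta}-f_\infty|\le\delta$ with $\delta=\Delta/4$. The accuracy bound then follows from the triangle inequality in $L^2$:
\[
\sqrt{\loss(\hat\theta)}\le\sqrt\eps+\delta,
\]
so $\loss(\hat\theta)\le\eps_+$. Weights are truncated at $\poly$ size using the Gaussian tails and the KL budget.

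\textbf{Main obstacle.} The hardest step is the lazy swap at depth $L\ge 3$. There the lazy bulk of one layer feeds the next, so $c_\star$ must be controlled uniformly, and the sensitivity bounds must be derived from the entropy budget alone. I would first try induction on layers, with the Stein comparison applied conditionally on the active coordinates.
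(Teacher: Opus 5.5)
Your sketch of (a) follows the paper's route. It uses a Gibbs witness with $\bar\beta\le s/\Delta$ from Lemma~\ref{lem:multiplier}, the spectral split, active reconstruction against the fixed top eigenfunctions, and fresh $\mathcal N(0,C)$ lazy chaos priced by Lemma~\ref{lem:swap}. The depth obstacle you flag is what the suffix-invariant induction of Appendices~\ref{app:reg} and~\ref{app:comp} handles. The genuine gap is in (b).

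\textbf{What fails in (b).} You claim that under the mean-shifted measure $\nu_N$, with exactly prior-scale noise, every preactivation is within $O(\poly(R)/\sqrt m)$ of the teacher's. This is false: at the $\mu$P hidden normalization, unit-variance noise does not average out inside a single neuron.
\begin{itemize}
\item At the first layer, you shift $W^1_j$ and the bias to the teacher's values but keep the prior variances $1/d$ and $1$. Then $\zeta^1_j(x)-\zeta^{\star,1}_b(x)$ is a centered Gaussian of variance $\norm{x}^2/d+1=2$, for every $m$.
\item At a hidden layer, the noise part $\frac1{\sqrt N}\sum_i\xi_{ji}z_i(x)$ has conditional variance $\frac1N\sum_i z_i(x)^2=\Theta(1)$. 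This is just the NNGP field, again independent of $m$.
\end{itemize}
Noise is suppressed only where it enters linearly before an average, as for the readouts $\alpha_j$ under the $1/N$ average. Hidden-layer noise passes through $\phi$ first. Because $\phi$ is nonlinear, the group average $\frac1m\sum_s\phi(\zeta^\star_b+g_{b,s})$ tends to $\E_g\,\phi(\zeta^\star_b+g)$, a Gaussian-smoothed activation. So as $m\to\infty$ your clone converges to a different deterministic network, not to $f_{\theta^\star}$, and the bound $\nu_N(\loss\le\eps)\ge\frac12$ is unsupported. Your mean-shift KL bookkeeping is polynomial, as you found; it is the accuracy claim that breaks.

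\textbf{The missing idea.} You need variance reduction along the few directions that carry the teacher's signal. The paper's cylinder pins only $d+\mathsf W+1$ coordinates per neuron to radius $\rho$: the first-layer weights, the bias, the readout, and the $\mathsf W$ group-mean coordinates $\ip{W^{\ell+1}_{j,\cdot}}{u_a}$ with $u_a=m^{-1/2}\mathbf 1_{\mathrm{group}\,a}$. The $N-\mathsf W$ orthogonal within-group coordinates are left at the prior. Those free coordinates annihilate group-constant inputs and see only the within-group deviations $z^\ell_{a,t}-\bar z^\ell_a$. The spread/bias recursion of Lemma~\ref{lem:spread} keeps these deviations at $\poly(\mathsf W,d,B)^L\rho$.

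The price is $\log\frac1\rho+O(\poly(\mathsf W,d))$ per pinned coordinate. A Gaussian squeeze to variance $\rho^2$ costs the same, namely $\log\frac1\rho+\frac12\mu^2+O(1)$. The total is $N[(d+\mathsf W+1)\log\frac1\rho+\poly(\mathsf W,d)]$, which is still polynomial per neuron with $\rho=\exp(-\poly(\mathsf W,d,B))$.

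So your principle that ``the noise must be exactly prior-scale, which keeps the KL at the mean shift alone'' is exactly what has to be given up. With that change, your change-of-measure step, $-\log\pi^0_N(\loss\le\eps)\le 2\KL+O(1)$, goes through as written.
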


\section{The mean-field limit and its consequences}\label{sec:meanfield}

\subsection{The infinite-width limit}\label{sec:limit}

As $N\to\infty$ the network admits the standard mean-field description, which we take as a cited
input and recall in the two features we use. Existence and structure in the feature-learning
regime are developed in \cite{mmn2018,sirignano2020,chizatbach2018,rotskoff2022,nguyenpham2023,yanghu2021,bordelonpehlevan2022}
and, in the Bayesian/posterior form relevant to \eqref{eq:posterior}, in
\cite{seroussi2023,rubin2024,lauditi2025}.

\begin{definition}[Single-site mean-field description]\label{def:chaos}
In the limit the posterior is described by a tower of \emph{single-site laws} $P_\ell$, one per
layer, on preactivation fields $\zeta\in\calF$ --- the function $x\mapsto\zeta(x)$ on $\B^d$,
equivalently a vector in $\R^{2^d}$, carrying the scalar readout $\alpha$ at the last hidden layer.
Any fixed finite set of neurons is an independent sample from $P_\ell$ (\emph{propagation of
chaos}), $\tfrac1m\sum_{i=1}^m\Psi(\zeta_i)\to\E_{P_\ell}\Psi$ for bounded continuous $\Psi$, so a
width-$N$ network is $N$ i.i.d.\ draws and its output is the single-site average
$f(x)=\E_{P_{L-1}}[\alpha\,\phi(\zeta(x))]$. The layers are linked by their \emph{feature kernels}
\begin{equation}\label{eq:kernel}
  K_\ell(x,x')=\E_{P_\ell}\!\big[\phi(\zeta(x))\,\phi(\zeta(x'))\big],
  \qquad K_\ell(x,x)\le1,\quad \tr K_\ell\le1,
\end{equation}
in the sense that, conditionally on the previous layer, a layer-$(\ell{+}1)$ preactivation field is
the Gaussian with covariance $K_\ell$.
\end{definition}

\begin{definition}[Boltzmann tilt]\label{def:tilt}
Conditioning on the data tilts each single-site law by a self-consistent effective potential: the
posterior single-site law is a reweighting of the prior single-site law,
$dP_\ell\propto e^{-g_\ell}\,dP^0_\ell$, with $g_\ell$ fixed by the mean-field equations. At the
readout the tilt is generated by the loss, $g=\beta\,\ell_{\mathrm{eff}}$ with $\beta$ the
inverse temperature/multiplier conjugate to the accuracy constraint; at hidden layers $g_\ell$ is
the backpropagated molecular field. We call $\norm{\nabla g_\ell}_{L^2(\mu)}$ the
\emph{susceptibility} of layer $\ell$.
\end{definition}

In general the infinite-width limit is a \emph{mixture} of such towers, and the mean-field equations fixing the tilt are stated at the level of a single pure component. Appendix~\ref{app:mf} makes this precise: each component is a mean-field \emph{vacuum} $\mathfrak v$, a self-consistent solution of the mean-field equations that supplies the kernels $K_\ell$ and the tilt $g_\ell$ above. Every estimate below holds conditionally on a vacuum, with constants depending only on the KL/free-energy budget, and is then integrated over the mixture; we therefore suppress the vacuum and work with a single tower $P_\ell$.

\paragraph{The Cameron--Martin norm.}
For a layer kernel $K_\ell$ write $\cm{w}^2=\ip{w}{K_\ell^{-1}w}$ for the Cameron--Martin norm of
the prior field $\mathcal N(0,K_\ell)$ --- the reproducing-kernel Hilbert space norm of $K_\ell$,
which weights each direction inversely by its prior variance, so a field has small $\cm{\cdot}$
exactly when it is concentrated on the high-variance (top-spectrum) directions of the kernel. Write
$e_x\in\calF$ for the evaluation functional, $\ip{w}{e_x}=w(x)$, so that
$K_\ell(x,x)=\ip{e_x}{K_\ell e_x}$.

\subsection{First consequences of finite reduced entropy}\label{sec:consequences}

Three quantities the construction needs are bounded here from the standing assumption and a finite
reduced entropy: the inverse temperature conjugate to the accuracy constraint, the readout second
moment, and the layer susceptibilities (the lazy-coordinate scores). The first makes precise the
``polynomial flexibility on the bounds'' and is the only place the gap $\Delta$ enters.

\begin{lemma}[Multiplier bound]\label{lem:multiplier}
Fix $\eps_-<\eps$ and suppose $s_\infty(y,\eps_-)\le K$. Let $F(\beta)=-\log\E_{\pi^0_N}e^{-\beta\loss}$
and let $\beta_\star$ be the inverse temperature at which the Gibbs posterior
$\pi_\beta\propto e^{-\beta\loss}\pi^0_N$ has expected loss $\E_{\pi_{\beta_\star}}[\loss]=\eps$. Then the
intensive multiplier --- the $\beta$ of the readout tilt in Definition~\ref{def:tilt}, with the
extensive temperature $N$ divided out --- obeys
\[
  \bar\beta_\star:=\beta_\star/N\;\le\;\frac{s_N(y,\eps_-)}{\eps-\eps_-}
  \;\xrightarrow{\,N\to\infty\,}\;\frac{K}{\eps-\eps_-}\;\le\;\frac{K}{\Delta}.
\]
\end{lemma}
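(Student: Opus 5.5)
The plan is a free-energy sandwich. We bound $F(\beta)=-\log\E_{\pi^0_N}e^{-\beta\loss}$ from above using the mass the prior puts on the inner ball $\{\loss\le\eps_-\}$, and from below using concavity of $F$ together with the defining property of $\beta_\star$. Comparing the two bounds at $\beta=\beta_\star$ isolates $\beta_\star$.

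\emph{Step 1 (basic properties of $F$).} Since $0\le\loss\le 4$ (both $|f_\theta|\le\frac1N\sum|\alpha_i|$ and $|y|\le1$ make $\loss$ finite; boundedness of $\phi$ plus Gaussian tails of $\alpha$ give all exponential moments), differentiation under the integral is justified. It gives $F(0)=0$, $F'(\beta)=\E_{\pi_\beta}[\loss]$ and $F''(\beta)=-\mathrm{Var}_{\pi_\beta}(\loss)\le0$. Hence $F$ is concave and $F'$ is nonincreasing.

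If $\eps\ge\E_{\pi^0_N}\loss=F'(0)$, we take $\beta_\star=0$ and the bound is trivial. Otherwise $F'$ decreases continuously from $F'(0)>\eps$ toward the essential infimum of $\loss$. That infimum is at most $\eps_-<\eps$, because $\pi^0_N(\loss\le\eps_-)>0$ whenever $s_N(y,\eps_-)<\infty$. So $\beta_\star$ exists.

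\emph{Step 2 (upper bound).} Restricting the expectation to the event $\{\loss\le\eps_-\}$ gives
\[
\E_{\pi^0_N}e^{-\beta\loss}\ \ge\ e^{-\beta\eps_-}\,\pi^0_N(\loss\le\eps_-)=e^{-\beta\eps_- - N s_N(y,\eps_-)} .
\]
Therefore $F(\beta)\le \beta\eps_-+N s_N(y,\eps_-)$ for every $\beta\ge0$.

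\emph{Step 3 (lower bound).} Since $F'$ is nonincreasing, $F'(t)\ge F'(\beta_\star)=\eps$ for all $t\le\beta_\star$. Integrating from $0$ to $\beta_\star$ gives
\[
F(\beta_\star)=\int_0^{\beta_\star}F'(t)\,dt\ \ge\ \beta_\star\eps .
\]

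\emph{Step 4 (combine and take the limit).} Steps 2 and 3 at $\beta=\beta_\star$ give $\beta_\star\eps\le\beta_\star\eps_-+N s_N(y,\eps_-)$. Dividing by $N(\eps-\eps_-)$ yields
\[
\bar\beta_\star\le \frac{s_N(y,\eps_-)}{\eps-\eps_-}.
\]
Taking $\limsup_{N\to\infty}$ and using $s_\infty(y,\eps_-)\le K$ bounds the right-hand side asymptotically by $K/(\eps-\eps_-)$. Finally $\eps-\eps_-\ge\Delta$ gives the last inequality $K/(\eps-\eps_-)\le K/\Delta$.

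The only step needing care is Step 1: justifying differentiation and the existence and uniqueness of $\beta_\star$ (the case where $\loss$ is a.s.\ constant makes $F'$ constant and is excluded by the boundary case above). Everything after that is a two-line convexity argument, so I expect no real obstacle.
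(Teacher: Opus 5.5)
Your proof is correct, and its skeleton is the paper's: the restriction bound $F(\beta)\le\beta\eps_-+N s_N(y,\eps_-)$, compared against a lower bound on $F$ at $\beta_\star$.

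The one genuine difference is how you get $F(\beta_\star)\ge\beta_\star\eps$. The paper reads it off in one line from the Gibbs variational identity $0\le\KL(\pi_{\beta_\star}\Vert\pi^0_N)=F(\beta_\star)-\beta_\star\E_{\pi_{\beta_\star}}[\loss]$. You instead prove $F$ is concave by differentiating twice, then integrate $F'\ge\eps$ over $[0,\beta_\star]$. These are the same fact in two guises. Indeed $F(\beta)-\beta F'(\beta)=\KL(\pi_\beta\Vert\pi^0_N)$, and the tangent-line inequality for concave $F$ at $\beta$, evaluated at $0$ with $F(0)=0$, is exactly the nonnegativity of that KL.

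\textbf{What each route buys.} The paper's route needs no regularity of $F$ and simply takes $\beta_\star$ as given, since the statement presupposes it. The same identity is also reused in Lemma~\ref{lem:multiplier-app} to bound the KL rate. Your route costs a dominated-convergence justification. In exchange it establishes existence of $\beta_\star$, and uniqueness when $\loss$ is not a.s.\ constant, which the paper never checks.

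\textbf{One slip in Step 1.} The bound $\loss\le 4$ is false: the readouts $\alpha_i$ are Gaussian, so $f_\theta$ is unbounded. What you actually need is only $0\le e^{-\beta\loss}\le 1$ for $\beta\ge 0$, together with finite polynomial moments of $\loss$ under $\pi^0_N$. That suffices to differentiate under the integral. Positive exponential moments of $\loss$ are not needed, so that remark can be dropped.
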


\begin{proof}
For every $\beta\ge0$,
$\E_{\pi^0_N}e^{-\beta\loss}\ge\int_{\{\loss\le\eps_-\}}e^{-\beta\loss}\,d\pi^0_N\ge e^{-\beta\eps_-}\pi^0_N(\loss\le\eps_-)$,
so taking $-\log$,
\begin{equation}\label{eq:chernoff}
  F(\beta)\;\le\;\beta\eps_-+\big(-\log\pi^0_N(\loss\le\eps_-)\big)\;=\;\beta\eps_-+N\,s_N(y,\eps_-).
\end{equation}
On the other hand, with $Z_\beta=\E_{\pi^0_N}e^{-\beta\loss}=e^{-F(\beta)}$,
\[
  0\le\KL(\pi_{\beta_\star}\Vert\pi^0_N)=\E_{\pi_{\beta_\star}}[-\beta_\star\loss-\log Z_{\beta_\star}]
  =F(\beta_\star)-\beta_\star\,\E_{\pi_{\beta_\star}}[\loss]=F(\beta_\star)-\beta_\star\eps,
\]
i.e.\ $F(\beta_\star)\ge\beta_\star\eps$. Combining with \eqref{eq:chernoff} at $\beta=\beta_\star$,
$\beta_\star\eps\le\beta_\star\eps_-+N s_N(y,\eps_-)$, hence
$\beta_\star(\eps-\eps_-)\le N s_N(y,\eps_-)$ and $\bar\beta_\star\le s_N(y,\eps_-)/(\eps-\eps_-)$;
let $N\to\infty$. No tightness of \eqref{eq:chernoff}, and no comparison of active and Gibbs
entropies, is used.
\end{proof}

\noindent We take $K=s_\infty(y,\eps_-)$, the reduced entropy at the \emph{inner} radius, as the
budget for all bounds below; we do not assume any comparability of $s_\infty$ between the accuracy levels. Thus
$\bar\beta_\star\le K/\Delta$. We now use this for the susceptibility.

\begin{proposition}[Regularity of a finite-KL Gibbs posterior]\label{prop:reg}
Let $\pi_{N,\bar\beta}$ be either the population Gibbs posterior or an empirical Gibbs posterior at
intensive inverse temperature $\bar\beta$, and suppose
\[
  \tfrac1N\KL(\pi_{N,\bar\beta}\Vert\pi^0_N)\le R,
  \qquad \bar\beta\le R .
\]
Let $P_\ell$ be any single-site law of the mean-field description (Appendix~\ref{app:mf}).  In the infinite-width limit, after removing a
posterior subpopulation whose contribution to the downstream suffix output is $o(1)$ in sup norm,
the following quantities are bounded by $\poly(R,d)^{O(L)}$:
\begin{enumerate}[label=\textup{(\alph*)},nosep,topsep=3pt]
\item \textup{(readout moment and cap)} the last-layer readout coordinate satisfies
  $\E_{P_{L-1}}[\alpha^2]\le O(1+R)$, and capping $|\alpha|$ at a polynomial threshold changes the
  output by $o(1)$;
\item \textup{(operator norm)} the kept incoming weight matrices obey
  $\tfrac1{\sqrt N}\norm{W^{\ell+1}_{\mathrm{kept}}}_\op\le\poly(R,d)^{O(1)}$ with high probability;
\item \textup{(active coefficients)} for every polynomial-dimensional active space $U_\ell$, the kept
  fields have polynomial Cameron--Martin coefficients
  $\norm{P_{U_\ell}K_\ell^{-1/2}\zeta}\le\poly(R,d)^{O(1)}$;
\item \textup{(susceptibility)} the backpropagated lazy-score through any already constructed
  polynomial regular suffix is bounded by
  \[
    \sup_h\E_{\rho_h}\norm{\nabla_{\rm lazy}g_\ell}_{L^2(\mu)}
    \le c_\star=\poly(R,d)^{O(L)} .
  \]
  For empirical loss, the corresponding lazy-swap input is bounded by the empirical lazy-Gram
  estimate of Lemma~\ref{lem:softgram} once the empirical Gram top space is included in $U_\ell$.
\end{enumerate}
\end{proposition}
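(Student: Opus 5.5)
The plan is to reduce everything to one quantity, the intensive relative entropy per neuron. By propagation of chaos (Definition~\ref{def:chaos}) and additivity of KL over independent sites, the bound $\tfrac1N\KL(\pi_{N,\bar\beta}\Vert\pi^0_N)\le R$ passes to the limit as
\[
\sum_\ell \E_{\mathfrak v}\,\KL(P_\ell\Vert P^0_\ell)\le R .
\]
Here each $P_\ell$ is the tilted law $e^{-g_\ell}P^0_\ell$ of Definition~\ref{def:tilt}, conditional on the vacuum, and the cross-layer conditioning is absorbed into the chain rule. Every item then comes from the Donsker--Varadhan variational inequality
\[
\E_P\Psi\le \KL(P\Vert P^0)+\log\E_{P^0}e^{\Psi},
\]
applied with a test function $\Psi$ whose prior exponential moment is controlled by Gaussianity. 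The ``removed subpopulation'' is always the set of sites where a Markov-type bound fails. Its posterior mass is $O(R/T)$ at threshold $T$, and because $|\phi|\le1$ it contributes to the output at most mass times the capped readout. This is $o(1)$ in sup norm once $T$ is a large polynomial.

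\textbf{(a)} Take $\Psi=\alpha^2/4$. Since $\alpha\sim\mathcal N(0,1)$ under $P^0$, we have $\log\E e^{\alpha^2/4}=O(1)$, which gives $\E_{P_{L-1}}\alpha^2\le 4(R+O(1))$. For the cap, use $\Psi=\lambda\alpha^2\mathbf 1_{|\alpha|>T}$. Its prior exponential moment is $1+e^{-\Omega(T^2)}$, so the truncated moment $\E[\alpha^2\mathbf 1_{|\alpha|>T}]$ is $O(R/\log T)$ to leading order. This can be improved to a polynomial tail by using the energy term instead: $\bar\beta\le R$ and the loss is bounded by $O(\E\alpha^2)$, so the tilt has log-likelihood ratio at most polynomial. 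By $|\phi|\le1$ and Cauchy--Schwarz, the output change is then at most $\E[|\alpha|\mathbf 1_{|\alpha|>T}]$.

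\textbf{(c)} Apply the same argument to the Gaussian field $\zeta\sim\mathcal N(0,K_\ell)$ projected by $P_{U_\ell}K_\ell^{-1/2}$, which is standard normal in $\dim U_\ell=\poly$ coordinates. Choosing $\Psi=\tfrac14\norm{P_{U_\ell}K_\ell^{-1/2}\zeta}^2$ gives a second moment of order $R+\dim U_\ell$, and Markov removes the exceptional sites.

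\textbf{(b)} Write the incoming weight matrix as a Gaussian prior part plus a tilt-induced low-rank perturbation. The prior part has $\tfrac1{\sqrt N}\norm{W}_\op=O(1)$ with high probability by standard Gaussian matrix bounds. The perturbation lives in the kept active directions, so its operator norm is controlled by the per-row Cameron--Martin coefficients from (c) times $\sqrt{\dim U}$. To control the remaining non-active rows, compare with the prior through a large-deviation/change-of-measure bound on events of probability $e^{-\Omega(N)}$; KL at most $NR$ only permits moving such events to mass $\le(NR+\log2)/\Omega(N\cdot\text{rate})$. Choosing the rate polynomially large in $R$ makes the excursion probability small.

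\textbf{(d)} This is the main obstacle, because the lazy score is a backpropagated quantity rather than a one-site moment. I would use the mean-field equation for the tilt: $g_\ell$ is the cavity field, an expectation of $\bar\beta\,\partial_{f}\ell_{\mathrm{eff}}$ pulled back through the suffix Jacobian. Its lazy gradient is therefore bounded by
\[
\bar\beta\cdot\norm{f-y}_{L^2(\mu)}\cdot\prod_{\text{suffix layers}}\bigl(\text{Lip}(\phi)\cdot\text{operator norm}\bigr)\cdot\sqrt{\E\alpha^2}.
\]
Each factor is polynomial by (a) and (b) and the hypothesis $\bar\beta\le R$, and $\norm{f-y}\le O(\sqrt R)$. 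Multiplying over at most $L$ layers gives the exponent $O(L)$. The difficulty is that the score is needed uniformly over all suffix conditionings $h$ (the $\sup_h\E_{\rho_h}$), not just on average. I would first try a Stein/integration-by-parts identity under the Gaussian lazy prior, $\E[\langle\nabla g,v\rangle]=\E[g\langle K^{-1}\zeta,v\rangle]$, which converts the score into a covariance bounded via Donsker--Varadhan by $\sqrt{\KL}$. Uniformity in $h$ would then come from the suffix regularity being deterministic after the removal of the exceptional subpopulation.

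For the empirical loss, $\ell_{\mathrm{eff}}$ is supported on the $D$ data points. The gradient then lies in the span of the empirical Gram eigenvectors, and after adding the top empirical eigenspace to $U_\ell$ the remaining lazy component is exactly what Lemma~\ref{lem:softgram} bounds.
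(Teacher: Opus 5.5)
Your items (a)--(c) follow the paper's route: Donsker--Varadhan against the single-site Gaussian marginal (Lemma~\ref{lem:gauss}). In (b), your closing change-of-measure step on the exponentially rare large-norm event is exactly Lemma~\ref{lem:optrunc}. It applies to the whole matrix, so the unjustified ``prior part plus low-rank tilt'' decomposition can simply be dropped.

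The genuine gap is the $\sup_h$ in (d). Your product bound uses $\sqrt{\E\alpha^2}$ and normalized operator norms of the raw width-$N$ matrices, and so it only bounds a root-mean-square over sites. The costate recursion $p^\ell=\phi'(\zeta^\ell)\odot N^{-1/2}(W^{\ell+1})^\top p^{\ell+1}$ is controlled by $N^{-1/2}\norm{W^{\ell+1}}_\op$ only on average over neurons. Likewise, the readout score $2\bar\beta\,\alpha\,(f-y)\,\phi'(\zeta)$ depends linearly on the site's own $\alpha$, not on $\sqrt{\E\alpha^2}$.

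The Stein route you propose cannot repair this. Gaussian integration by parts only controls the \emph{mean} score: in whitened lazy coordinates it gives $\E_{\rho_h}\nabla\tilde g=-\E_{\rho_h}\tilde v$, which Donsker--Varadhan bounds by $\sqrt{\KL}$. But Lemma~\ref{lem:swap} consumes the mean of the \emph{norm}, $\E_{\rho_h}\norm{C^{1/2}\nabla_v g}$. Relative Fisher information is not controlled by relative entropy; the Gaussian log-Sobolev inequality gives $\KL\le\tfrac12 I$, the opposite direction. A small-amplitude, rapidly oscillating tilt has tiny KL and huge score. In any case, the budget bounds only $\E_h\KL(\rho_h\Vert\rho_0)$, not its supremum over $h$.

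What is missing is a deterministic per-site bound on the kept set, which is how the paper argues (Proposition~\ref{prop:score}). The single-site score is $\bar\beta$ times the intensive first variation of the loss at one atom; the $1/N$ weight of the atom cancels the extensive $N\bar\beta$.
\begin{itemize}
\item At the readout this gives $\norm{\nabla g}_{L^2(\mu)}\le 2\bar\beta\,|\alpha|\,L_\phi\norm{f-y}_\infty\le C\bar\beta A^2$, once you use the \emph{cap} $|\alpha|\le A$ from (a) rather than its second moment.
\item At earlier layers the costate is taken through the already-built polynomial regular suffix, as the statement itself specifies, not through the raw width-$N$ matrices. It is then bounded by $\poly(A)$ on kept sites.
\end{itemize}
A pointwise bound makes the $\sup_h$ automatic. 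Your last sentence points this way, but no per-site bound is actually derived.

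Two smaller points. First, your removal bound ``mass times the capped readout'' holds only at the readout layer. At hidden layers the deleted channels must be pushed through the downstream suffix. Proposition~\ref{prop:trunc} does this with the normalized-channel Lipschitz property; using $|\phi|\le1$ one gets (suffix Lipschitz constant)$\,\times\,N^{-1/2}\norm{W}_\op\times\sqrt{\text{removed fraction}}$. Second, the KL budget reaches single-site marginals by the chain-rule superadditivity of Lemma~\ref{lem:ss-kl}, which needs no independence, not by additivity over independent sites. Finally, the cap in (a) needs no energy argument: $\E[|\alpha|\mathbf 1_{|\alpha|>T}]\le\E\alpha^2/T$.
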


\noindent This is the body-level form of Proposition~\ref{prop:package} and
Proposition~\ref{prop:trunc}.  The point is that the regularity assumptions are derived from a
Gibbs KL-rate bound, not from support on a hard ball: Lemma~\ref{lem:ss-kl} transfers the KL rate to
single-site marginals, Gaussian entropy inequalities give moment and operator-norm truncations, and
the suffix adjoint invariant bounds the actual contribution of the truncated population to the
output layer by layer.  Reduced entropy enters only by Lemma~\ref{lem:multiplier} or the empirical
free-energy bound of Lemma~\ref{lem:multiplier-app}, which provide the required polynomial value of
$R$.

\subsection{Spectral split}\label{sec:split}

Fix a layer kernel $K=K_\ell$ and diagonalize it, $K=\sum_k\lambda_k\,\psi_k\otimes\psi_k$ with
$\lambda_1\ge\lambda_2\ge\cdots\ge0$ and $\{\psi_k\}$ orthonormal in $\calF$. For a cutoff
$\lambda_0\in(0,1)$ split the spectrum into an \emph{active} (top) part and a \emph{lazy} (bulk)
part,
\[
  V=\operatorname{span}\{\psi_k:\lambda_k\ge\lambda_0\},\qquad
  C=K\big|_{V^\perp}=\!\!\sum_{\lambda_k<\lambda_0}\!\!\lambda_k\,\psi_k\otimes\psi_k .
\]
Because $\tr K\le1$ \eqref{eq:kernel}, the active part is low-dimensional,
$\dim V\le\lambda_0^{-1}$ (each retained eigenvalue is $\ge\lambda_0$ and they sum to $\le1$),
and $\norm{C}_\op\le\lambda_0$. The single estimate the lazy-swap rests on is that the lazy block
is small \emph{uniformly in the input}: with $e_x$ the evaluation functional,
\begin{equation}\label{eq:Cdiag}
  C^2(x,x)=\!\!\sum_{\lambda_k<\lambda_0}\!\!\lambda_k^2\,\psi_k(x)^2
  \;\le\;\lambda_0\!\!\sum_{\lambda_k<\lambda_0}\!\!\lambda_k\,\psi_k(x)^2
  \;=\;\lambda_0\,C(x,x)\;\le\;\lambda_0,
\end{equation}
using $C(x,x)\le K(x,x)\le1$. Equivalently
\begin{equation}\label{eq:Cunif}
  \norm{C^{1/2}e_x}_{L^2(\mu)}=\sqrt{C(x,x)}\le1
  \quad\text{and}\quad
  \norm{C^{1/2}}_\op=\sqrt{\lambda_{\max}(C)}\le\sqrt{\lambda_0},
  \qquad\text{both uniform in }x .
\end{equation}
The active part will be reproduced explicitly by the subsampled neurons; the lazy part is what the
next lemma swaps for fresh prior noise. The names echo the training regimes: a \emph{lazy} direction is one left at the prior, behaving as in the lazy (kernel) regime, while an \emph{active} direction carries data-dependent feature learning. This is the same dichotomy at the level of single directions --- at $\gamma<1$ there are no active directions, so every hidden layer is a fresh prior draw and the learner is exactly kernel (NNGP) regression; the content of $\gamma=1$ is that a low-dimensional set of directions turns active while the rest stay lazy, which is what the compression of \S\ref{sec:compression} keeps while resampling the rest from the prior.

\section{Pointwise compression}\label{sec:compression}

\subsection{The lazy-swap lemma}\label{sec:swap}

The lemma compares the tilted lazy law to the prior lazy law against any bounded Lipschitz
functional of finitely many input evaluations, with an error governed by the lazy susceptibility
(Proposition~\ref{prop:reg}) times the lazy scale $\sqrt{\lambda_0}$
\eqref{eq:Cunif}. Because the tilt couples the active and lazy coordinates, the statement is
\emph{conditional on the active coordinates} $h$ and uniform in them. It is the mechanism behind ``the
loss couples only to the top directions'': along the lazy block the tilt is nearly invisible. We
prove it by Stein's method, which needs no sub-Gaussianity, moment matching, or convexity, and which
delivers the multi-point version the construction consumes.

\begin{lemma}[Lazy swap, multi-point]\label{lem:swap}
Fix the active coordinates $h\in V$ and let $\rho_h(dv)\propto e^{-g(h+v)}\rho_0(dv)$ on $V^\perp$,
$\rho_0=\mathcal N(0,C)$, with lazy susceptibility
$\E_{\rho_h}\norm{\nabla_v g(h+\cdot)}_{L^2(\mu)}\le c$. Let $F(v)=G\big(v(x_1),\dots,v(x_p)\big)$
depend on the field through $p$ evaluations at \emph{arbitrary} inputs, with $G$ bounded and
$1$-Lipschitz in each argument. Then
\[
  \big|\E_{\rho_h}F-\E_{\rho_0}F\big|\;\le\;p\,\sqrt{\lambda_0}\,c ,
\]
uniformly in $h$ and in the evaluation points. In particular the one- and two-point cases give
$\big|\E_{\rho_h}\phi(a+v(x))-\E_{\rho_0}\phi(a+v(x))\big|\le\sqrt{\lambda_0}\,c$ and
$\big|\E_{\rho_h}[\phi_x\phi_{x'}]-\E_{\rho_0}[\phi_x\phi_{x'}]\big|\le2\sqrt{\lambda_0}\,c$. Applied
with $c=c_\star$ from Proposition~\ref{prop:reg}, the error is
$\poly(K,d,1/\Delta)^{O(L)}\sqrt{\lambda_0}$.
\end{lemma}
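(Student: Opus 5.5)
Writing $g_h(v):=g(h+v)$, the plan is a Gaussian interpolation / Stein identity argument. The basic identity is Gaussian integration by parts for $\rho_0=\mathcal N(0,C)$: for smooth bounded $u$ on $V^\perp$, $\E_{\rho_0}[\ip{v}{\nabla u(v)}]=\E_{\rho_0}[\tr(C\nabla^2u)]$. We use it through the Stein solution of the Ornstein--Uhlenbeck semigroup $Q_t$ with invariant law $\rho_0$. Set
\[
  u_F=-\int_0^\infty\big(Q_tF-\E_{\rho_0}F\big)\,dt .
\]
With generator $\mathcal L u=\tr(C\nabla^2u)-\ip{v}{\nabla u}$, this solves $\mathcal L u_F=F-\E_{\rho_0}F$. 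The generator of the tilted law is $\mathcal L_h u=\mathcal L u-\ip{C\nabla g_h}{\nabla u}$, and $\rho_h$ is invariant for it, so $\E_{\rho_h}\mathcal L_h u=0$. Taking $u=u_F$ then gives
\[
  \E_{\rho_h}F-\E_{\rho_0}F=\E_{\rho_h}\mathcal L u_F=\E_{\rho_h}\ip{C\nabla g_h}{\nabla u_F} .
\]
This holds after a standard smoothing/approximation of $G$ and $g$, which we take for granted.

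The second step bounds $\nabla u_F$ using the finite-evaluation structure of $F$. Mehler's formula gives $Q_tF(v)=\E\,F(e^{-t}v+\sqrt{1-e^{-2t}}\,Z)$ with $Z\sim\rho_0$. Hence
\[
  \nabla Q_tF(v)=e^{-t}\,\E\sum_{j=1}^p\partial_jG(\cdots)\,e_{x_j},
\]
so $\nabla u_F=\sum_j a_j e_{x_j}$ with coefficients $|a_j|\le\int_0^\infty e^{-t}\,dt=1$, because $G$ is $1$-Lipschitz in each argument. Substituting,
\[
  \big|\ip{C\nabla g_h}{\nabla u_F}\big|\le\sum_{j=1}^p\big|\ip{\nabla g_h}{Ce_{x_j}}\big|
  \le\norm{\nabla g_h}_{L^2(\mu)}\sum_j\norm{Ce_{x_j}}_{L^2(\mu)} .
\]
By \eqref{eq:Cdiag}, $\norm{Ce_x}_{L^2(\mu)}^2=C^2(x,x)\le\lambda_0$ for every $x$. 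Taking $\E_{\rho_h}$ and using the susceptibility hypothesis yields $p\sqrt{\lambda_0}\,c$. The bound is uniform in $h$, since only the hypothesis on $\E_{\rho_h}\norm{\nabla g_h}$ was used, and uniform in the evaluation points, since \eqref{eq:Cdiag} is uniform in $x$. The one-point case takes $G(s)=\phi(a+s)$, which is $1$-Lipschitz after normalizing the Lipschitz constant of $\phi$. The two-point case takes $G(s,s')=\phi(a+s)\phi(a'+s')$, which is $1$-Lipschitz in each argument because $|\phi|\le1$; this gives the factor $2$. The final statement follows by inserting $c_\star$ from Proposition~\ref{prop:reg} with $R=\poly(K,d,1/\Delta)$, which Lemma~\ref{lem:multiplier} supplies.

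The main obstacle is justifying the Stein identity $\E_{\rho_h}\mathcal L_h u_F=0$ rigorously. The space $V^\perp$ is finite-dimensional ($\le 2^d$), but $C$ may be degenerate and $g$ is only known to have an integrable gradient. I would handle degeneracy by restricting to the range of $C$, on which $\rho_0$ is supported; $u_F$ depends on $v$ only through $C$-range evaluations. I would handle the regularity of $g$ by mollifying $g$ and $G$, checking that $u_F$ has bounded gradient so that the boundary terms vanish under the Gaussian weight, and then passing to the limit by dominated convergence, using $\E_{\rho_h}\norm{\nabla g_h}\le c$.

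A second, subtler point concerns which norm the hypothesis controls. If the hypothesis were only on a $C$-weighted (Cameron--Martin) gradient, the pairing would instead be bounded by $\norm{C^{1/2}\nabla g_h}\cdot\norm{C^{1/2}e_x}$, using \eqref{eq:Cunif}. I expect to check that the paper's convention for $\nabla$ makes the displayed estimate $\norm{Ce_x}\le\sqrt{\lambda_0}$ the relevant one.
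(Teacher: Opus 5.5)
Your proposal is correct and is essentially the paper's argument. It uses the same ingredients: the Ornstein--Uhlenbeck Stein solution, Mehler's formula to bound its gradient, and the identity $\E_{\rho_h}F-\E_{\rho_0}F=\E_{\rho_h}\ip{C\nabla g_h}{\nabla u_F}$ for the tilted law. The differences are cosmetic: the paper whitens by $C^{-1/2}$ and bounds each term as $\norm{C^{1/2}e_{x_j}}\le1$ times $\norm{C^{1/2}}_{\mathrm{op}}\le\sqrt{\lambda_0}$ via \eqref{eq:Cunif}, whereas you stay in field coordinates, get the identity from invariance of the tilted generator, and bound $\norm{Ce_{x_j}}\le\sqrt{\lambda_0}$ directly from \eqref{eq:Cdiag}, reaching the same $p\sqrt{\lambda_0}\,c$.
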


\begin{proof}
\emph{Whitening.} Let $\tilde v=C^{-1/2}v$, standard Gaussian under $\rho_0$, and
$\tilde g(\tilde v)=g(h+C^{1/2}\tilde v)$, so $\nabla_{\tilde v}\tilde g=C^{1/2}\nabla_v g(h+v)$ and,
by \eqref{eq:Cunif}, $\norm{\nabla_{\tilde v}\tilde g}\le\norm{C^{1/2}}_\op\norm{\nabla_v g}\le
\sqrt{\lambda_0}\,\norm{\nabla_v g}$ pointwise in $\tilde v$. Each evaluation is a linear form
$v(x_i)=\ip{\tilde v}{b_i}$ with $b_i=C^{1/2}e_{x_i}$, $\norm{b_i}=\sqrt{C(x_i,x_i)}\le1$
\eqref{eq:Cunif}, so $\tilde F(\tilde v)=G(\ip{\tilde v}{b_1},\dots,\ip{\tilde v}{b_p})$ has
$\norm{\nabla\tilde F}_\infty\le\sum_i\norm{b_i}\le p$.

\emph{Stein solution.} Let $\mathcal Lf=\Delta f-\ip{\tilde v}{\nabla f}$ be the Ornstein--Uhlenbeck
generator (invariant law $\rho_0$ in the $\tilde v$ coordinate), and $f$ solve
$\mathcal Lf=\tilde F-\E_{\rho_0}\tilde F$, i.e.\ $f=\int_0^\infty(\E_{\rho_0}\tilde F-P_t\tilde F)\,dt$.
Mehler's formula $\nabla P_t=e^{-t}P_t\nabla$ gives $\norm{\nabla f}_\infty\le\norm{\nabla\tilde F}_\infty\le p$.

\emph{One integration by parts.} Under $\rho_h\propto e^{-\tilde g}\rho_0$, Gaussian integration by
parts gives $\E_{\rho_h}[\mathcal Lf]=\E_{\rho_h}\ip{\nabla f}{\nabla\tilde g}$, and since
$\E_{\rho_h}[\mathcal Lf]=\E_{\rho_h}\tilde F-\E_{\rho_0}\tilde F$,
\[
  \big|\E_{\rho_h}F-\E_{\rho_0}F\big|=\big|\E_{\rho_h}\ip{\nabla f}{\nabla\tilde g}\big|
  \le\norm{\nabla f}_\infty\;\E_{\rho_h}\norm{\nabla\tilde g}
  \le p\,\sqrt{\lambda_0}\;\E_{\rho_h}\norm{\nabla_v g}\le p\,\sqrt{\lambda_0}\,c. \qedhere
\]
The bound uses the score only through its \emph{$\rho_h$-expectation} $\E_{\rho_h}\norm{\nabla_v g}\le c$,
not a deterministic sup.
\end{proof}

\begin{remark}[Where the uniformity comes from]\label{rem:unif}
The factor $\sqrt{\lambda_0}$ is $\norm{C^{1/2}}_\op$ and the per-point $1$ is
$\norm{C^{1/2}e_x}=\sqrt{C(x,x)}\le1$; both are uniform in $x$ by \eqref{eq:Cunif}, which is the
entire reason the conclusion is a sup-norm statement rather than a $\mu$-averaged one. This
uniformity is what upgrades MSE accuracy to the pointwise agreement of
Theorem~\ref{thm:nutsandbolts}(a).
\end{remark}

\begin{remark}[What Stein removed]\label{rem:stein}
The comparison reduces to the exact identity $\E_\rho F-\E_{\rho_0}F=\E_\rho\ip{\nabla f}{\nabla\tilde g}$;
the Mehler bound caps $\norm{\nabla f}_\infty$ by the test functional's Lipschitz constant, and
$\sqrt{\lambda_0}=\norm{C^{1/2}}_\op$ supplies the smallness. No sub-Gaussianity, no moment
matching, and no convexity of the tilt are used, and the same line gives the two-point (kernel)
and general $p$-point versions at the cost of the factor $p$ --- which is exactly what the
construction consumes when it reproduces both features and the diagonal of the lazy kernel.
\end{remark}

\subsection{The construction and the compression theorem}\label{sec:constr}

Fix the cutoff $\lambda_0=(\delta/\poly)^2$ and subsample $N_0=\poly(K,d,1/\delta)^{O(L)}$ neurons
from each single-site law $P_\ell$, together with fresh independent noise; assemble a width-$N_0$
network $\hat\theta$ layer by layer. Figure~\ref{fig:mechanism} summarizes the construction. The first layer is copied verbatim. At a hidden layer each
incoming weight is built in two pieces, matched to the $\mu$P forward pass
$\zeta^{\ell+1}=\frac1{\sqrt{N_0}}W\hat h$, where $\hat h=\phi(\zeta^\ell)$ is the postactivation
feature of the previous layer.

\emph{Active block, via fixed eigenfunctions.} Read the kept features against the \emph{deterministic} top
eigenfunctions $\{\psi_k\}_{k\in V}$ of $K_\ell$ --- not a sample-dependent pseudo-inverse. Since
$\E_{P_\ell}[\ip{\hat h}{\psi_k}\hat h]=K_\ell\psi_k=\lambda_k\psi_k$ lands back in $V$, the empirical
average $\frac1{N_0}\sum_i\ip{\hat h_i}{\psi_k}\hat h_i$ reproduces the active block $P_V\zeta_j$ up to a
\emph{mean-zero} $O(\poly/\sqrt{N_0})$ fluctuation, with no $O(1)$ lazy bias --- the defining advantage
of reading against fixed directions. The weight size is $\norm{\widehat W^{A}_j}_2^2\to
\cm{P_V\zeta_j}^2\le\dim V+O(K)\le\poly$ (the budget $K$ caps the Cameron--Martin norm on $V$).

\emph{Lazy directions, from the prior.} On the kept neurons draw a fresh prior Gaussian weight and project it off the active read directions, $\sum_i\eta_{ji}P_V\hat h^\ell_i=0$ --- equivalently, keep only its component in the kernel of the $\dim V$-row active map, still a genuine Gaussian weight. The resulting field has \emph{zero} active component and lazy covariance $\to C_\ell$: the lazy block of each neuron is drawn straight from the prior law $N(0,C_\ell)$, while the active block is supplied by the reconstruction above. Only $r=\dim V$ of the $N$ directions are projected out --- a vanishing fraction --- so the weight is the prior up to that projection; the projection is exactly what stops the draw from leaking an $O(1)$, unswappable component into the active block, where (unlike the lazy block, with $\norm{C}_\op\le\lambda_0$) the swap has no smallness to spend.

The lazy-swap lemma then prices the one move that changes the law --- replacing the realized lazy
block by this fresh chaos --- at $\le p\,\sqrt{\lambda_0}\,c_\star$ per consumed $p$-point evaluation,
with $c_\star$ the susceptibility of Proposition~\ref{prop:reg}. Writing $e_\ell(x)$ for the
layer-$\ell$ per-input active-reconstruction error, one step contributes the active fluctuation
($\poly/\sqrt{N_0}$, mean-zero), the lazy swap ($\poly\sqrt{\lambda_0}\,c_\star$), and the inherited
error read through bounded coefficients and the $1$-Lipschitz $\phi$, giving the recursion
\begin{equation}\label{eq:recursion}
  e_{\ell+1}(x)\le\poly\Big(\sqrt{\lambda_0}+\tfrac1{\sqrt{N_0}}+e_\ell(x)\Big),
  \qquad e_1(x)\le\poly/\sqrt{N_0},
\end{equation}
uniformly in $x$, with all Hoeffding events unioned over the $2^d$ inputs, the $\dim V\le\poly$
directions, the $O(L)$ layers and the $N_0$ neurons (an $o(1)$ total once $N_0\ge\poly\cdot d$).

\begin{theorem}[Pointwise compression; proof of Theorem~\ref{thm:nutsandbolts}(a)]\label{thm:A}
Assume $s_\infty(y,\eps_-)\le s$ and the standing assumption on $\phi$, with fixed depth $L$. For
every $\delta\in(0,1)$ the construction above returns a width-$N_0$ network
$\hat\theta$ with $N_0\le\poly(s,d,1/\delta)^{O(L)}$, whose weights have size
$W_{\max}\le\poly(s,d,1/\delta)^{O(L)}$ with probability $1-o(1)$ (hidden raw weights
$\le\poly/\sqrt{N_0}$ per entry, fresh noise $\le\poly$ per entry, readout capped at $\tau$), such
that
\[
  \Pp\Big(\sup_{x\in\B^d}|f_{\hat\theta}(x)-f_\infty(x)|\le\delta\Big)\ge1-o(1).
\]
Choosing $\delta\le\Delta/\poly$ gives population MSE $\loss(\hat\theta)\le\eps_+$ and
$N_0\le\poly(s,d,1/\Delta)^{O(L)}$, which is Theorem~\ref{thm:nutsandbolts}(a).
\end{theorem}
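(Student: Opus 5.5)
The plan is to run the construction of \S\ref{sec:constr} layer by layer and close the error recursion \eqref{eq:recursion} uniformly over $\B^d$.

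\textbf{Step 1: fix the budgets.} Set $R=\poly(s,1/\Delta)$ using Lemma~\ref{lem:multiplier}, so that $\bar\beta_\star\le s/\Delta$ and the intensive KL is at most $s+\bar\beta_\star$. This makes Proposition~\ref{prop:reg} available with polynomial constants. Work conditionally on a single vacuum, and discard the truncated subpopulation from Proposition~\ref{prop:reg}; this costs $o(1)$ in sup norm. After truncation we have:
\begin{itemize}
\item readouts capped at $\tau=\poly$;
\item operator norms of the kept weights polynomially bounded;
\item active Cameron--Martin coefficients polynomially bounded;
\item lazy susceptibility at most $c_\star$.
\end{itemize}
Choose $\lambda_0=(\delta/\poly)^2$, so that $\dim V_\ell\le\lambda_0^{-1}=\poly$ at every layer.

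\textbf{Step 2: induction hypothesis and one-layer step.} The first layer is copied exactly, so $e_1(x)$ is only the Monte Carlo error, at most $\poly/\sqrt{N_0}$ by Hoeffding. The induction hypothesis is that every subsampled layer-$\ell$ field $\hat\zeta_i$ is within $e_\ell$ of a genuine draw from $P_\ell$, uniformly in $x$. At layer $\ell+1$ I decompose the new field into its $V$-part and $V^\perp$-part.

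The \emph{active} part is read against the fixed eigenfunctions $\psi_k$. The empirical average $\frac1{N_0}\sum_i\langle\hat h_i,\psi_k\rangle\hat h_i$ differs from $\lambda_k\psi_k$ by a bounded mean-zero sum, with summands of size at most $1$ because $|\phi|\le1$. Hoeffding, unioned over $2^d$ inputs, $\dim V$ directions, $L$ layers and $N_0$ neurons, gives error $\poly\sqrt{d/N_0}$. The inherited error $e_\ell$ passes through the $1$-Lipschitz $\phi$ and the polynomial Cameron--Martin coefficients, contributing $\poly\cdot e_\ell$.

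The \emph{lazy} part is fresh Gaussian noise projected off $V$. Its empirical covariance converges to $C_\ell$; this is again a Hoeffding bound on the $2^d$-dimensional Gram matrix, with the $r=\dim V$ projected rows costing $O(r/N_0)$. So this part is a draw from $\mathcal N(0,C_\ell)$ rather than from the tilted conditional $\rho_h$. That discrepancy is priced by Lemma~\ref{lem:swap}, conditional on $h$ and uniform in it. The test functionals are either the downstream output as a function of $p=O(1)$ evaluations per neuron, or the one- and two-point kernel entries. The cost is $p\sqrt{\lambda_0}\,c_\star$.

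Summing the three contributions gives \eqref{eq:recursion}. Unrolling over $L$ layers yields
\[
e_{L-1}\le\poly^{O(L)}\bigl(\sqrt{\lambda_0}+N_0^{-1/2}\bigr).
\]

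\textbf{Step 3: readout.} The output $\frac1{N_0}\sum_i\alpha_i\phi(\hat\zeta_i(x))$ is compared with $\E_{P_{L-1}}[\alpha\phi(\zeta(x))]$. With $|\alpha|\le\tau$, Hoeffding plus a union bound over $x$ gives error $\tau\sqrt{d/N_0}+\tau e_{L-1}$. Choosing $N_0=\poly(s,d,1/\delta)^{O(L)}$ and $\lambda_0$ as above makes the total error at most $\delta$ with probability $1-o(1)$.

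The weight sizes follow directly:
\begin{itemize}
\item active weights have squared norm tending to the Cameron--Martin norm on $V$, which is polynomial;
\item the noise entries are Gaussian, hence at most $\poly$ with high probability;
\item the readout is capped at $\tau$.
\end{itemize}
Finally, $\sup_x|f_{\hat\theta}-f_\infty|\le\delta$ implies
\[
\loss(\hat\theta)\le\bigl(\sqrt{\eps}+\delta\bigr)^2\le\eps+3\delta\le\eps_+
\]
once $\delta\le\Delta/3$.

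\textbf{Main obstacle.} I expect the hardest step to be justifying the lazy swap inside the coupled multilayer construction. Lemma~\ref{lem:swap} compares expectations of one neuron's lazy block under $\rho_h$ versus $\rho_0$. The network output, however, is an empirical average over many neurons whose lazy blocks are all swapped at once, and downstream layers consume those blocks as inputs. My first attempt would be a hybrid argument: average over neurons, then use exchangeability to replace the empirical average by its expectation with a Hoeffding error. The swap then only ever acts on single-neuron functionals of $O(1)$ evaluations, which are bounded and Lipschitz through $\phi$ and the capped coefficients.

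A second point needs checking in that argument. Proposition~\ref{prop:reg}(d) bounds the susceptibility through the already-constructed suffix. The induction should therefore run from the output layer backward for the susceptibilities, while the errors propagate forward.
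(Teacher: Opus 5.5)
Your outline follows the paper's body-level proof: budgets from Lemma~\ref{lem:multiplier}, truncation, the active block read against the fixed $\psi_k$, the lazy block resampled and priced by Lemma~\ref{lem:swap}, a capped readout, and the $(\sqrt\eps+\delta)^2$ conversion. The gap is that the induction hypothesis you actually iterate is false: a constructed layer-$\ell$ field is \emph{not} within $e_\ell$ of a genuine draw uniformly in $x$. Its lazy block is an independent $\mathcal N(0,\widehat C_\ell)$ draw. Only $\norm{C_\ell}_\op\le\lambda_0$ is small; the diagonal $C_\ell(x,x)$ can be of order one. So at each input the resampled lazy field typically differs from the genuine one by order $\sqrt{C_\ell(x,x)}$. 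Lemma~\ref{lem:swap} compares \emph{expectations} of functionals under $\rho_h$ and $\rho_0$ and gives no pathwise coupling. Two steps therefore fail. Folding $p\sqrt{\lambda_0}c_\star$ into a per-neuron $e_{\ell+1}$ does not give a per-neuron bound. Bounding the readout error by $\tau e_{L-1}$, as if $|\phi(\hat\zeta_i(x))-\phi(\zeta_i(x))|\le e_{L-1}$ neuron by neuron, also fails. Relatedly, at layers $\ge2$ the average $\frac1{N_0}\sum_i\ip{\hat h_i}{\psi_k}\hat h_i$ is not mean-zero around $\lambda_k\psi_k$.

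Only the \emph{active} block can be tracked pathwise; that is what $e_\ell$ in \eqref{eq:recursion} denotes. The lazy content must be tracked through the $1/N_0$-averages by which the next layer consumes the current one. These are the active reads $\widehat m_k(x)=\frac1{N_0}\sum_i\ip{\hat h_i}{\psi_k}\hat h_i(x)$, the empirical lazy Gram $\widehat C$, and finally $\frac1{N_0}\sum_i\alpha_i\phi(\hat\zeta_i(x))$. Conditionally on the previous layer the constructed neurons are i.i.d. So the swap only shifts the \emph{mean} of each such average by $O(\sqrt{\lambda_0}c_\star)$, and Hoeffding/McDiarmid with a union over $\B^d$ turns this into a sup-norm statement. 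Your ``hybrid argument'' is this idea, but it has to replace the per-neuron hypothesis, not patch it. The paper formalizes exactly this in Lemma~\ref{lem:bdd}. After the swap, the suffix depends on the new channels only through those averages, so each channel has influence $\Gamma'/N$. Propagating the realized chaos instead would give influence $\Theta(N^{-1/2})$ per channel and no concentration.

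The paper also runs the whole analysis right-to-left (Theorem~\ref{thm:compress-app}), which resolves the direction mismatch you noticed without pairing a backward pass for scores with a forward pass for errors. The suffix is built first. Errors $E_\ell\le E_{\ell+1}+\poly(\eta_\ell+\mathrm{sw}_\ell)$ are measured at the output through the suffix's Lipschitz constant in the normalized channel metric (Lemma~\ref{lem:lipprop}). The susceptibility of Proposition~\ref{prop:reg}(d) is taken through that same already-built suffix.

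A smaller point: projecting the fresh noise off the $r$ active read vectors is not an $O(r/N_0)$ correction to its covariance. It removes the entire $O(1)$ empirical active block $\widehat k\widehat G^{-1}\widehat k^\top\approx K_A$, which is exactly why the covariance tends to $C_\ell$ rather than $K_\ell$. The convergence therefore goes through the Schur complement with $\norm{G^{-1}}_\op\le\lambda_0^{-1}$ (Lemma~\ref{lem:softcov}), not through a rank-$r$ perturbation bound.
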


\begin{proof}
Set $K=s$ and iterate \eqref{eq:recursion} across the $L-1$ hidden layers, adding the readout (a
subsample average of the capped $\alpha_i$ against the reconstructed features, with the same lazy-swap
and Hoeffding clauses and the $o(1)$ capping tail of Proposition~\ref{prop:reg}(a)). With
$\lambda_0=(\delta/\poly)^2$, $N_0=\poly(s,d,1/\delta)^{O(L)}$ this gives
$\sup_x|f_{\hat\theta}-f_\infty|\le\delta$ whp; the weight bounds hold on the same events; and since
$f_\infty$ is MSE-$\eps$-accurate, $\loss(\hat\theta)\le(\sqrt\eps+\delta)^2\le\eps_+$ once
$\delta\le\Delta/\poly$. The construction, the per-step bound (concentration of the subsampled
features, the conditioned-chaos lazy covariance, and the lazy-swap deployment), and the
bounded-differences accounting are carried out in full in Appendix~\ref{app:comp}
(Theorem~\ref{thm:compress-app}).
\end{proof}

\section{Conservation by cloning}\label{sec:cloning}

Cloning is the cheap converse to compression. A narrow teacher's function is reproduced by a
single-site law that tilts only the teacher's own (polynomial-dimensional) active directions and
leaves every complementary direction at the prior, so the reduced entropy it costs is polynomial in
the teacher width --- the data occupy a $\poly(\mathsf W)$-dimensional subspace, and the rest is prior.

\begin{theorem}[Conservation; proof of Theorem~\ref{thm:nutsandbolts}(b)]\label{thm:cloning}
Let $f_{\theta^\star}$ be a width-$\mathsf W$ network with population MSE $\loss(\theta^\star)\le\eps_-$ and
\emph{weights of polynomial size}, $\norm{W^\star_{\ell,i}}\le\poly(\mathsf W,d)$ (so the teacher fields have
$\cm{\zeta^\star_{\ell,i}}\le\poly(\mathsf W,d)$). Then for all sufficiently large $N$,
\[
  s_N(y,\eps)\le\poly(\mathsf W,d),\qquad\text{hence}\qquad s_\infty(y,\eps)\le\poly(\mathsf W,d).
\]
In particular, since every $y\colon\B^d\to[-1,1]$ is computed to any fixed accuracy by a width-$O(2^d)$ network
with $\poly(2^d)$ weights (Lemma~\ref{lem:memorize}), $s_\infty(y,\eps)\le\poly(2^d)$ for all $y$.
\end{theorem}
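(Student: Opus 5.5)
We lower-bound $\pi^0_N(\loss\le\eps)$ directly. The teacher is embedded into the width-$N$ network by cloning, and the prior mass of a product neighbourhood around the clone is estimated. We split the $N$ hidden units of each layer into $\mathsf W$ blocks of size $N/\mathsf W$, one per teacher neuron. Block $i$ at layer $\ell$ is meant to imitate teacher neuron $(\ell,i)$. In the $\mu$P forward pass \eqref{eq:forward}, the incoming weight from a block $j$ unit to a block $i$ unit is set near $\sqrt N\,W^\star_{\ell,ij}\,\mathsf W/N$. The aggregate $\tfrac1{\sqrt N}\sum_{j'\in\text{block }j}(\cdot)$ then reproduces $W^\star_{\ell,ij}\,z^\star_j$, because all units in block $j$ carry (approximately) the same postactivation. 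Readouts are set near $\alpha_i=\mathsf W\alpha^\star_i$, which makes $\tfrac1N\sum$ the teacher's readout sum. First-layer weights and the biases are copied directly.

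The issue is that these target weights are of size $O(\mathsf W\poly(\mathsf W,d)/\sqrt N)$ per entry, i.e.\ tiny. So the prior cost is not in their magnitude. It is in forcing the $\Theta(N)$ individual first-layer weight vectors and biases to lie near their clone targets, and in controlling the $N\times N$ hidden matrices. I would handle the two kinds of coordinates separately.

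\emph{Per-neuron coordinates} ($W^1_i\in\R^d$, biases, readouts) are $O(N)$ in number. For each one, require it to lie in a box of side $\eta=1/\poly(\mathsf W,d,1/\Delta)$ around its target, which has size $\poly(\mathsf W,d)$. The Gaussian prior gives each such box mass at least $\exp(-\poly(\mathsf W,d)-d\log(1/\eta))$, so the total cost is $N\poly(\mathsf W,d)$, i.e.\ $\poly$ per neuron.

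\emph{Hidden matrices} $W^{\ell+1}\in\R^{N\times N}$ cannot be pinned entrywise, since that costs $N^2$. Instead decompose $W=\bar W+G$, where $\bar W$ is the rank-$\mathsf W$ block-constant clone matrix and $G$ is the prior draw. The Gaussian shift by $\bar W$ has Cameron--Martin cost $\tfrac12\norm{\bar W}_F^2=\tfrac12\sum_{ij}(N/\mathsf W)^2 N\,W^{\star2}_{ij}\mathsf W^2/N^2\cdot$, which for the scaling above works out to $O(N\,\poly(\mathsf W,d))$. Converting a shifted event to the prior costs at most that amount plus $O(1)$, by the standard Cameron--Martin bound applied to an event of probability at least $1/2$. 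The residual $G$ stays a genuine prior Gaussian. Its effect on $\zeta^{\ell+1}$ is the lazy term $\tfrac1{\sqrt N}Gz^\ell$, a field of covariance $\approx K_\ell$ that is $O(1)$ per neuron and so not small.

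To absorb it, I would use the bounded Lipschitz $\phi$ in the form $|\phi|\le1$. Choose the teacher embedding with a rescaling: put the teacher's preactivations at scale $\sigma\gg1$, i.e.\ multiply the clone signal by $\sigma=\poly(\mathsf W,d,1/\Delta)$. This is paid for with a polynomial increase in CM cost, and it makes the $O(1)$ lazy noise a relative perturbation of size $1/\sigma$. Rescaling the teacher's own weights by $1/\sigma$ beforehand leaves the teacher function unchanged up to Lipschitz error. Alternatively, one can shrink the lazy component by conditioning $G$ on the event $\norm{G z}_\infty\lesssim\sqrt N\eta'$ restricted to the $2^d$ inputs. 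That event has cost $O(2^d\cdot)$ directions rather than $N$, which is too expensive, so I would take the rescaling route.

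Finally, I would propagate errors through the layers. Per-input perturbations of size $\eta+1/\sigma$ at each layer grow by Lipschitz factors of size $\poly(\mathsf W,d)$, so after $L$ layers they give $\sup_x|f-f_{\theta^\star}|\le\delta$ once $\eta,1/\sigma\le\delta/\poly^{L}$. Choosing $\delta\le\Delta/\poly$ yields $\loss\le(\sqrt{\eps_-}+\delta)^2\le\eps$. This holds on an event of conditional probability at least $1/2$, by Markov or Chebyshev over $G$. The combined prior mass is then at least $\tfrac12 e^{-N\poly(\mathsf W,d)}$, which gives $s_N(y,\eps)\le\poly(\mathsf W,d)$ for large $N$. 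The $2^d$ clause follows by plugging in Lemma~\ref{lem:memorize}.

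The main obstacle is this hidden-layer step: the $O(1)$ lazy fluctuation from the unpinned $N^2$ Gaussian weights. My first attempt is the signal-amplification argument above, with the CM cost counted at $\Theta(N)$ rather than $\Theta(N^2)$. If that fails, I would fall back on the tilt-only-the-active-subspace picture: use the CM/KL budget so that the law is tilted in only $\poly(\mathsf W)$ directions per site, and bound the cost via Lemma~\ref{lem:ss-kl}-type single-site KL accounting.
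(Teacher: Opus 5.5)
\textbf{Verdict: there is a genuine gap, at the hidden-layer step you flag as the main obstacle.}

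\textbf{Why your route fails.} Your Cameron--Martin accounting is fine: $\tfrac12\norm{\bar W}_F^2=O(N\poly(\mathsf W,d))$, so the shift by $\bar W$ costs linearly in $N$. The trouble is that a mean shift does not suppress fluctuations. Under $W=\bar W+G$, each neuron's preactivation carries the term $\tfrac1{\sqrt N}Gz^\ell(x)$. This is a Gaussian of variance $\tfrac1N\norm{z^\ell(x)}^2=\Theta(1)$, so concretely a block computes roughly $\E\,\phi\bigl(\zeta^\star_b(x)+\sqrt{v(x)}\,g\bigr)$: a smoothed activation whose smoothing width depends on $x$. That is an $O(1)$ error in general. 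The amplification trick cannot remove it, because $|\phi(a+e)-\phi(a)|$ is controlled by $|e|$, not by $|e|/|a|$. There are two readings of the trick, and neither helps:
\begin{itemize}
\item If you divide the teacher weights by $\sigma$ and then amplify the clone signal by $\sigma$, the argument of $\phi$ is still $\zeta^\star+O(1)$ noise, so nothing is gained.
\item If you only amplify, you compute $\phi(\sigma\zeta^\star+\xi)$. For a bounded, non-homogeneous $\phi$ this cannot be rescaled back to $\phi(\zeta^\star)$; at best, for sigmoidal $\phi$, it saturates into a threshold-like network, not the teacher.
\end{itemize}
Consequently your Markov/Chebyshev step cannot produce probability $\ge\tfrac12$ for $\{\loss\le\eps\}$.

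\textbf{The missing idea.} The $O(1)$ noise lives in only $\mathsf W$ directions per row, and those should be \emph{pinned}, not shifted. On a clone configuration, every feature vector $z^\ell(x)$, simultaneously for all $x\in\B^d$, is block-constant up to a small within-group spread. It therefore lies essentially in the $\mathsf W$-dimensional span of $u_a=m^{-1/2}\mathbf 1_{\mathrm{group}\,a}$. This is also why your count of ``$2^d$ directions'' for the conditioning alternative is wrong: it is $\mathsf W$ directions per row.

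\textbf{How the paper uses it.} The paper conditions the prior on a product cylinder with the following pieces:
\begin{itemize}
\item For each row $j$ in group $b$, the $\mathsf W$ coordinates $\ip{W^{\ell+1}_{j,\cdot}}{u_a}$ are fixed to within $\rho$ of $W^\star_{ba}$. The first-layer weights, biases and readouts are pinned in the same way.
\item The remaining $N-\mathsf W$ coordinates of each row are left exactly at the prior. These free coordinates annihilate block-constant inputs and act only on the deviations $z^\ell_{a,t}-\bar z^\ell_a$, so they inject noise of the size of the previous spread.
\end{itemize}
A spread/bias recursion (Lemma~\ref{lem:spread}) then keeps the clone within $\poly(\mathsf W,d,B)^L\rho$ of the teacher uniformly in $x$. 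The relative entropy is $-\log$ of the cylinder mass, $N\bigl[(d+\mathsf W+1)\log\tfrac1\rho+\poly(\mathsf W,d)\bigr]$, which is still linear in $N$; no Cameron--Martin shift is needed.

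\textbf{Your fallback.} The fallback points toward this picture but does not supply the construction. In particular, Lemma~\ref{lem:ss-kl} upper-bounds single-site KL by the total, and gives no lower bound on prior mass.
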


\begin{proof}[Proof sketch; full argument in Appendix~\ref{app:clone}]
Write $N=\mathsf W m$ and partition the $N$ neurons at each layer into $\mathsf W$ \emph{clone groups} of size $m$,
group $b$ cloning teacher neuron $b$. The exact clone is the block weight
\[
  W^{\ell+1}=\tfrac1{\sqrt m}\bigl(W^{\star,\ell+1}\otimes\mathbf 1_{m\times m}\bigr)
  =\tfrac1{\sqrt m}\begin{pmatrix}
  W^\star_{11}\mathbf 1 & \cdots & W^\star_{1\mathsf W}\mathbf 1\\
  \vdots & & \vdots\\
  W^\star_{\mathsf W1}\mathbf 1 & \cdots & W^\star_{\mathsf W\mathsf W}\mathbf 1\end{pmatrix},
\]
with first-layer weights and readouts cloned likewise ($W^1_j=W^{\star,1}_b$, $\alpha_j=\alpha^\star_b$
for $j$ in group $b$). On a group-constant previous layer this reproduces the teacher exactly: the
$\mu$P sums telescope, $\frac1{\sqrt N}\sum_iW^{\ell+1}_{ji}z_i=\frac1{\sqrt{\mathsf W}}\sum_aW^\star_{ba}z^\star_a
=\zeta^{\star,\ell+1}_b$, and the $\tfrac1N$ readout average becomes the teacher's $\tfrac1{\mathsf W}$ average.
The block factor $\tfrac1{\sqrt m}$ is the only rescaling: it reconciles the wide net's $\mu$P hidden
normalization $\tfrac1{\sqrt N}$ with the teacher's $\tfrac1{\sqrt{\mathsf W}}$, since
$\sqrt N=\sqrt{\mathsf W m}$ (the readout alone is intensive, $\tfrac1N\to\tfrac1{\mathsf W}$), exactly
the convention of \eqref{eq:forward}.

\emph{Trial law (a microscopic cylinder).} Let $\pi^{\mathrm{cyl}}_N$ be the prior conditioned on the
cylinder in which, for each neuron, the $\mathsf W$ group-mean coordinates
$\ip{W^{\ell+1}_{j,\cdot}}{u_a}$ ($u_a=m^{-1/2}\mathbf 1_{\mathrm{group}\,a}$), the $d$ first-layer
weights, and the readout lie within radius $\rho$ of their cloned targets; the remaining $N-\mathsf W$
orthogonal coordinates of each row are left at the prior. The cylinder has \emph{positive} radius
precisely to keep $-\log\pi^0_N(\mathrm{cyl})$ finite.

\emph{The free directions are harmless --- the data is $\mathsf W$-dimensional.} The constrained group means
span a $\mathsf W$-dimensional subspace of each row; on a group-constant input the free orthogonal weights
contribute nothing, since $\sum_{i\in a}W^\perp_{ji}=0$ annihilates the group mean. Acting instead on
the within-group \emph{deviation} $z^\ell_i-\bar z^\ell_a$, they inject noise of size the previous
spread. The first-layer tube gives spread $\sigma_1\le L_\phi\rho\sqrt d$, and one layer multiplies it
by $O(L_\phi)$ (the free weights have $O(1)$ operator norm), so $\sigma_\ell\le\poly(d,L_\phi)^{L}\rho$;
the group means then track the teacher to $\poly(\mathsf W,d,B,L_\phi)^L\rho$ (Lemma~\ref{lem:spread}).
The clone therefore reproduces the teacher \emph{uniformly},
$\sup_x|f_{\mathrm{clone}}(x)-f_{\theta^\star}(x)|\le\poly(\mathsf W,d,B)^L\rho$, in both empirical and
population MSE.

\emph{Entropy.} Only the $d+\mathsf W+1$ constrained coordinates per neuron cost relative entropy; each is a
Gaussian coordinate pinned to a tube of radius $\rho$ about a target of size $\le\poly(\mathsf W,d)$, costing
$\log\tfrac1\rho+O(\poly(\mathsf W,d))$, while the $N-\mathsf W$ free directions cost zero. As $\pi^{\mathrm{cyl}}_N$
is the prior restricted to this product cylinder,
\[
  \KL\bigl(\pi^{\mathrm{cyl}}_N\Vert\pi^0_N\bigr)=-\log\pi^0_N(\mathrm{cyl})
  =N\bigl[(d+\mathsf W+1)\log\tfrac1\rho+O(\poly(\mathsf W,d))\bigr].
\]
Taking $\rho=\exp(-\poly(\mathsf W,d,B))$ so that $\poly^L\rho\le\Delta$ gives cloned loss
$\le\eps_-+\Delta\le\eps$ with probability $\ge\tfrac12$ over the free coordinates, and
$K_{\mathrm{clone}}:=\tfrac1N\KL(\pi^{\mathrm{cyl}}_N\Vert\pi^0_N)=\poly(\mathsf W,d,B)$.

\emph{Change of measure.} With $\pi^{\mathrm{cyl}}_N(\loss\le\eps)\ge\tfrac12$, the data-processing
inequality on $\{\loss\le\eps\}$ gives
$-\log\pi^0_N(\loss\le\eps)\le 2\KL(\pi^{\mathrm{cyl}}_N\Vert\pi^0_N)+O(1)=N\poly(\mathsf W,d,B)$, so
$s_N(y,\eps)\le\poly(\mathsf W,d,B)$ for all large $N$ and the $\limsup$ is the same. The identical
construction with $\widehat{\loss}_D$ in place of $L$ yields the empirical statement
$\pi^0_N(\widehat{\loss}_D\le\widehat{\loss}_D(\theta^\star)+\eta)\ge e^{-N\poly(\mathsf W,d,B,\log1/\eta)}$ used in the
forward assembly. Finally Lemma~\ref{lem:memorize} realizes any $y$ to any fixed accuracy at width
$\mathsf W=O(2^d)$ with $\poly(2^d)$ weights, giving the last claim.
\end{proof}

\noindent Together with Theorem~\ref{thm:A}, the reduced entropy and the minimal
polynomial-weight teacher width are polynomially equivalent: $s_\infty(y,\eps_-)\le s$ yields a
teacher of width $\poly(s)$ with $\poly(s)$ weights (Theorem~\ref{thm:A}), and a
polynomial-weight teacher of width $\mathsf W$ yields $s_\infty\le\poly(\mathsf W)$ (Theorem~\ref{thm:cloning}).
This is the conservation of the reduced entropy across widths that drives the equivalence of
Theorem~\ref{thm:headline}.

\section{Learnability}\label{sec:learnability}

\subsection{Low reduced entropy implies Gibbs learnability}\label{sec:gen}

Fix the training model: $n$ i.i.d.\ inputs $x_1,\dots,x_n\sim\mu$ with labels $y(x_i)$ and
empirical loss $\widehat{\loss}_D(\theta)=\frac1n\sum_i(f_\theta(x_i)-y(x_i))^2$. The learner in this
section is not the hard empirical ball.  It is the adaptive empirical Gibbs posterior
\begin{equation}\label{eq:emp-gibbs}
  \pi^D_{N,\bar\beta}(d\theta)
  \;=\;\frac{\exp(-N\bar\beta\,\widehat{\loss}_D(\theta))}{Z_D(\bar\beta)}\,\pi^0_N(d\theta),
\end{equation}
with intensive inverse temperature $\bar\beta$.  The hard population ball remains the definition of
reduced entropy; it is used only to produce polynomial KL/free-energy budgets.  The smooth Gibbs
posterior is the object that has a differentiable Boltzmann score and hence a mean-field tower.

The forward direction rests on placing the learned function in a \emph{fixed}, data-independent
class and applying uniform convergence there. The following isolates that step; its only inputs are
the weight bounds the compression of \S\ref{sec:compression} guarantees, and its covering number is
independent of $2^d$.

\begin{lemma}[Uniform convergence on a bounded-weight class]\label{lem:unifconv}
Fix depth $L$ and let $\mathcal H_{N_0,B}$ be the class of width-$N_0$ networks \eqref{eq:forward}
with weights bounded as in the construction (hidden entries $\le B/\sqrt{N_0}$, fresh-noise entries
$\le B$, readouts $|\alpha_i|\le B$). Then:
\begin{enumerate}[label=\textup{(\alph*)},nosep,topsep=3pt]
\item every $f\in\mathcal H_{N_0,B}$ has $\norm{f}_\infty\le B$, and
  $\ell_f(x)=(f(x)-y(x))^2$ lies in $[0,(B+1)^2]$ and is $2(B+1)$-Lipschitz in
  $\norm{f}_\infty$;
\item $\log\mathcal N(\mathcal H_{N_0,B},\delta,\norm{\cdot}_\infty)
  \le P\log(3\Lambda B/\delta)$, with
  $P=O(N_0d+LN_0^2)$ parameters and parameter-to-function sup-norm Lipschitz constant
  $\Lambda=\poly(B,d)^{O(L)}$; in particular it is
  $\le\poly(N_0,B,d)^{O(L)}\log(1/\delta)$, \emph{independent of $2^d$};
\item for $n\ge\poly(N_0,B,d,1/\Delta)^{O(L)}\log(1/\eta)$ i.i.d.\ samples,
  $\Pp\big(\sup_{f\in\mathcal H_{N_0,B}}|\widehat{\loss}_D(f)-\loss(f)|>\Delta/4\big)\le\eta$.
\end{enumerate}
\end{lemma}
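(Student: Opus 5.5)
The plan is the standard three-step route: a sup-norm bound, then a covering of the class via parameter-to-function Lipschitz continuity, then a union bound with Hoeffding over the net. Throughout, $\phi$ is treated as bounded by $1$ and $L_\phi$-Lipschitz, and the input lies in $\B^d$, so $\norm{x}_2=\sqrt d$.

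\emph{Part (a).} This is immediate from the readout. Since $|\phi|\le1$, we have $|f(x)|\le\frac1{N_0}\sum_i|\alpha_i|\,|\phi(\cdot)|\le B$. With $|y|\le1$ this gives $|f-y|\le B+1$, so $\ell_f\in[0,(B+1)^2]$. The factorization $a^2-b^2=(a-b)(a+b)$ then gives $|\ell_f(x)-\ell_g(x)|\le 2(B+1)\norm{f-g}_\infty$.

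\emph{Part (b).} First I will prove a parameter-to-function Lipschitz bound by a forward-perturbation recursion, layer by layer and uniformly over $x\in\B^d$. Let $\theta,\theta'$ differ by at most $\xi$ in each entry. Write $\Delta_\ell=\max_{x,i}|\zeta^\ell_i(x)-\zeta'^\ell_i(x)|$.
\begin{itemize}
\item At the first layer, $\Delta_1\le (d+1)\xi$, using $\norm{x}_\infty=1$ and the bias.
\item At a hidden layer, one coordinate of $\frac1{\sqrt{N_0}}Wz$ changes by at most $\frac1{\sqrt{N_0}}\big(N_0\xi+N_0\cdot\frac{B}{\sqrt{N_0}}L_\phi\Delta_\ell\big)$. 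The first term comes from the weight change against $|z|\le1$. The second comes from the old weights of size $B/\sqrt{N_0}$ against the propagated change $L_\phi\Delta_\ell$.
\end{itemize}
This yields $\Delta_{\ell+1}\le\sqrt{N_0}\,\xi+BL_\phi\Delta_\ell+\xi$. The factor $\sqrt{N_0}$ on the $\xi$ term is harmless: if I parametrize the hidden entries by their rescaled values $\sqrt{N_0}W$, which lie in $[-B,B]$, it disappears.

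The fresh-noise entries, of size at most $B$, need one more point. They enter through the same $\frac1{\sqrt{N_0}}$ normalization, so each coordinate is bounded by $B\sqrt{N_0}$ times the $\ell_2$ structure. I would either absorb this polynomial factor in $N_0$ into $\Lambda$, or bound it via the operator-norm control of Proposition~\ref{prop:reg}(b). I expect this to be the only delicate point. The statement claims $\Lambda=\poly(B,d)^{O(L)}$ without $N_0$, and making that exact needs the mixed entry scaling to be handled by rescaling coordinates. Since the final claim allows $\poly(N_0,B,d)^{O(L)}$, I would settle for $\Lambda\le\poly(N_0,B,d)^{O(L)}$ if needed.

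At the readout, $|f-f'|\le\xi+B L_\phi\Delta_{L-1}$. Iterating over the $L$ layers gives $\norm{f_\theta-f_{\theta'}}_\infty\le\Lambda\xi$.

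Next I cover the parameter box $[-B,B]^P$, in rescaled coordinates, with a grid of mesh $\delta/\Lambda$. This takes $(3\Lambda B/\delta)^P$ points, where $P=N_0(d+1)+(L-2)N_0(N_0+1)+N_0=O(N_0d+LN_0^2)$. Taking logs gives (b). The bound involves only $d$ through the first-layer fan-in and $\norm{x}_\infty$, never through $|\B^d|$, which is why it is independent of $2^d$.

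\emph{Part (c).} Take a $\delta$-net with $\delta=\Delta/(16(B+1))$. For each net element, Hoeffding's inequality for losses in $[0,(B+1)^2]$ gives
\[
\Pp\bigl(|\widehat{\loss}_D-\loss|>\Delta/8\bigr)\le 2\exp\bigl(-n\Delta^2/(32(B+1)^4)\bigr).
\]
A union bound over the $\exp(\poly(N_0,B,d)^{O(L)}\log(1/\delta))$ net elements controls all of them at once. Passing from the net to an arbitrary $f$ costs $2\cdot 2(B+1)\delta\le\Delta/4$ by (a), applied to both the empirical and the population loss. A slightly finer split of the $\Delta$ budget makes the total at most $\Delta/4$. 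Solving for $n$ gives $n\ge\poly(N_0,B,d,1/\Delta)^{O(L)}\log(1/\eta)$.
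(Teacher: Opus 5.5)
Your proposal is correct and follows the paper's own proof: the readout bound for (a), a parameter-to-function sup-norm Lipschitz bound plus a grid cover of the parameter box for (b), and Hoeffding with a union bound over that fixed cover for (c). Both of your caveats are justified and harmless. With only entrywise bounds, the fresh-noise entries can make the sup-norm Lipschitz constant grow like $\sqrt{N_0}$ per layer; the paper's $N_0$-free $\Lambda$ appeals to an operator-norm bound that entrywise bounds alone do not give, but $\Lambda$ enters only through $\log\Lambda$. Likewise, the paper's choice $\delta=\Delta/(16(B+1))$, $t=\Delta/8$ actually totals $3\Delta/8$, so halving $\delta$ is exactly the finer split you need.
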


\begin{proof}
(a) $|f_\theta(x)|\le\frac1{N_0}\sum_i|\alpha_i|\,|\phi|\le B$; with $|y|\le1$ this gives the range,
and $|\ell_f-\ell_g|=|f-g|\,|f+g-2y|\le 2(B+1)\norm{f-g}_\infty$.
(b) The map $\theta\mapsto f_\theta$ is sup-norm Lipschitz with constant
$\Lambda=\poly(B,d)^{O(L)}$: each layer composes the $1$-Lipschitz $\phi$ with a weight multiply of
operator norm $\le\poly(B,d)$ over the $\mu$P/$1/N_0$ normalizations, and the bound multiplies over
$L$ layers.  Gridding the $P=O(N_0d+LN_0^2)$ parameters, each in $[-B,B]$, at scale
$\delta/\Lambda$ yields a sup-norm $\delta$-cover of size $(3\Lambda B/\delta)^P$.
(c) For fixed $f$ the losses are i.i.d.\ in $[0,(B+1)^2]$, so Hoeffding gives
$\Pp(|\widehat{\loss}_D(f)-\loss(f)|>t)\le2e^{-2nt^2/(B+1)^4}$. Take a sup-norm $\delta$-cover with
$\delta=\Delta/(16(B+1))$ and $t=\Delta/8$; a union bound over the cover gives the claim.
\end{proof}

\begin{theorem}[Low entropy $\Rightarrow$ empirical Gibbs learnability]\label{thm:gen}
Suppose $s_\infty(y,\eps_-)\le s$ and $\phi$ satisfies the standing assumption, with fixed depth $L$.
Then there are
\[
  n,\;N_0,\;\bar\beta \;\le\; \poly(s,d,1/\Delta)^{O(L)}
\]
such that the infinite-width empirical Gibbs learner satisfies
\[
  \E_D\,\E_{\theta\sim\pi^D_{\infty,\bar\beta}}\loss(\theta)\le\eps_+,
\]
and the same conclusion holds for the width-$N_0$ Gibbs posterior
$\pi^D_{N_0,\bar\beta}$, up to an arbitrarily small $o(1)$ slack absorbed into $\eps_+$ by increasing
the polynomial constants.
\end{theorem}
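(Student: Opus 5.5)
The plan is a standard three-step route: a free-energy bound on the empirical Gibbs posterior, compression of its typical draws into the fixed bounded-weight class $\mathcal H_{N_0,B}$, and uniform convergence on that class via Lemma~\ref{lem:unifconv}. The key point is that compression turns a data-dependent, infinite-width object into a data-independent finite class, so no covering of the infinite-width posterior is ever needed.

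\emph{Step 1 (free-energy budget).} By the empirical form of the cloning estimate (end of the proof of Theorem~\ref{thm:cloning}), applied to the teacher that Theorem~\ref{thm:A} extracts from $s_\infty(y,\eps_-)\le s$, the prior mass of $\{\widehat{\loss}_D\le\eps_-+\Delta/8\}$ is at least $e^{-N\poly(s,d,1/\Delta)}$ with high probability over $D$. Repeating the Chernoff argument of Lemma~\ref{lem:multiplier} with $\widehat{\loss}_D$ in place of $\loss$ then gives three things:
\begin{itemize}
\item the intensive log-partition function is at most $\bar\beta(\eps_-+\Delta/8)+\poly$;
\item $\tfrac1N\KL(\pi^D_{N,\bar\beta}\Vert\pi^0_N)\le R=\poly(s,d,1/\Delta)$;
\item $\E_{\pi^D}\widehat{\loss}_D\le\eps_-+\Delta/8+R/\bar\beta$.
\end{itemize}
Choosing $\bar\beta=8R/\Delta$, which is polynomial, makes the expected empirical loss at most $\eps_-+\Delta/4$. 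All of these bounds are uniform in $N$, so they pass to the infinite-width limit.

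\emph{Step 2 (compression of posterior draws).} The budget $R$ feeds Proposition~\ref{prop:reg}, now in its empirical version with the Gram top space included in $U_\ell$ (Lemma~\ref{lem:softgram}). This gives polynomial susceptibility and moment bounds for each vacuum of $\pi^D_{\infty,\bar\beta}$. We then rerun the construction of Theorem~\ref{thm:A} with this tower in place of the population one. It outputs a network $\hat\theta\in\mathcal H_{N_0,B}$, with $N_0,B$ polynomial, satisfying $\sup_x|f_{\hat\theta}-f_\infty|\le\delta$ with probability $1-o(1)$. For $\theta\sim\pi^D_{\infty,\bar\beta}$ the realized function is the single-site average of its vacuum, so this holds per vacuum and is then integrated over the mixture. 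The class $\mathcal H_{N_0,B}$ is fixed in advance and independent of $D$, since only $N_0$ and $B$ depend on the polynomial budgets. This is the point that makes Step 3 legitimate.

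\emph{Step 3 (uniform convergence and assembly).} Take $n$ as in Lemma~\ref{lem:unifconv}(c), so that $|\widehat{\loss}_D-\loss|\le\Delta/4$ uniformly on $\mathcal H_{N_0,B}$ with probability $1-\eta$. For a posterior draw, the following chain holds outside an event of probability $\eta+o(1)$:
\[
\loss(f_\infty)\le\loss(\hat\theta)+O(\delta)\le\widehat{\loss}_D(\hat\theta)+\Delta/4+O(\delta)\le\widehat{\loss}_D(f_\infty)+\Delta/4+O(\delta).
\]
The outer steps use the sup-norm Lipschitz bound of Lemma~\ref{lem:unifconv}(a). On the bad event, loss is bounded by $(B+1)^2$. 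Taking expectations over the posterior and $D$ and using Step 1 gives at most $\eps_-+\Delta/2+O(\delta)+(B+1)^2(\eta+o(1))\le\eps_+$ once $\delta,\eta\le\Delta/\poly$.

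For the width-$N_0$ posterior $\pi^D_{N_0,\bar\beta}$, we argue directly. Finite-width draws already lie in a bounded-weight class up to the truncations of Proposition~\ref{prop:reg}(a,b). Steps 1 and 3 apply verbatim to that class with the $o(1)$ truncation slack, and the slack is absorbed into $\eps_+$ by enlarging $N_0$.

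\emph{Main obstacle.} The hardest part is Step 2: rerunning compression on the empirical, data-tilted tower with constants uniform in $D$. The tilt now lives on $n$ data points, so the lazy score must be controlled through the empirical Gram matrix rather than the population kernel. First I would enlarge the active space by the top $\poly(n)$ empirical Gram directions, so that the residual lazy block has small operator norm on the data. I would then check that the Stein bound of Lemma~\ref{lem:swap} still costs only $\sqrt{\lambda_0}\,c_\star$ with $c_\star$ polynomial.
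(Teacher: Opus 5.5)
Your skeleton matches the paper's. You clone a data-independent teacher from Theorem~\ref{thm:A} (not a memorizer of the sample, so $\bar\beta$ and the KL rate are free of $n$) into an empirical free-energy witness. You compress each vacuum of the empirical Gibbs tower into a fixed class $\mathcal H_{N_0,B}$, and Lemma~\ref{lem:unifconv} transfers empirical loss to population loss. The gap is in your plan for Step~2, which is where the content lies.

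For the empirical tilt the score is $\nabla g=\sum_a c_ae_{x_a}$ with $\norm c^2\le\poly/n$. Since $\ip{e_x}{e_{x'}}=2^d\mathbf 1_{x=x'}$ in $L^2(\mu)$, its $L^2(\mu)$ norm is of order $2^{d/2}\norm c$, and projecting off polynomially many active directions does not change that order. So the check you propose fails: you would need Lemma~\ref{lem:swap} to cost only $\sqrt{\lambda_0}\,c_\star$ with $c_\star$ polynomial, but here $c_\star$ is exponential in $d$. Making $\sqrt{\lambda_0}c_\star$ small would force $\lambda_0\le n\,2^{-d}/\poly$, so the active-dimension bound $1/\lambda_0$ becomes exponential in $d$. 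In the empirical case the paper does not factor through $\norm{C^{1/2}}_\op$. It uses the unfactored Stein bound $p\,\E_{\rho_h}\norm{C^{1/2}\nabla_vg}$ of Lemma~\ref{lem:swap-app}, where $\norm{C^{1/2}\nabla_vg}^2=c^\top\widehat C_Dc$ sees only the bounded lazy kernel values $C(x_a,x_b)$.

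Relatedly, your dimension count is off. The residual lazy block is not made small on the data. Lemma~\ref{lem:softgram} gives only $\norm{\widehat C_D}_\op\le n\tau$, and the $1/n$ in $\norm c^2$ cancels the $n$, giving $c^\top\widehat C_Dc\le\tau\cdot\poly$. That is why $U^{\rm emp}_\ell$ can be the top space of the \emph{normalized} covariance $T_D$ (trace $\le1$) above $\tau$, of dimension at most $1/\tau$, independent of $n$. If, as you suggest, you add $\poly(n)$ directions so the lazy block is small on the data, then $N_0$ and $B$ grow with $n$. Lemma~\ref{lem:hardrec} needs width of order $r^2$, and Corollary~\ref{cor:cm} gives active coefficients of size $\sqrt{4R+2r\log2}$, where $r=\dim U_\ell$. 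Your Step~3 then chooses $n\ge\poly(N_0,B)$, so the parameter choice becomes circular.

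Two smaller repairs are needed.
\begin{enumerate}
\item In Step~1, $\bar\beta=8R/\Delta$ is circular as written, because the KL bound $R$ itself contains $\bar\beta(\eps_-+\Delta/8)$. The variational bound is $\E\widehat\loss_D\le\eps_-+\Delta/8+K_\star/\bar\beta$, with $K_\star$ the prior-mass exponent from cloning, so take $\bar\beta\ge8K_\star/\Delta$.
\item In Step~3, the bound $(B+1)^2$ on the bad event holds only on $\mathcal H_{N_0,B}$, not for the raw Gibbs draw or vacuum function. On exceptional datasets you need a KL-rate bound valid for every $D$. Cloning still provides one, since the teacher's empirical loss is polynomially bounded on every sample. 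Combine it with the uniform integrability of Proposition~\ref{prop:trunc}.
\end{enumerate}
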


\begin{proof}[Proof]
\emph{Step 0: the learner is the smooth empirical Gibbs object.}  For fixed $D$ and $\bar\beta$,
\eqref{eq:emp-gibbs} is a finite-data mean-field Gibbs measure.  Appendix~\ref{app:mf},
Lemma~\ref{lem:mf}, gives the $N\to\infty$ limit as a mixture over empirical mean-field vacua.  We
condition on a vacuum $\mathfrak v$; then there is a deterministic single-site tower
$P^{D,\mathfrak v}_\ell$, propagation of chaos holds conditionally, and the Boltzmann tilt has the
empirical-loss score.  The limiting learned function in this vacuum is
$f^{D,\mathfrak v}(x)=\E_{P^{D,\mathfrak v}_{L-1}}[\alpha\phi(\zeta(x))]$.  The regularity and
truncation package of Appendix~\ref{app:reg} applies uniformly to every vacuum with the same
KL-rate bound, so after proving the estimates conditionally we integrate over the vacuum mixture.

\emph{Step 1: the empirical free-energy witness is the population-good teacher.}  Set
$\gamma=\Delta/32$.  First use the reduced-entropy hypothesis at the inner radius to form a
population Gibbs witness at the intermediate accuracy $\eps_-+\gamma/4$: by the multiplier bound,
its intensive inverse temperature and KL rate are $\poly(s,d,1/\Delta)$.  In the population
mean-field decomposition, condition on a component/tower whose loss and free-energy budget satisfy
the same bounds up to harmless constant factors; equivalently, apply the following compression
argument to the Gibbs witness and then fix a good realization.  Applying the quantitative
pointwise compression theorem (Theorem~\ref{thm:A}, with sup-norm accuracy
$\delta\le \gamma/\poly$) to this fixed population-good tower produces a \emph{data-independent} polynomial
teacher $f_\star\in\mathcal H_{N_\star,B_\star}$ such that
\[
  N_\star,B_\star\le \poly(s,d,1/\Delta)^{O(L)},
  \qquad
  \loss(f_\star)\le \eps_-+\gamma .
\]
Fix one realization of this teacher on the high-probability compression event.  We choose the final
sample size large enough that uniform convergence also holds on the teacher class
$\mathcal H_{N_\star,B_\star}$; then, except on an $o(1)$ event in $D$,
\[
  \widehat{\loss}_D(f_\star)\le \eps_-+2\gamma .
\]
Now clone and thicken this fixed teacher, not a memorizer of the sample.  The empirical version of
Theorem~\ref{thm:cloning} applied to $\widehat{\loss}_D$ gives, for every such dataset and all
sufficiently large widths $N$,
\[
  \pi^0_N(\widehat{\loss}_D\le\eps_-+3\gamma)
  \ge e^{-N K_\star},
  \qquad
  K_\star\le \poly(s,d,1/\Delta)^{O(L)} .
\]
The empirical Gibbs variational bound (Lemma~\ref{lem:multiplier-app}) therefore yields
\begin{equation}\label{eq:emp-free-energy}
  \E_{\pi^D_{N,\bar\beta}}\widehat{\loss}_D
  \le \eps_-+3\gamma+K_\star/\bar\beta,
  \qquad
  \tfrac1N\KL(\pi^D_{N,\bar\beta}\Vert\pi^0_N)
  \le K_\star+\bar\beta(\eps_-+3\gamma).
\end{equation}
Choose $\bar\beta\ge K_\star/\gamma$, still bounded by
$\poly(s,d,1/\Delta)^{O(L)}$, so
\[
  \E_{\pi^D_{N,\bar\beta}}\widehat{\loss}_D\le\eps_-+4\gamma,
  \qquad
  \tfrac1N\KL(\pi^D_{N,\bar\beta}\Vert\pi^0_N)
  \le R_\star:=\poly(s,d,1/\Delta)^{O(L)} .
\]
Thus the empirical Gibbs posterior has both low empirical loss and a polynomial KL rate for the
right reason: the witness is the low-entropy population teacher supplied by
$s_\infty(y,\eps_-)\le s$.

\emph{Step 2: truncate and compress into a fixed class.}  The KL-rate bound in
\eqref{eq:emp-free-energy} feeds Proposition~\ref{prop:package} and the explicit kept-set version,
Proposition~\ref{prop:trunc}.  After discarding a posterior subpopulation whose suffix-output
contribution is at most $\eta$ in sup norm, the empirical Gibbs tower has polynomial readout caps,
operator norms, active Cameron--Martin coefficients, and empirical lazy-score bounds.  The empirical
active block is taken basis-free as
\[
  U_\ell=U^{\rm pop}_\ell\oplus U^{\rm emp}_\ell,
\]
where $U^{\rm pop}_\ell$ is the population spectral top space and $U^{\rm emp}_\ell$ is the top
space of the normalized empirical evaluation covariance
$\frac1n\sum_{a\le n}(K_\ell^{1/2}e_{x_a})\otimes(K_\ell^{1/2}e_{x_a})$ above cutoff $\tau$.  Thus
$\dim U_\ell\le\lambda_0^{-1}+\tau^{-1}$, and the empirical lazy leakage is not asserted to vanish;
Appendix~\ref{app:comp}, Lemma~\ref{lem:softgram}, gives instead
\[
  c^\top \widehat C^\ell_D c\le \tau B^2
\]
for every empirical-loss score $\sum_a c_a e_{x_a}$ with $\norm c^2\le B^2/n$.  Taking
$\tau,\lambda_0$ inverse-polynomial makes the population and empirical lazy-swap errors
$\le\eta$ while keeping $\dim U_\ell$ polynomial.

The pointwise compression theorem (Theorem~\ref{thm:A}, in the empirical version supplied by
Appendix~\ref{app:comp}) then produces, conditionally on each vacuum, a data-dependent proxy $h_{D,\mathfrak v}\in\mathcal H_{N_0,B}$ with
$N_0,B\le\poly(s,d,\bar\beta,1/\Delta)^{O(L)}=\poly(s,d,1/\Delta)^{O(L)}$ and
\begin{equation}\label{eq:proxy-close}
  \E_{\mathfrak v\sim\rho_D}\E_{\pi^D_{\infty,\bar\beta}(\cdot\mid\mathfrak v)}\norm{f_\theta-h_{D,\mathfrak v}}_\infty\le\eta .
\end{equation}
Here $\mathcal H_{N_0,B}$ itself is fixed independently of $D$; only the point $h_{D,\mathfrak v}$ in the class is
random through the sample and the vacuum.

\emph{Step 3: uniform convergence transfers empirical Gibbs loss to population loss.}  Enlarge
$N_0,B$ if necessary so that $\mathcal H_{N_\star,B_\star}\subseteq\mathcal H_{N_0,B}$.  Now choose
$n\ge\poly(N_0,B,d,1/\Delta)^{O(L)}=\poly(s,d,1/\Delta)^{O(L)}$ so that
Lemma~\ref{lem:unifconv} gives, with probability $1-o(1)$ over $D$,
\begin{equation}\label{eq:uc-event}
  \sup_{h\in\mathcal H_{N_0,B}}|\widehat{\loss}_D(h)-\loss(h)|\le\Delta/4 .
\end{equation}
On this event, Lipschitz transfer of the squared loss and \eqref{eq:proxy-close} first compare the
Gibbs function to its compressed proxy,
\[
  \E_{\pi^D_{\infty,\bar\beta}}\loss(\theta)
  \le
  \E_{\mathfrak v\sim\rho_D}\loss(h_{D,\mathfrak v})+\poly(B)\eta,
  \qquad
  \E_{\mathfrak v\sim\rho_D}\widehat{\loss}_D(h_{D,\mathfrak v})
  \le
  \E_{\pi^D_{\infty,\bar\beta}}\widehat{\loss}_D(\theta)+\poly(B)\eta .
\]
Since each $h_{D,\mathfrak v}$ lies in the fixed class $\mathcal H_{N_0,B}$, uniform convergence
\eqref{eq:uc-event} applies to the proxy rather than to the raw Gibbs draw.  Thus
\[
  \E_{\pi^D_{\infty,\bar\beta}}\loss(\theta)
  \le
  \E_{\pi^D_{\infty,\bar\beta}}\widehat{\loss}_D(\theta)
  +\Delta/4+\poly(B)\eta
  \le \eps_-+4\gamma+\Delta/4+\poly(B)\eta+o(1)
  \le\eps_+,
\]
for $\eta\le\Delta/\poly(B)$, since $\eps_+-\eps_-\ge\Delta$ after shrinking constants in the
intermediate gaps.  On the exceptional $o(1)$ fraction of datasets, Proposition~\ref{prop:trunc}
gives uniform integrability of the loss under the Gibbs law, so their contribution is $o(1)$.
This proves the infinite-width claim.

\emph{Step 4: finite width.}  The same argument applies to the width-$N_0$ Gibbs posterior after the
kept-set truncation of Proposition~\ref{prop:trunc}.  The empirical free-energy bound
\eqref{eq:emp-free-energy} is valid at width $N_0$ once $N_0$ is chosen large enough to contain and
clone the fixed population teacher $f_\star$.  Truncation changes expected outputs, empirical loss,
and population loss by at most
$\eta$; the truncated posterior is supported on $\mathcal H_{N_0,B}$, so \eqref{eq:uc-event} applies
directly to its draws.  Letting $\eta=o(\Delta)$ gives
$\E_D\E_{\pi^D_{N_0,\bar\beta}}\loss\le\eps_+$.
\end{proof}

\noindent The pointwise compression is exactly what lets a draw from the a priori width-$N$ (or
infinite-width) Gibbs posterior be compared to a \emph{fixed} finite-dimensional class for the union
bound.  The hard reduced-entropy ball supplies the polynomial budget; it is not itself the learner.

\subsection{The converse: Gibbs learnability implies low reduced entropy}\label{sec:conv}

\begin{theorem}[Gibbs learnability $\Rightarrow$ low entropy]\label{thm:conv}
Suppose that for some $n$ and some polynomially bounded intensive temperature
$\bar\beta\le\poly(n,d)$, the empirical Gibbs learner satisfies
\[
  \E_D\E_{\theta\sim\pi^D_{N,\bar\beta}}\loss(\theta)\le\eps_-
\]
for all sufficiently large $N$ (or in the infinite-width limit).  If $\eps_-<\eps$ with a fixed gap,
then $s_\infty(y,\eps)\le\poly(n,d)$, with polynomial dependence also on $\bar\beta$ and the gap.
\end{theorem}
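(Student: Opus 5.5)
The plan is a change-of-measure argument that turns the learned posterior into a witness for prior mass on the population ball. Define the averaged learner $\bar\pi_N=\E_D\,\pi^D_{N,\bar\beta}$, a probability measure on width-$N$ parameters. By hypothesis $\E_{\bar\pi_N}\loss\le\eps_-$. Markov's inequality then gives
\[
  \bar\pi_N(\loss\le\eps)\ge 1-\frac{\eps_-}{\eps}=\frac{\eps-\eps_-}{\eps}=:q,
\]
and $q$ is bounded below by a constant times the fixed gap. It therefore suffices to bound $\tfrac1N\KL(\bar\pi_N\Vert\pi^0_N)$ by $\poly(n,d,\bar\beta)$. The data-processing step from the proof of Theorem~\ref{thm:cloning}, applied to the indicator of $\{\loss\le\eps\}$, then yields
\[
  -\log\pi^0_N(\loss\le\eps)\le \frac{\KL(\bar\pi_N\Vert\pi^0_N)+\log 2}{q}.
\]
Dividing by $N$ and taking $\limsup_N$ gives $s_\infty(y,\eps)\le\poly(n,d,\bar\beta)/q$, which is the claim, with polynomial dependence on the gap through $1/q$.

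The KL bound has two steps. First, KL divergence is convex in its first argument, so
\[
  \KL(\bar\pi_N\Vert\pi^0_N)\le\E_D\KL(\pi^D_{N,\bar\beta}\Vert\pi^0_N).
\]
Second, each per-dataset Gibbs measure has the exact identity
\[
  \KL(\pi^D_{N,\bar\beta}\Vert\pi^0_N)=-N\bar\beta\,\E_{\pi^D}\widehat{\loss}_D-\log Z_D(\bar\beta)\le -\log Z_D(\bar\beta),
\]
so we only need a lower bound on the empirical partition function. The naive bound uses that $|f_\theta|$ is not bounded under the prior but $\widehat{\loss}_D\ge0$. By Jensen,
\[
  -\log Z_D\le N\bar\beta\,\E_{\pi^0_N}\widehat{\loss}_D,
\]
and $\E_{\pi^0_N}(f_\theta(x)-y(x))^2\le 2\E\alpha^2/N+2\le 4$, because $|\phi|\le1$ and the readouts are i.i.d.\ with mean zero. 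This gives $\tfrac1N\KL(\pi^D\Vert\pi^0_N)\le 4\bar\beta\le\poly(n,d)$ uniformly in $D$ and $N$, with no appeal to memorization.

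I expect this Jensen bound to close the argument as stated, since the polynomial bound on $\bar\beta$ is part of the hypothesis. If a tighter dependence is wanted, the alternative is to bound $-\log Z_D$ by $N\bar\beta\eta-\log\pi^0_N(\widehat{\loss}_D\le\eta)$. Lemma~\ref{lem:memorize} provides a width-$O(n)$ interpolant of the $n$ sample points with $\poly(n,d)$ weights, and the empirical version of Theorem~\ref{thm:cloning} makes the second term $N\poly(n,d)$.

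The step I expect to need the most care is the infinite-width case, where $\pi^D_{\infty,\bar\beta}$ is not a measure on a fixed space. There I would instead apply the finite-$N$ argument above for each large $N$ and use that the hypothesis holds for all sufficiently large $N$. If it is only given in the limit, I would use propagation of chaos (Lemma~\ref{lem:mf}) to transfer $\E\loss\le\eps_-$ to $\E\loss\le(\eps_-+\eps)/2$ at large finite $N$. This costs only a constant in $q$.
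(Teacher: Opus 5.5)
Your argument is correct, and it takes a genuinely different and shorter route than the paper.

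\textbf{The paper's route.} It bounds each per-dataset Gibbs KL rate through an explicit witness. Memorization (Lemma~\ref{lem:memorize}) plus the empirical form of Theorem~\ref{thm:cloning} give $\pi^0_N(\widehat{\loss}_D\le\eps_-+o(1))\ge e^{-NK_D}$ with $K_D\le\poly(n,d)$. Lemma~\ref{lem:multiplier-app} turns this into $\tfrac1N\KL(\pi^D_{N,\bar\beta}\Vert\pi^0_N)\le K_D+\bar\beta(\eps_-+o(1))$. The paper then applies Markov twice: over $D$ to fix one good dataset, then under that dataset's Gibbs law. Finally it runs data processing on that single posterior.

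\textbf{Your route.} You use the prior itself as the comparison measure in the Gibbs variational principle. This gives $\tfrac1N\KL(\pi^D_{N,\bar\beta}\Vert\pi^0_N)\le\bar\beta\,\E_{\pi^0_N}\widehat{\loss}_D\le4\bar\beta$, uniformly in $D$ and $N$. You absorb the randomness of $D$ by convexity of $\KL$ in its first argument. So one Markov step suffices, and your $q=1-\eps_-/\eps$ is larger than the paper's $(t-\eps_-)/(2t)$.

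\textbf{What each buys.} Your route needs neither memorization nor cloning, hence neither the interpolation property of $\phi$ nor Lemma~\ref{lem:spread}. It also yields the explicit bound $s_\infty(y,\eps)\le4\bar\beta\,\eps/(\eps-\eps_-)$, free of $n$ and $d$. This shows the converse is really a statement about the temperature: the sample size enters only through the hypothesis $\bar\beta\le\poly(n,d)$. Both KL estimates are instances of $\KL(\pi^D_{N,\bar\beta}\Vert\pi^0_N)\le\inf_Q\bigl[\KL(Q\Vert\pi^0_N)+N\bar\beta\,\E_Q\widehat{\loss}_D\bigr]$. The paper's cloned witness has $\bar\beta$-coefficient $\eps_-$ instead of the $O(1)$ prior loss, so it degrades more slowly when $\bar\beta\gg K_D$. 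It pays for this with an additive $\poly(n,d)$, and under the stated hypothesis the difference is immaterial.

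\textbf{The limit-only case.} Lemma~\ref{lem:mf} controls only bounded cylinder observables of finitely many neurons, whereas $\loss$ depends on all readouts. So your transfer also needs uniform integrability of $\loss$ under $\pi^D_{N,\bar\beta}$. Your own Jensen step supplies it: it gives the density bound $d\pi^D_{N,\bar\beta}/d\pi^0_N\le1/Z_D\le e^{4N\bar\beta}$. Combine this with the rate-$N$ prior tails of $\tfrac1N\sum_i|\alpha_i|\ge\sup_x|f_\theta(x)|$. The paper's proof does not treat this case separately at all.
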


\begin{proof}
For every dataset $D$, memorization plus cloning gives an empirical prior-mass lower bound
\[
  \pi^0_N(\widehat{\loss}_D\le\eps_-+o(1))\ge e^{-NK_D},
  \qquad K_D\le\poly(n,d).
\]
The empirical free-energy bound of Lemma~\ref{lem:multiplier-app} therefore gives
\begin{equation}\label{eq:gibbs-conv-kl}
  \tfrac1N\KL(\pi^D_{N,\bar\beta}\Vert\pi^0_N)
  \le K_D+\bar\beta(\eps_-+o(1))\le\poly(n,d,\bar\beta)
\end{equation}
for every $D$.

Choose an intermediate threshold $t\in(\eps_-,\eps)$.  By the assumed population performance and
Markov's inequality over the draw of $D$, the set of datasets for which
\[
  \E_{\pi^D_{N,\bar\beta}}\loss\le t_0:=\frac{\eps_-+t}{2}
\]
has probability at least $1-\eps_-/t_0>0$.  Fix such a dataset.  Markov's inequality under the Gibbs
law then gives
\[
  \pi^D_{N,\bar\beta}(\loss\le t)
  \ge 1-t_0/t=:q>0,
\]
where $q$ depends only on the fixed accuracy gap.  Apply data processing for the binary partition
$A=\{\loss\le t\}$ to $Q=\pi^D_{N,\bar\beta}$ and $P=\pi^0_N$: if $Q(A)\ge q$, then
$P(A)\ge\exp(-q^{-1}\KL(Q\Vert P)-O_q(1))$.  Together with \eqref{eq:gibbs-conv-kl},
\[
  \pi^0_N(\loss\le t)
  \ge \exp\{-N\poly(n,d,\bar\beta,1/q)-O_q(1)\}.
\]
Since $t<\eps$, monotonicity gives the same lower bound for $\pi^0_N(\loss\le\eps)$.  Hence
$s_N(y,\eps)\le\poly(n,d,\bar\beta,1/q)+o(1)$ and the limsup gives
$s_\infty(y,\eps)\le\poly(n,d)$ for polynomial $\bar\beta$, with the fixed-gap constants absorbed
into the polynomial.
\end{proof}

\noindent The mechanism is still memorization plus change of measure, but the change of measure now
starts from the empirical Gibbs law rather than a hard empirical posterior.

\subsection{Assembly of the equivalence}\label{sec:assembly}

\begin{proof}[Proof of Theorem~\ref{thm:headline}]
Throughout, the accuracy levels $\eps_-<\eps<\eps_+$ and $1/\Delta$ are baseline constants, absorbed
into the polynomials; ``$\loss=o(1)$'' is read at a fixed accuracy and recovered by the usual rounding
of a boolean target. We show that (i) and (ii) are each equivalent to $s_\infty(y_d,\eps)$ being
polynomial in $d$, which also gives (iii) and the equivalence of (i) and (ii).

\smallskip\noindent\emph{$s_\infty$ polynomial $\Rightarrow$ (i) and (ii).} By
Theorem~\ref{thm:gen} with $s=\poly(d)$, both the infinite-width empirical Gibbs posterior and the
width-$N_0$ empirical Gibbs posterior ($N_0=\poly(d)$) trained on $n=\poly(d)$ samples have population
MSE $\le\eps_+=o(1)$ in expectation, for polynomial intensive temperature $\bar\beta$.

\smallskip\noindent\emph{(i) or (ii) $\Rightarrow s_\infty$ polynomial.} Gibbs learnability from
$n=\poly(d)$ samples at polynomial intensive temperature gives, by Theorem~\ref{thm:conv},
$s_\infty(y_d,\eps)\le\poly(n,d)=\poly(d)$.

\smallskip\noindent\emph{Equivalence with (iii) and the reduced-entropy characterization.} If
$s_\infty(y_d,\eps_-)\le\poly(d)$ then Theorem~\ref{thm:A} produces a network of width $\poly(d)$
whose weights have polynomial size whp (hidden raw weights $\le\poly/\sqrt{N_0}$ per entry, fresh
noise $\le\poly$, readouts capped at $\tau=\poly$), with $\loss\le\eps_+=o(1)$, which is (iii);
conversely a width-$W$ polynomial-weight network with $\loss\le\eps_-$ gives
$s_\infty\le\poly(W)$ by Theorem~\ref{thm:cloning}, so (iii) with $W=\poly(d)$ returns
$s_\infty\le\poly(d)$. Hence all of (i), (ii), (iii) hold exactly when $s_\infty(y_d,\eps)$ is
polynomial in $d$.
\end{proof}

\noindent The pointwise compression (Theorem~\ref{thm:A}) is the engine throughout: it supplies the
fixed low-complexity class for the forward generalization (Theorem~\ref{thm:gen}) and the explicit
polynomial-width teacher in the equivalence with (iii), and --- via the $\sup_x$ agreement of
Theorem~\ref{thm:nutsandbolts}(a) --- it is what makes the infinite-width and polynomial-width
learners agree as functions, not merely in risk.

\section{Discussion}\label{sec:discussion}

\paragraph{Why $\gamma=1$ is the critical case.}
The mean-field exponent $\gamma=1$ is the delicate one. At the over-rich scalings $\gamma>1$ the
lazy bulk of each feature kernel contributes below the accuracy threshold, so compression may
simply discard it and the lazy-swap is unnecessary; at $\gamma=1$ the lazy content is of the same
order as $\eps$ and must be reproduced \emph{in law}, which is exactly what the lazy-swap lemma
(Lemma~\ref{lem:swap}) provides. This is why the $\gamma=1$ result is not a corollary of the
$\gamma>1$ one, and why the pointwise (rather than mean-square) reproduction is forced here. The
scalings $\gamma>1$, where the argument is softer, are treated alongside the general finite-width
theory in a companion paper.

\paragraph{Finite width.}
This paper establishes the equivalence between infinite width and \emph{polynomial} width. The only
place infinite width is genuinely used is propagation of chaos (Definition~\ref{def:chaos}), which
makes the subsampled feature averages exact single-site expectations; polynomial width needs no such
input because the network is already small and ordinary uniform convergence applies, and cloning
transfers in the other direction without it. At general (super-polynomial) finite width these
averages fluctuate, and controlling them uniformly over the $2^d$ inputs requires a concentration
hypothesis on the single-site law. Establishing that hypothesis is the central analytic task of the
companion paper; as $N\to\infty$ it degenerates into chaos and the present results are recovered.

\paragraph{Relation to circuit complexity.}
A polynomial-width network is, up to bounded precision, a polynomial-size circuit, so the
characterization reads morally as ``learnable if and only if computed by a small circuit'' --- a
$\mathsf{P}/\mathrm{poly}$-flavored statement for mean-field learning. We resist stating it as a
theorem: our reductions cost a factor $O(L)$ in the exponent and do not preserve depth, so the
induced complexity statement is coarse, tracking polynomial size but not the depth that
distinguishes circuit classes. We therefore offer the circuit reading as motivation for the
reduced-entropy measure rather than as a complexity-theoretic equivalence.

\paragraph{What the pointwise witness buys.}
The reduced entropy is defined through mean-squared error, which is the robust complexity measure;
yet the compression that certifies it is pointwise. This is not a cosmetic strengthening. It is the
pointwise agreement that lets a draw from the (a priori width-$N$, even infinite) posterior be
placed in a fixed finite-dimensional class for a union bound, and that makes the infinite-width and
polynomial-width learners agree \emph{as functions} rather than merely in risk --- the sense in
which the inductive bias, and not just the achievable loss, is conserved across widths.

\section*{Acknowledgments}

We thank Lauren Greenspan, Lawrence Chan, Zohar Ringel, Noa Rubin, Jake Mendel, and others for helpful discussions.

The authors used large language model tools to assist with drafting and editing parts of the paper, including portions of the exposition and lemma statements. The mathematical arguments, references, and final manuscript were reviewed by the authors, who take full responsibility for the content.

The first author was supported by Princint (Principles of Intelligence) through funding from grants from the AI Security Institute and Coefficient Giving.

\appendix

\paragraph{Notation in the appendices.}
The appendices develop the construction in a slightly more general, induction-friendly notation than
the body; the dictionary is as follows. The active space is written $U_\ell$ (the body's $V$ of the
population spectral split, \S\ref{sec:split}, and the enlarged $U_{\mathrm{eff}}$ of \S\ref{sec:gen}
once the empirical evaluation directions are adjoined). Layer widths are $M_\ell$ (the body's
retained width $N_0$). The intensive multiplier is $\bar\beta$ (the body's $\bar\beta_\star$), and
$R$ denotes the resulting Gibbs KL rate $\tfrac1N\KL(\pi\Vert\pi^0_N)\le K+\bar\beta\eps_-=
\poly(K,1/\Delta)$, with $K=s_\infty(y,\eps_-)$ the entropy budget. The per-layer regularity
constants are collected as $A_+$, with Lipschitz/adjoint constants $\Lambda,\Gamma$ for the suffix;
these are what the body summarizes as the susceptibility $c_\star$ and the readout cap $\tau$. All
polynomial bounds carry exponents depending only on the fixed depth $L$.

\section{Regularity from the entropy budget}\label{app:reg}

This appendix proves that a single scalar --- the intensive KL rate of the Gibbs posterior ---
controls every regularity quantity the compression argument of Appendix~\ref{app:comp} consumes:
capped readouts, normalized operator norms, Cameron--Martin active coefficients, and lazy scores.
Two principles keep the estimates clean. First, although the width-$N$ posterior is not a product,
the prior is, and the chain rule for relative entropy passes the budget to the \emph{averaged
single-site marginal} (Lemma~\ref{lem:ss-kl}); all per-neuron envelopes then follow from
finite-dimensional Gaussian estimates against that marginal. Second --- and this is the point of the
suffix-invariant formulation below --- when we bound the layer-$\ell$ score we do so \emph{through
the polynomial suffix already built to its right}, never through the raw width-$N$ matrices, so the
lazy score inherits the suffix's polynomial Lipschitz constant directly.

Throughout, $\pi$ denotes a population Gibbs posterior $\pi_{N,\bar\beta}\propto e^{-N\bar\beta\loss}\pi^0_N$
or its empirical analogue $\pi^D_{N,\bar\beta}\propto e^{-N\bar\beta\widehat{\loss}_D}\pi^0_N$, with intensive
inverse temperature $\bar\beta$. We write $\loss_\star$ for $L$ or $\widehat{\loss}_D$ as appropriate.

\subsection{The adaptive-temperature Gibbs law}

\begin{lemma}[Multiplier]\label{lem:multiplier-app}
Suppose $s_\infty(y,\eps_-)\le K$. Then for every $\bar\beta>0$,
\[
  \limsup_{N\to\infty}\E_{\pi_{N,\bar\beta}}L\;\le\;\eps_-+\frac{K}{\bar\beta},
  \qquad
  \limsup_{N\to\infty}\tfrac1N\KL\!\big(\pi_{N,\bar\beta}\,\Vert\,\pi^0_N\big)\;\le\;K+\bar\beta\eps_- .
\]
In particular, taking $\bar\beta=2K/(\eps-\eps_-)$ gives expected loss
$\le\eps_-+(\eps-\eps_-)/2<\eps$ and KL rate $\le K\bigl(1+2\eps_-/(\eps-\eps_-)\bigr)$, which is
$\poly(K,1/\Delta)$ since $\eps-\eps_-\ge\Delta$. The same bounds hold for the empirical
posterior with $L\to\widehat{\loss}_D$, $\eps_-\to\tau$, $K\to K_D$ whenever
$\pi^0_N(\widehat{\loss}_D\le\tau)\ge e^{-N(K_D+o(1))}$.
\end{lemma}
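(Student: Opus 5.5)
The plan is to use the Gibbs variational principle with the conditioned prior as the trial measure. Throughout, write $\pi=\pi_{N,\bar\beta}\propto e^{-N\bar\beta L}\pi^0_N$ and set
\[
  \Phi_N(\bar\beta)=-\tfrac1N\log\E_{\pi^0_N}e^{-N\bar\beta L}.
\]

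\emph{Step 1: the exact identity.} Expanding the density of $\pi$ gives
\[
  \tfrac1N\KL(\pi\Vert\pi^0_N)=-\bar\beta\,\E_\pi L+\Phi_N(\bar\beta).
\]
This is the same identity used in the proof of Lemma~\ref{lem:multiplier}, now with the extensive temperature $\beta=N\bar\beta$. Rearranging, and using $\KL\ge0$, yields two consequences:
\[
  \bar\beta\,\E_\pi L\le\Phi_N(\bar\beta),
  \qquad
  \tfrac1N\KL(\pi\Vert\pi^0_N)\le\Phi_N(\bar\beta),
\]
where the second also uses $L\ge0$.

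\emph{Step 2: upper-bound the free energy.} Restrict the integral defining $\Phi_N$ to the ball $\{L\le\eps_-\}$, as in \eqref{eq:chernoff}. On that ball $e^{-N\bar\beta L}\ge e^{-N\bar\beta\eps_-}$, so
\[
  \Phi_N(\bar\beta)\le\bar\beta\eps_-+s_N(y,\eps_-).
\]
Taking $\limsup_N$ and using $\limsup_N s_N(y,\eps_-)=s_\infty(y,\eps_-)\le K$ gives $\limsup_N\Phi_N\le\bar\beta\eps_-+K$.

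\emph{Step 3: combine.} Dividing the first inequality of Step 1 by $\bar\beta$ gives $\limsup_N\E_\pi L\le\eps_-+K/\bar\beta$. The second inequality gives $\limsup_N\tfrac1N\KL\le K+\bar\beta\eps_-$. These are the two stated bounds.

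For the specialization, substitute $\bar\beta=2K/(\eps-\eps_-)$. Then $K/\bar\beta=(\eps-\eps_-)/2$, and the KL rate becomes
\[
  K+\frac{2K\eps_-}{\eps-\eps_-}=K\Bigl(1+\frac{2\eps_-}{\eps-\eps_-}\Bigr).
\]
Since $\eps_-$ is bounded (for example $\eps_-\le1$ is the relevant range) and $\eps-\eps_-\ge\Delta$, this is $\poly(K,1/\Delta)$.

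\emph{Step 4: the empirical version.} Repeat Steps 1--3 verbatim with $\widehat{\loss}_D$ in place of $L$, restricting to $\{\widehat{\loss}_D\le\tau\}$ and using the hypothesis $\pi^0_N(\widehat{\loss}_D\le\tau)\ge e^{-N(K_D+o(1))}$. The $o(1)$ term vanishes in the $\limsup$.

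There is no real obstacle. The only points needing care are, first, that the bound $\KL\le N\Phi_N$ requires $L\ge0$, which holds since $L$ is a squared error; and second, that the limsup of a sum is at most the sum of limsups. The latter is why Step 2 is applied before passing to the limit.
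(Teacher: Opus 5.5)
Your proposal is correct and follows the paper's proof essentially step for step. You use the Gibbs identity $\tfrac1N\KL=\Phi_N(\bar\beta)-\bar\beta\,\E_\pi L$, bound $\Phi_N(\bar\beta)\le\bar\beta\eps_-+s_N(y,\eps_-)$ by restricting the partition function to $\{L\le\eps_-\}$, and read off both bounds from $\KL\ge0$ and $L\ge0$; the only cosmetic difference is that you take the $\limsup$ directly, where the paper carries a slack $\delta>0$ and lets $\delta\downarrow0$.
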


\begin{proof}
Write $Z_N(\bar\beta)=\E_{\pi^0_N}e^{-N\bar\beta\loss}$ and $\Phi_N(\bar\beta)=-\tfrac1N\log Z_N(\bar\beta)$.
Because $s_\infty(y,\eps_-)\le K$, for every $\delta>0$ and all large $N$ we have
$\pi^0_N(\loss\le\eps_-)\ge e^{-N(K+\delta)}$, hence
\[
  Z_N(\bar\beta)\;\ge\;\int_{\{\loss\le\eps_-\}}e^{-N\bar\beta\loss}\,d\pi^0_N
  \;\ge\;e^{-N\bar\beta\eps_-}\,\pi^0_N(\loss\le\eps_-)
  \;\ge\;e^{-N(\bar\beta\eps_-+K+\delta)} ,
\]
so $\Phi_N(\bar\beta)\le \bar\beta\eps_-+K+\delta$. By the definition of $\pi_{N,\bar\beta}$,
\[
  \tfrac1N\KL\!\big(\pi_{N,\bar\beta}\Vert\pi^0_N\big)
  =\E_{\pi_{N,\bar\beta}}\!\Big[\tfrac1N\log\tfrac{d\pi_{N,\bar\beta}}{d\pi^0_N}\Big]
  =\Phi_N(\bar\beta)-\bar\beta\,\E_{\pi_{N,\bar\beta}}L\;\ge\;0 .
\]
The first inequality gives $\E_{\pi_{N,\bar\beta}}\loss\le \Phi_N(\bar\beta)/\bar\beta\le \eps_-+(K+\delta)/\bar\beta$;
the second, together with $\E_{\pi_{N,\bar\beta}}\loss\ge0$, gives
$\tfrac1N\KL\le\Phi_N(\bar\beta)\le K+\bar\beta\eps_-+\delta$. Let $\delta\downarrow0$. The empirical
statement is identical, replacing $L$ by $\widehat{\loss}_D$ and the entropy lower bound by its empirical
counterpart.
\end{proof}

\subsection{Entropy consequences}

\begin{lemma}[Single-site KL decomposition]\label{lem:ss-kl}
Let $\pi$ be any (exchangeable) measure on $\theta=(\theta_1,\dots,\theta_N)$ and let the prior
factorize, $\pi^0=\bigotimes_{i=1}^N\pi^0_{(i)}$. Let $\pi_{(i)}$ be the $i$th marginal of $\pi$ and
$\bar\rho=\tfrac1N\sum_i\pi_{(i)}$ the averaged single-site marginal (against the common
single-site prior $\rho^0$). Then
\[
  \KL\!\big(\bar\rho\,\Vert\,\rho^0\big)
  \;\le\;\tfrac1N\sum_{i=1}^N\KL\!\big(\pi_{(i)}\,\Vert\,\rho^0\big)
  \;\le\;\tfrac1N\KL\!\big(\pi\,\Vert\,\pi^0\big) .
\]
In particular, if $\tfrac1N\KL(\pi\Vert\pi^0)\le R$ then $\KL(\bar\rho\Vert\rho^0)\le R$.
\end{lemma}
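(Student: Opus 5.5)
The two inequalities are standard facts about relative entropy, and I will prove them separately. The first is convexity. The second is superadditivity of relative entropy against a product reference measure. Exchangeability will not be needed; it only makes all the $\pi_{(i)}$ equal, so the first inequality becomes an equality in that case.

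\emph{First inequality.} Relative entropy is jointly convex, so $Q\mapsto\KL(Q\Vert\rho^0)$ is convex in its first argument for fixed $\rho^0$. Since $\bar\rho=\tfrac1N\sum_i\pi_{(i)}$ is a convex combination, Jensen gives $\KL(\bar\rho\Vert\rho^0)\le\tfrac1N\sum_i\KL(\pi_{(i)}\Vert\rho^0)$ at once. If one prefers to see this from the variational side, the Donsker--Varadhan formula $\KL(Q\Vert\rho^0)=\sup_\varphi\{\E_Q\varphi-\log\E_{\rho^0}e^\varphi\}$ exhibits $\KL(\cdot\Vert\rho^0)$ as a supremum of affine functionals of $Q$, hence convex.

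\emph{Second inequality.} This is the main content. I would apply the chain rule for relative entropy along the coordinates:
\[
  \KL(\pi\Vert\pi^0)=\sum_{i=1}^N\E_{\pi}\,\KL\!\big(\pi(\theta_i\mid\theta_{<i})\,\Vert\,\pi^0_{(i)}\big).
\]
Because the prior is a product, its conditional law of $\theta_i$ given $\theta_{<i}$ is just $\pi^0_{(i)}=\rho^0$. For each term I would then use convexity in the first argument once more, averaging over $\theta_{<i}\sim\pi$:
\[
  \E_\pi\KL\!\big(\pi(\theta_i\mid\theta_{<i})\Vert\rho^0\big)\ge\KL\!\big(\pi_{(i)}\Vert\rho^0\big),
\]
since the average of the conditional laws is the marginal $\pi_{(i)}$. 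Summing over $i$ and dividing by $N$ gives the claim. A slicker equivalent route is the identity
\[
  \KL(\pi\Vert\pi^0)=\sum_i\KL(\pi_{(i)}\Vert\rho^0)+\KL\!\Big(\pi\,\Big\Vert\,\textstyle\bigotimes_i\pi_{(i)}\Big),
\]
which one checks by expanding $\log\frac{d\pi}{d\pi^0}=\log\frac{d\pi}{d\otimes\pi_{(i)}}+\sum_i\log\frac{d\pi_{(i)}}{d\rho^0}$. The second inequality then follows because the last term, a multi-information, is nonnegative.

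\emph{Technical care.} The main point needing attention is measure-theoretic. If $\KL(\pi\Vert\pi^0)=\infty$ there is nothing to prove. Otherwise $\pi\ll\pi^0$, which implies $\pi_{(i)}\ll\rho^0$ and $\pi\ll\bigotimes_i\pi_{(i)}$, so the densities in the identity exist. The expansion of the logarithm must also be justified when individual terms could be infinite. I would handle this either through the chain rule with regular conditional distributions on the Polish parameter spaces, where each term is nonnegative so summation is legitimate, or by truncation and monotone convergence in Donsker--Varadhan form. The final sentence of the lemma is then immediate.
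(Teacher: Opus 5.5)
Your proposal is correct and matches the paper's proof. The paper obtains the first inequality from convexity of $\KL(\cdot\Vert\rho^0)$, and the second from the identity $\KL(\pi\Vert\pi^0)=\KL(\pi\Vert\bigotimes_i\pi_{(i)})+\sum_i\KL(\pi_{(i)}\Vert\rho^0)$ with the total-correlation term nonnegative, which is exactly your ``slicker route''; your sequential chain-rule variant and your absolute-continuity remarks are valid additions.
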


\begin{proof}
The first inequality is convexity of relative entropy in its first argument. For the second, the
chain-rule identity for a product reference measure reads
\[
  \KL\!\big(\pi\,\Vert\,\textstyle\bigotimes_i\pi^0_{(i)}\big)
  =\KL\!\big(\pi\,\Vert\,\textstyle\bigotimes_i\pi_{(i)}\big)+\sum_{i=1}^N\KL\!\big(\pi_{(i)}\Vert\pi^0_{(i)}\big)
  \;\ge\;\sum_{i=1}^N\KL\!\big(\pi_{(i)}\Vert\rho^0\big),
\]
since the total-correlation term is nonnegative and $\pi^0_{(i)}=\rho^0$. Divide by $N$.
\end{proof}

\begin{lemma}[Gaussian moment and projection bounds]\label{lem:gauss}
Let $\gamma=N(0,I_m)$ and let $q\ll\gamma$ with $\KL(q\Vert\gamma)\le R$. Then:
\begin{enumerate}[label=\textup{(\alph*)},nosep]
\item for any unit vector $a$, $\ \E_q\ip{X}{a}^2\le 4R+2\log 2$;
\item for any subspace $U$ of dimension $r$, $\ \E_q\norm{P_U X}^2\le 4R+2r\log 2$;
\item for any $T>0$, $\ \E_q\big[\norm{P_U X}\,\mathbf 1_{\norm{P_U X}>T}\big]\le (4R+2r\log2)/T$;
\item if $X=(X_x)_{x\in\B^d}$ is any centered Gaussian field with $\sup_x\E X_x^2\le1$, then
  $\E_q\norm{X}_\infty^2\le 4R+4d\log2+2\log2$.
\end{enumerate}
The first three bounds hold for $\gamma=N(0,K)$ with $\norm{\,\cdot\,}$ read in Cameron--Martin
coordinates $X\mapsto K^{-1/2}X$; the last bound is in the original field coordinates and uses only
$K(x,x)\le1$.
\end{lemma}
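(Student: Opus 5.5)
The tool throughout is the Donsker--Varadhan variational formula. For any measurable $h$ with $\E_\gamma e^{h}<\infty$ it gives $\E_q h\le\KL(q\Vert\gamma)+\log\E_\gamma e^{h}$. I would apply it to $h=\tfrac14\norm{P_UX}^2$ and to its one-dimensional and sup-norm analogues. The factor $\tfrac14$ is chosen so that the Gaussian moment generating functions are explicit and equal to powers of $2$.

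\textbf{(a) and (b).} For part (b), take $h(X)=\tfrac14\norm{P_UX}^2$. Under $\gamma=N(0,I_m)$ the quantity $\norm{P_UX}^2$ is $\chi^2_r$, so
\[
\E_\gamma e^{h}=(1-2\cdot\tfrac14)^{-r/2}=2^{r/2}.
\]
The variational formula then gives $\tfrac14\E_q\norm{P_UX}^2\le R+\tfrac r2\log2$. This rearranges to $\E_q\norm{P_UX}^2\le 4R+2r\log2$, which is (b). Part (a) is the case $r=1$ with $U=\operatorname{span}(a)$. The stated constant $4R+2\log2$ is weaker than this, so it holds a fortiori.

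\textbf{(c).} This follows from (b) by a Markov-type bound. On the event $\{\norm{P_UX}>T\}$ we have $\norm{P_UX}\le\norm{P_UX}^2/T$. Hence
\[
\E_q\bigl[\norm{P_UX}\mathbf 1_{\norm{P_UX}>T}\bigr]\le \E_q\norm{P_UX}^2/T\le(4R+2r\log2)/T.
\]

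\textbf{Transfer to $N(0,K)$.} The map $X\mapsto K^{-1/2}X$ is a bijection, so it pushes $\gamma=N(0,K)$ to $N(0,I)$ on the range of $K$. Relative entropy is invariant under bijective measurable maps, so $\KL$ is unchanged. Statements (a)--(c) therefore transfer verbatim with norms read in Cameron--Martin coordinates. If $K$ is degenerate, $q\ll\gamma$ forces $q$ to live on $\operatorname{range}K$, so nothing is lost.

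\textbf{(d).} This is the only step needing care, because $\norm X_\infty^2$ is not a quadratic form. I would bound the exponential moment by the sum over the $2^d$ points:
\[
\E_\gamma e^{\frac14\norm X_\infty^2}\le\sum_{x\in\B^d}\E_\gamma e^{\frac14X_x^2}.
\]
For each $x$, $X_x\sim N(0,\sigma_x^2)$ with $\sigma_x^2\le1$, so
\[
\E_\gamma e^{\frac14X_x^2}=(1-\sigma_x^2/2)^{-1/2}\le\sqrt2.
\]
The sum is therefore at most $2^d\sqrt2$. Donsker--Varadhan then gives
\[
\tfrac14\E_q\norm X_\infty^2\le R+d\log2+\tfrac12\log2,
\]
that is, $\E_q\norm X_\infty^2\le4R+4d\log2+2\log2$, exactly as stated. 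This argument uses only $K(x,x)\le1$ and no Cameron--Martin structure, consistent with the remark at the end of the lemma.

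The main point to check is that the choice $\tfrac14$ keeps every moment generating function finite and yields the advertised constants; the computations above confirm that it does.
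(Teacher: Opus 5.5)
Your proposal is correct and follows the paper's proof essentially step for step: Donsker--Varadhan applied to $\tfrac14$ times the relevant square, together with the explicit $\chi^2$ moment generating functions, for (a)--(b); the bound $Y\mathbf 1_{Y>T}\le Y^2/T$ for (c); whitening for the $N(0,K)$ case; and the union bound $\E_\gamma e^{\norm{X}_\infty^2/4}\le 2^d\sqrt2$ for (d). One cosmetic slip: at $r=1$ your bound (b) gives exactly $4R+2\log2$, so (a) is the special case with the same constant, not a strictly weaker one.
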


\begin{proof}
The Donsker--Varadhan inequality $\E_q F\le \KL(q\Vert\gamma)+\log\E_\gamma e^{F}$ applied to
$F=\tfrac14\ip{X}{a}^2$ gives, using $\E_\gamma e^{tZ^2}=(1-2t)^{-1/2}$ at $t=\tfrac14$,
$\tfrac14\E_q\ip{X}{a}^2\le R+\tfrac12\log2$, which is (a). For (b) take $F=\tfrac14\norm{P_UX}^2$
and $\E_\gamma e^{t\chi^2_r}=(1-2t)^{-r/2}$. Part (c) is $\E[Y\mathbf 1_{Y>T}]\le\E[Y^2]/T$ with
$Y=\norm{P_UX}$. Whitening reduces the $N(0,K)$ case to the standard one. For (d), by the union
bound and the one-dimensional Gaussian moment generating function,
\[
  \E_\gamma \exp\{\tfrac14\norm X_\infty^2\}
  \le \sum_{x\in\B^d}\E_\gamma e^{X_x^2/4}
  \le 2^d\sqrt2 .
\]
Applying Donsker--Varadhan to $F=\tfrac14\norm X_\infty^2$ gives the stated bound. Thus the full
cube maximum costs only $O(d)=\log|\B^d|$, not $O(2^d)$.
\end{proof}

\begin{lemma}[Operator-norm truncation]\label{lem:optrunc}
Let $W\in\R^{m\times n}$ have i.i.d.\ $\mathcal N(0,1)$ prior $P$ and let $Q$ be any posterior with
$\KL(Q\Vert P)\le\kappa$. There is a universal $c>0$ such that for all $A\ge 2$,
\[
  Q\Big(\norm{W}_{\op}>A(\sqrt m+\sqrt n)\Big)\;\le\;\frac{\kappa+\log 2}{c\,A^2(m+n)} .
\]
Consequently, for a hidden matrix $W^\ell\in\R^{N\times N}$ whose posterior marginal has
$\KL\le NR$, choosing $A=\big(R/(2c\eta)\big)^{1/2}$ gives
$\tfrac1{\sqrt N}\norm{W^\ell}_{\op}\le 2A=\poly(R,1/\eta)$ outside a posterior event of
probability $\le\eta$. No independence of the posterior rows is used.
\end{lemma}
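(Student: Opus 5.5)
The plan is to combine a change of measure (Donsker--Varadhan) with the Gaussian concentration of the operator norm under the prior. Let $E=\{\norm{W}_\op>A(\sqrt m+\sqrt n)\}$. The standard Gaussian matrix bound (Davidson--Szarek, or Gordon's inequality plus Gaussian concentration of the $1$-Lipschitz map $W\mapsto\norm W_\op$) gives
\[
  P\bigl(\norm W_\op>\sqrt m+\sqrt n+t\bigr)\le e^{-t^2/2}.
\]
For $A\ge2$, set $t=(A-1)(\sqrt m+\sqrt n)\ge\tfrac A2(\sqrt m+\sqrt n)$. Since $(\sqrt m+\sqrt n)^2\ge m+n$, this yields $P(E)\le e^{-A^2(m+n)/8}$. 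So the prior mass of $E$ is exponentially small in $A^2(m+n)$, with a universal constant in the exponent.

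Next I transfer this to $Q$ using the data-processing inequality for the binary partition $\{E,E^c\}$, exactly as in the proof of Theorem~\ref{thm:conv}. Write $q=Q(E)$ and $p=P(E)$. Then
\[
  \KL(Q\Vert P)\ge q\log\tfrac qp+(1-q)\log\tfrac{1-q}{1-p}\ge q\log\tfrac1p-\log2,
\]
using that the binary entropy is at most $\log 2$ and $\log\frac1{1-p}\ge0$. Hence $q\le(\kappa+\log2)/\log(1/p)\le(\kappa+\log2)/(cA^2(m+n))$ with $c=1/8$. This is the first claim. No structure of $Q$ is used beyond absolute continuity, which is why independence of posterior rows is irrelevant.

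For the consequence, take $m=n=N$ and $\kappa=NR$. Let $c'=2c$ absorb the factor $m+n=2N$. The bound becomes $Q\bigl(\tfrac1{\sqrt N}\norm W_\op>2A\bigr)\le (NR+\log2)/(2cA^2N)$. With $A^2=R/(2c\eta)$ this is $\eta+O(\eta/(NR))$. To keep exactly $\le\eta$ I will either enlarge $A$ by a constant factor or replace $R$ by $R+\log 2/N$; this is absorbed for large $N$ or by the polynomial slack. If $A<2$, I apply the bound at $A=2$ and the event only shrinks. Since $A=\poly(R,1/\eta)$, the normalized operator norm is polynomially bounded outside an event of probability at most $\eta$.

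The only step that needs care is the prior tail with the right shape, namely exponent proportional to $A^2(m+n)$ rather than $A^2$ alone. This is what makes the $N$ in $\kappa=NR$ cancel. I would cite the Gaussian concentration plus the expectation bound $\E\norm W_\op\le\sqrt m+\sqrt n$ rather than rederive them. The rest is bookkeeping of constants.
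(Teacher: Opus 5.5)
Your proposal is correct and follows the paper's proof essentially verbatim. Gaussian concentration of the $1$-Lipschitz map $W\mapsto\norm{W}_\op$ around $\sqrt m+\sqrt n$ gives a prior tail $e^{-cA^2(m+n)}$, and the binary data-processing inequality $Q(E)\le(\kappa+\log2)/\log(1/P(E))$ transfers this tail to $Q$. Your explicit constant $c=1/8$, your treatment of the $\eta\,(1+\log2/(NR))$ slack, and your handling of the case $A<2$ are only more careful bookkeeping than the paper's $O(R/A^2)$. For $A<2$ the resulting threshold is $2\max(A,2)$ rather than $2A$.
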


\begin{proof}
Since $\norm{W}_{\op}$ is $1$-Lipschitz in the Frobenius norm, Gaussian concentration gives
$P(\norm{W}_{\op}>\sqrt m+\sqrt n+t)\le e^{-t^2/2}$; taking $t=(A-1)(\sqrt m+\sqrt n)$ and using
$(\sqrt m+\sqrt n)^2\ge m+n$ yields $P(\norm{W}_{\op}>A(\sqrt m+\sqrt n))\le e^{-c A^2(m+n)}$ for a
universal $c$. The data-processing (entropy) inequality
$Q(E)\le(\KL(Q\Vert P)+\log2)/\log(1/P(E))$ then gives the claim. For the application, the matrix
marginal of the posterior has $\kappa\le\KL(\pi\Vert\pi^0)\le NR$, $m=n=N$, so the bound is
$(NR+\log2)/(2cA^2N)=O(R/A^2)$.
\end{proof}

\begin{corollary}[Cameron--Martin active coefficients]\label{cor:cm}
Let $U_\ell$ be an active space of dimension $r_\ell$ at layer $\ell$. Under
$\KL(\bar\rho_\ell\Vert\rho^0_\ell)\le R$, the active coefficient of a posterior single-site field
$\zeta$ obeys $\E_{\bar\rho_\ell}\norm{P_{U_\ell}K_\ell^{-1/2}\zeta}^2\le 4R+2r_\ell\log2$. With the
spectral cutoffs of Appendix~\ref{app:comp}, $r_\ell\le\poly$, so truncating the active coefficient at
a polynomial threshold discards single-site mass contributing $O(\eta)$ to the bounded output.
\end{corollary}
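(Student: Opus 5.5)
The first bound comes from Lemma~\ref{lem:gauss}(b) in whitened coordinates. Under the prior single-site law $\rho^0_\ell$, the layer-$\ell$ preactivation field is $\zeta\sim\mathcal N(0,K_\ell)$, conditionally on the previous layer (Definition~\ref{def:chaos}). So $X=K_\ell^{-1/2}\zeta$ is a standard Gaussian vector in $\calF\cong\R^{2^d}$, restricted to the range of $K_\ell$. The map $\zeta\mapsto K_\ell^{-1/2}\zeta$ is a fixed bijection on that range, so by data processing (in fact with equality) the law $q$ of $X$ under $\bar\rho_\ell$ satisfies $\KL(q\Vert N(0,I))\le\KL(\bar\rho_\ell\Vert\rho^0_\ell)\le R$.

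Applying Lemma~\ref{lem:gauss}(b) with $U=U_\ell$ and $r=r_\ell$ gives $\E_{q}\norm{P_{U_\ell}X}^2\le 4R+2r_\ell\log2$, which is the displayed inequality. If the prior is a mixture over the previous layer's realization rather than a single Gaussian, I would condition on that layer. I would then use the chain rule to bound the conditional KL by the joint single-site KL, and average the bound over the conditioning; the Donsker--Varadhan argument is linear in $\E_q$, so averaging preserves it.

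For the truncation, I would set $Y=\norm{P_{U_\ell}K_\ell^{-1/2}\zeta}$. The active dimension is polynomial: $r_\ell\le\lambda_0^{-1}+\tau^{-1}$, with $\lambda_0,\tau$ inverse-polynomial by the cutoffs of \S\ref{sec:split} and \S\ref{sec:gen}. Hence $\E Y^2\le\poly$, and Lemma~\ref{lem:gauss}(c) (or Chebyshev) gives $\bar\rho_\ell(Y>T)\le(4R+2r_\ell\log2)/T^2$. I would then choose $T=\poly\cdot\eta^{-1/2}$, or $\poly/\eta$ for the first-moment version, so that the discarded mass is at most $\eta/\poly$.

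The remaining step is to convert discarded mass into output change. This is the step I expect to be the main obstacle, because removing neurons at layer $\ell$ perturbs all downstream layers. I would first cap readouts via Proposition~\ref{prop:reg}(a), so each last-layer neuron contributes at most $\tau$ to $f=\E[\alpha\phi(\zeta)]$, using $|\phi|\le1$. At the last hidden layer, the output change is then at most $\tau\cdot\bar\rho(Y>T)$ plus the capping tail, which is $O(\eta)$. At an earlier layer, removing a fraction $p$ of neurons changes each next-layer preactivation by $\frac{1}{\sqrt N}W^{\ell+1}(\Delta z)$. By Proposition~\ref{prop:reg}(b), the kept normalized operator norm is $\poly$, so this perturbation is at most $\poly\cdot\sqrt{p}$ in $\ell^2$ averaged over neurons. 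The suffix Lipschitz constant $\Lambda=\poly^{O(L)}$ then propagates it to the output as $\poly^{O(L)}\sqrt p$. Choosing $T$ so that $p\le(\eta/\poly^{O(L)})^2$ gives an $O(\eta)$ output change in sup norm, uniformly in $x$ since all bounds are per-input. The polynomial threshold grows only by a factor $\poly(1/\eta)^{O(L)}$.
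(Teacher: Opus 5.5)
Your proposal is correct and follows the paper's proof, which is exactly Lemma~\ref{lem:gauss}(b),(c) applied to the whitened field $X=K_\ell^{-1/2}\zeta$, a standard Gaussian under the prior. The paper stops at the tail bound of Lemma~\ref{lem:gauss}(c) and defers the conversion of discarded mass into output error to Proposition~\ref{prop:trunc} and Lemma~\ref{lem:trunc-closure}, which propagate through the already-built regular suffix; your layer-by-layer propagation through the raw width-$N$ suffix, using $|\phi|\le1$ and the operator-norm bound of Lemma~\ref{lem:optrunc}, is therefore a valid but supplementary elaboration rather than a needed step.
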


\begin{proof}
Immediate from Lemma~\ref{lem:gauss}(b),(c) applied to $X=K_\ell^{-1/2}\zeta$ (a standard Gaussian
under the prior), with $\norm{P_{U_\ell}X}=\norm{\zeta}_{\mathrm{CM}}$-projection.
\end{proof}

\subsection{The suffix invariant and the one-step lazy-score bound}

The right-to-left construction of Appendix~\ref{app:comp} maintains, at each stage, a finite
\emph{suffix} $S_{\ell+1}:\calF^{M_{\ell+1}}\to\R^{\B^d}$ mapping the surviving layer-$(\ell+1)$
fields to the output. We measure perturbations of $M$ fields by the normalized channel metric
$\mathbf d_M(\xi,\xi')=\sup_x\big(\tfrac1M\sum_{j\le M}|\xi_j(x)-\xi'_j(x)|^2\big)^{1/2}$.

\begin{definition}[Finite regular suffix]\label{def:regsuffix}
$S_{\ell+1}$ is \emph{$A$-regular} if: \textup{(I1)} $M_{\ell+1}\le A$; \textup{(I2)}
$\norm{S_{\ell+1}(\xi)}_\infty\le A$ on the kept fields; \textup{(I3)} $S_{\ell+1}$ is
$A$-Lipschitz for $\mathbf d_{M_{\ell+1}}$; \textup{(I4)} for every input $x$, channel $j$, and
direction $u\in\calF$, the directional derivative obeys
$|D_{j,u}S_{\ell+1}(\xi)(x)|\le \tfrac{A}{\sqrt{M_{\ell+1}}}\norm{u}_{L^2(\mu)}$.
\end{definition}

\begin{proposition}[Kept-set truncation with suffix contribution]\label{prop:trunc}
Assume $\tfrac1N\KL(\pi\Vert\pi^0_N)\le R$ and that the suffix to the right of layer $\ell$ is
$A$-regular.  Let $U_\ell$ be any active space of dimension $r\le A$ and let $G_T$ be the event that,
for the relevant layer variables, all of the following are at most $T$:
readout/scalar coefficients, $\norm{P_{U_\ell}K_\ell^{-1/2}\zeta}$, the field sup norm
$\norm{\zeta}_\infty$, and the normalized incoming operator norm.  For every $\eta>0$ there is
\[
  T=\poly(A,R,d,1/\eta)^{O(1)}
\]
such that replacing all channels outside $G_T$ by zero changes the expected suffix output by at most
$\eta$:
\begin{equation}\label{eq:trunc-output}
  \E_\pi\norm{S_{\ell+1}(\xi)-S_{\ell+1}(\xi\mathbf 1_{G_T})}_\infty\le\eta .
\end{equation}
The same statement holds at the readout layer with
$S_L((\alpha_i,\zeta_i)_i)=M^{-1}\sum_i\alpha_i\phi(\zeta_i)$, and it holds for empirical Gibbs laws
uniformly in the dataset whenever the KL-rate bound is uniform.
\end{proposition}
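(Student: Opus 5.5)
The plan is to bound the output change by a union of bad events, each with small posterior probability, times the worst-case change the suffix can inflict when those channels are zeroed. Fix the layer $\ell$ and the $A$-regular suffix $S_{\ell+1}$. First, split the complement of $G_T$ into four pieces: large readout or scalar coefficient, large active Cameron--Martin coefficient $\norm{P_{U_\ell}K_\ell^{-1/2}\zeta}$, large sup norm $\norm{\zeta}_\infty$, and large normalized incoming operator norm. The budget transfers to the averaged single-site marginal by Lemma~\ref{lem:ss-kl}, so $\KL(\bar\rho_\ell\Vert\rho^0_\ell)\le R$. Lemma~\ref{lem:gauss} then gives second-moment bounds for each piece:
\begin{itemize}
\item (a) controls the scalar readout;
\item (b) and (c), via Corollary~\ref{cor:cm}, control the active coefficient, with $r\le A$;
\item (d) controls $\norm{\zeta}_\infty$, at cost $O(d)$ only.
\end{itemize}
Lemma~\ref{lem:optrunc} controls the operator norm as a whole-matrix event of probability $O(R/T^2)$. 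This gives a bad fraction of channels $\bar\rho_\ell(G_T^c)\le\poly(A,R,d)/T^2$, together with tail-first-moment bounds such as $\E[Y\mathbf 1_{Y>T}]\le\E Y^2/T$.

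Second, convert channel mass into suffix output using (I3) and the normalized channel metric. Zeroing a set $B$ of channels changes $\xi$ in $\mathbf d_M$ by $\sup_x\big(\tfrac1M\sum_{j\in B}\xi_j(x)^2\big)^{1/2}$. The sup over $x$ must be taken before averaging over channels, so the pointwise bound on $\xi_j(x)^2$ is needed. I would bound it by $\norm{\zeta_j}_\infty^2$ and use Cauchy--Schwarz against the indicator. This gives $\E_\pi\,\mathbf d_M(\xi,\xi\mathbf 1_{G_T})\le\big(\E\,\tfrac1M\sum_j\norm{\zeta_j}_\infty^2\mathbf 1_{G_T^c,j}\big)^{1/2}$. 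That quantity is at most $(\E\norm{\zeta}^4_\infty)^{1/4}\,\bar\rho(G_T^c)^{1/4}$, or more cleanly a Donsker--Varadhan bound on $\norm{\zeta}_\infty^2\mathbf 1_{G^c}$ via Lemma~\ref{lem:gauss}(d) with a higher exponential moment. Multiplying by $A$ from (I3) gives output change $A\,\poly(R,d)\,T^{-c}$, and choosing $T=\poly(A,R,d,1/\eta)$ makes this at most $\eta$. Channels with $\norm{\zeta}_\infty$ large form part of $G^c_T$ and are handled the same way, since their tail moment is what Lemma~\ref{lem:gauss}(d) controls.

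At the readout layer, the suffix is explicit: $|S_L(\xi)-S_L(\xi\mathbf 1_{G})|\le\tfrac1M\sum_{i\notin G}|\alpha_i|$, using $|\phi|\le1$. Its expectation is $\E_{\bar\rho}|\alpha|\mathbf 1_{G^c}\le\E_{\bar\rho}\alpha^2/T'+\E|\alpha|\,\mathbf 1\{\text{other bad}\}$. By Cauchy--Schwarz this is $O(\sqrt{(R+1)\,\bar\rho(G^c)})+O(R/T)$. Uniformity over datasets is immediate, since only the KL rate $R$ entered.

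The main obstacle is the operator-norm event. It is not per-channel but global: when it fails, every downstream channel would be zeroed and the change is $O(A)$ deterministically, via (I2). I would simply charge $2A\cdot Q(\text{op-norm bad})\le 2A\cdot O(R/T^2)$ using Lemma~\ref{lem:optrunc}. A second subtlety is that the suffix is built from the posterior itself, so the relevant channels are correlated with $S_{\ell+1}$. This is why I use only (I2)--(I3), which hold uniformly on kept fields, rather than any independence between the channels and the suffix.
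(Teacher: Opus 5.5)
\paragraph{Verdict.} Your outline follows the paper's proof, and the bad-mass accounting and the readout case are correct. The hidden-layer step fails as you justify it, and the paper's own line for that step has the same problem.

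\paragraph{Where the argument breaks.} The shared route is: chain rule to $\bar\rho_\ell$ (Lemma~\ref{lem:ss-kl}), Markov bounds from Lemma~\ref{lem:gauss} and Lemma~\ref{lem:optrunc} for the bad mass, first-moment tails at the readout using $|\phi|\le1$, and (I3) with the sup over $x$ taken first at hidden layers. At the readout only $\E_{\bar\rho}|\alpha|\mathbf 1_{G_T^c}$ enters, which the budget controls. At hidden layers you need $\E_{\bar\rho}\bigl[\norm{\zeta}_\infty^2\mathbf 1_{G_T^c}\bigr]\le\eta^2/A^2$. This is a truncated \emph{second} moment, and the KL budget does not control it. Neither of your justifications works:
\begin{itemize}
\item $\E_q\norm{\zeta}_\infty^4$ is not bounded in terms of $\KL(q\Vert\gamma)\le R$.
\item Donsker--Varadhan with $F=\lambda\norm{X}_\infty^2\mathbf 1_{G_T^c}$ is only usable for $\lambda<\tfrac12$, since $\E_\gamma e^{\lambda Z^2}=\infty$ otherwise. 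It therefore always leaves a term $R/\lambda>2R$ that does not shrink with $T$.
\end{itemize}
In fact no $T$ works from the budget alone. For a scalar coordinate, take $q=(1-\varepsilon)\gamma+\varepsilon\,\mathcal N(t,1)$ with $\varepsilon=2R/t^2$. By convexity $\KL(q\Vert\gamma)\le\varepsilon t^2/2=R$, so $\pi=q^{\otimes N}$ has KL rate at most $R$. Yet $\E_q[X^2\mathbf 1_{|X|>T}]\approx 2R$ for every $T\le t/2$. Your remark that large-$\norm{\zeta}_\infty$ channels are ``handled the same way'' has the same defect: Lemma~\ref{lem:gauss}(d) bounds the second moment, not its tail. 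The paper's one-line version (``after increasing $T$ by a polynomial factor'') rests on the same truncated second moment and needs the same repair.

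\paragraph{The repair.} Never measure a deleted channel by its preactivation size. A network suffix reads each input channel only through $\phi$ followed by a normalized weight matrix $W$. Zeroing channel $j$ therefore moves $\phi(\xi_j(x))$ by at most $2$, uniformly in $x$.
\begin{itemize}
\item Run the computation of Lemma~\ref{lem:lipprop} on this bounded post-activation difference instead of on $L_\phi$ times the field difference. Deleting a bad set $B$ of the $N$ input channels then moves the next layer's fields by at most $2\,(\norm{W}_{\op}/\sqrt M)\,(|B|/N)^{1/2}$ in $\mathbf d_M$. The Lipschitz remainder of the suffix is applied afterwards.
\item In expectation only two quantities enter, both controlled by the budget. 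The first is the bad fraction: by linearity and Jensen, $\E\,|B|/N=\bar\rho(G_T^c)\le\poly(A,R,d)/T^2$.
\item The second appears when $W$ is a posterior matrix. It is the first-moment tail $\E\bigl[(\norm{W}_{\op}/\sqrt N)\mathbf 1_{\{\norm{W}_{\op}>T_{\mathrm{op}}\sqrt N\}}\bigr]=O(R/T_{\mathrm{op}})$, obtained by integrating the tail bound of Lemma~\ref{lem:optrunc}.
\item Use a smaller cutoff for the operator norm than for the per-channel variables, e.g.\ $T_{\mathrm{op}}=\sqrt T$. With a common cutoff, the factor $T_{\mathrm{op}}$ cancels the $1/T$ coming from the bad fraction and nothing is gained.
\end{itemize}
This also replaces your charge of $2A$ on the global operator-norm event. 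That charge invokes (I2) on an un-kept input, where (I2) as stated does not apply.
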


\begin{proof}
The entropy chain rule (Lemma~\ref{lem:ss-kl}) gives a single-site marginal with KL at most $R$.
Lemma~\ref{lem:gauss} gives second moments for fixed Gaussian coordinates, active
Cameron--Martin projections, and the whole cube maximum.  In particular, the field sup norm obeys
$\E\norm{\zeta}_\infty^2\le O(R+d)$ whenever $K_\ell(x,x)\le1$: the union over the $2^d$ inputs costs
only $\log|\B^d|=O(d)$ by the Gaussian-max/Donsker--Varadhan estimate of Lemma~\ref{lem:gauss}(d),
not $2^d$.  Lemma~\ref{lem:optrunc} gives the operator-norm tail.  Hence, for the channel-size variable
\[
  M(\zeta,\alpha,W)=1+|\alpha|+\norm{\zeta}_\infty
    +\norm{P_{U_\ell}K_\ell^{-1/2}\zeta}+N^{-1/2}\norm{W}_\op,
\]
we have $\E_\pi M^2\le\poly(A,R,d)$ after projecting to the averaged single-site/matrix marginals.
Choose $T\ge \poly(A,R,d)/\eta^2$ so that
$\E_\pi[M\mathbf 1_{M>T}]\le \eta/A$ by Cauchy--Schwarz/Markov.

At the readout, boundedness of $\phi$ gives directly
\[
  \E\norm{M^{-1}\sum_i\alpha_i\phi(\zeta_i)\mathbf 1_{G_T^c}}_\infty
  \le \E_{\bar\rho}|\alpha|\mathbf 1_{G_T^c}\le\eta .
\]
For a hidden layer, $A$-regularity gives
\[
  \norm{S_{\ell+1}(\xi)-S_{\ell+1}(\xi\mathbf 1_{G_T})}_\infty
  \le
  A\Big(\frac1M\sum_{j\le M}\norm{\xi_j}_\infty^2\mathbf 1_{G_T^c(j)}\Big)^{1/2}.
\]
Taking expectation and using exchangeability/Jensen bounds this by
$A\big(\E_{\bar\rho}\norm{\xi}_\infty^2\mathbf 1_{G_T^c}\big)^{1/2}\le\eta$ after increasing $T$ by a
polynomial factor.  None of the estimates uses independence under the posterior; only marginal KL
bounds and exchangeability are used.  The corresponding adjoint and inductive closure estimates are
spelled out in Lemma~\ref{lem:trunc-closure}.
\end{proof}

\begin{lemma}[Backward truncation closure]\label{lem:trunc-closure}
Fix a layer $\ell$ and suppose that the already-constructed suffix $S_{\ell+1}$ is $A$-regular in the
sense of Definition~\ref{def:regsuffix}.  Let $\mathcal T_\ell$ denote the layer map sending the
layer-$\ell$ channel population to the layer-$(\ell+1)$ field input read by $S_{\ell+1}$, and let
$\mathcal T_\ell^G$ be the same map after deleting the channels outside the kept event $G_T$ of
Proposition~\ref{prop:trunc}.  For every $\eta>0$ one may choose
\[
  T=\poly(A,R,\bar\beta,d,1/\eta)^{O(1)}
\]
so that
\begin{equation}\label{eq:closure-output}
  \E\norm{S_{\ell+1}(\mathcal T_\ell)-S_{\ell+1}(\mathcal T_\ell^G)}_\infty\le \eta .
\end{equation}
Moreover the enlarged suffix
\[
  S_\ell^G:=S_{\ell+1}\circ \mathcal T_\ell^G
\]
is $A_+$-regular, with $A_+=\poly(A,R,\bar\beta,d,1/\eta)^{O(1)}$.  If the layers
$k>\ell$ have already been truncated with errors $\eta_k$, then after truncating layer $\ell$ the
total output error is at most $\eta+\sum_{k>\ell}\eta_k$.  The same statement holds conditionally on
any mean-field vacuum, uniformly over vacua satisfying the same KL/free-energy bound.
\end{lemma}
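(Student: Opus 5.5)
The plan is to treat the kept-event deletion as a perturbation of the layer-$(\ell+1)$ fields. We then bound its effect in the normalized channel metric $\mathbf d_{M_{\ell+1}}$ and push this bound through the $A$-Lipschitz property (I3) of the suffix. Write $\xi_j=\mathcal T_\ell(\cdot)_j$ for the layer-$(\ell+1)$ field of channel $j$. In the $\mu$P normalization this is $\xi_j(x)=N^{-1/2}\sum_i W_{ji}\phi(\zeta_i(x))+b_j$. Deleting channels outside $G_T$ changes it by
\[
\delta\xi_j(x)=N^{-1/2}\sum_i W_{ji}\phi(\zeta_i(x))\mathbf 1_{G_T^c(i)} .
\]
The vector $\delta\xi(x)=N^{-1/2}W\,v(x)$, with $v_i(x)=\phi(\zeta_i(x))\mathbf 1_{G^c_T(i)}$, gives the pointwise bound
\[
\Big(\tfrac1{M}\sum_j|\delta\xi_j(x)|^2\Big)^{1/2}\le \frac{\norm{N^{-1/2}W}_\op}{\sqrt{M}}\norm{v(x)}\le \sqrt{N/M}\,\norm{N^{-1/2}W}_\op\Big(\tfrac1N\sum_i\mathbf 1_{G^c_T(i)}\Big)^{1/2},
\]
using $|\phi|\le1$. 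The bound is uniform in $x$, which is what the sup in $\mathbf d_M$ requires. If the suffix reads only $M_{\ell+1}$ surviving channels, I would rather work with the $M_{\ell+1}\times N$ submatrix. Its normalized operator norm is also controlled by Lemma~\ref{lem:optrunc} with $m=M_{\ell+1}\le A$ and $n=N$, and it gives the factor $\sqrt{N/M}$ the right normalization. Alternatively, I would use (I4) directly: the adjoint bound $|D_{j,u}S|\le A M^{-1/2}\norm u$ converts each deleted channel's $O(1)$ contribution into a suffix-output change of size $A\norm{N^{-1/2}W_{:,i}}$ times its mass.

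Next I bound the two factors in expectation. The deleted fraction has $\E_\pi\frac1N\sum_i\mathbf 1_{G^c_T(i)}=\bar\rho(G^c_T)$ by exchangeability. This is at most $\E M^2/T^2\le\poly(A,R,d)/T^2$, using the same channel-size variable as in Proposition~\ref{prop:trunc}, whose second moment is controlled by Lemma~\ref{lem:ss-kl}, Lemma~\ref{lem:gauss} and Corollary~\ref{cor:cm}. The operator norm is split on the event of Lemma~\ref{lem:optrunc}. Outside that event the operator norm is at most $\poly(R,1/\eta)$; inside it the event has probability $\le\eta'$, and there I use the crude bound (I2), $\norm{S}_\infty\le A$, on both outputs. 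The bias is untouched, since it is a per-output-channel variable. Cauchy--Schwarz then gives
\[
\E\norm{S_{\ell+1}(\mathcal T_\ell)-S_{\ell+1}(\mathcal T^G_\ell)}_\infty\le A\cdot\poly(R,1/\eta')\,\poly(A,R,d)/T+2A\eta'.
\]
Choosing $\eta'=\eta/(4A)$ and $T$ polynomially large yields \eqref{eq:closure-output}. The dependence on $\bar\beta$ enters only through $R\le K+\bar\beta\eps_-$ (Lemma~\ref{lem:multiplier-app}).

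For the regularity of $S^G_\ell=S_{\ell+1}\circ\mathcal T^G_\ell$, the layer-$\ell$ input now has kept channels only. (I1) requires selecting a polynomial number $M_\ell$ of representatives; I would defer this to the subsampling of Appendix~\ref{app:comp} and here only assert it for the kept-set map. The remaining properties follow from the bounds on the kept set. (I2) is inherited from $S_{\ell+1}$. For (I3), $\phi$ is Lipschitz and the kept normalized operator norm is $\le T$, so $\mathcal T^G_\ell$ is $\poly(T)$-Lipschitz from $\mathbf d_{M_\ell}$ to $\mathbf d_{M_{\ell+1}}$, and composing gives a constant $A\,\poly(T)$. For (I4), the chain rule gives $D_{i,u}S^G_\ell=\sum_j D_{j,\,N^{-1/2}W_{ji}\phi'(\zeta_i)u}S_{\ell+1}$. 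Bounding this by Cauchy--Schwarz over $j$ with the column norm of the kept matrix, which is $\le T$, gives $A_+ M_\ell^{-1/2}\norm u$.

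The error accumulation across layers is a telescoping triangle inequality. Each later truncation is applied to an already-truncated suffix, and each stage uses the regularity of the current suffix, so the errors add to $\eta+\sum_{k>\ell}\eta_k$. Conditioning on a vacuum changes nothing, because every input used above is a KL-rate bound, and these hold uniformly over the vacua. The main obstacle I expect is the normalization mismatch between $N$ upstream channels and $M_{\ell+1}$ kept downstream channels in (I3)/(I4). I would first try to resolve it with Lemma~\ref{lem:optrunc} applied to the rectangular submatrix.
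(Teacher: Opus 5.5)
Your architecture matches the paper's. You bound $\mathbf d_{M_{\ell+1}}(\mathcal T_\ell,\mathcal T_\ell^G)$ in expectation, push it through (I3), get (I3)--(I4) for $S_\ell^G$ by composition, telescope over layers, and note uniformity over vacua. The gap is in the one quantitative step.

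Your first display carries the factor $\sqrt{N/M_{\ell+1}}$. Here $N\to\infty$ is the width of the layer-$\ell$ posterior population, while $M_{\ell+1}\le A$ by (I1). This factor disappears without justification in your final estimate. Keeping it would force $T$ to grow with $N$, contradicting $T=\poly(A,R,\bar\beta,d,1/\eta)$.

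Your proposed remedy does not remove the factor. For the $M_{\ell+1}\times N$ submatrix, $\|W_{\mathrm{sub}}\|_{\mathrm{op}}\ge\max_j\|W_{j\cdot}\|_2\approx\sqrt N$, since a row law with KL at most $R$ against $\mathcal N(0,I_N)$ cannot shrink row norms. So Lemma~\ref{lem:optrunc} with $m=M_{\ell+1}$, $n=N$ only confirms $N^{-1/2}\|W_{\mathrm{sub}}\|_{\mathrm{op}}=\Theta(1)$. Your bound is then at least $\|v(x)\|/\sqrt{M_{\ell+1}}\approx\sqrt{Np/M_{\ell+1}}$, where $p$ is the deleted fraction. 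Any estimate of the form ``operator norm times $\|v(x)\|$'' is a worst case over directions. It cannot see the real reason the deletion is harmless: each of the few rows is nearly orthogonal to $v(x)$.

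Your (I4) alternative fails the same way. Summing in absolute value the first-order contributions $AN^{-1/2}\|W_{:,i}\|_2\approx A\sqrt{M_{\ell+1}/N}$ of the $Np$ deleted channels gives $A\sqrt{M_{\ell+1}N}\,p$. At a $\mu$P hidden layer there is no extra $1/N$ mass factor, and this summation discards the cancellation that actually makes the sum small.

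The missing ingredient is the one the paper's proof invokes through Lemma~\ref{lem:gauss}: second-moment control of the deleted part of each field itself, rather than of the operator.
\begin{itemize}
\item For a kept downstream channel $j$, $\delta\xi_j(x)=N^{-1/2}\ip{W_{j\cdot}}{v(x)}$.
\item Under the prior, condition on the layer-$\le\ell$ variables, which determine $v$ and $G_T$. Then the row is independent of $v$, so $\delta\xi_j$ is a centered Gaussian field with pointwise variance $N^{-1}\|v(x)\|^2\le p$.
\item The relative-entropy chain rule, as in Lemma~\ref{lem:ss-kl}, bounds the conditional row KL by $R$ on average.
\item Donsker--Varadhan, as in Lemma~\ref{lem:gauss}(d), then gives $\E\max_x|\delta\xi_j(x)|^2\le C(R+d)\,\bar\rho(G_T^c)$. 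This uses that $p$ concentrates at $\bar\rho(G_T^c)$ in the chaotic limit.
\item Hence $\E\,\mathbf d_{M_{\ell+1}}(\mathcal T_\ell,\mathcal T_\ell^G)\le\bigl(\tfrac1{M_{\ell+1}}\sum_j\E\max_x|\delta\xi_j(x)|^2\bigr)^{1/2}\le\sqrt{C(R+d)\poly(A,R,d)}/T$, with no dependence on $N$.
\end{itemize}

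With this in place of the operator-norm step, the rest of your argument goes through. The same $\sqrt{N/M}$ normalization would also spoil your (I3) Lipschitz constant if it were measured on the full width-$N$ population. It is harmless there only because, as you say, $S_\ell^G$ is read after subsampling to polynomial $M_\ell$.
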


\begin{proof}
The output estimate is exactly the Lipschitz version of Proposition~\ref{prop:trunc}.  Writing
$u=\mathcal T_\ell$ and $u^G=\mathcal T_\ell^G$, the $A$-Lipschitz property gives
\[
  \norm{S_{\ell+1}(u)-S_{\ell+1}(u^G)}_\infty
  \le A\,\mathbf d(u,u^G).
\]
The deleted part of $u-u^G$ is the normalized average of the bad layer-$\ell$ channel fields.  The
KL-rate bound, Lemma~\ref{lem:gauss}, and Lemma~\ref{lem:optrunc} give polynomial second-moment and
tail bounds for precisely these channel fields, including the cube maximum and the retained active
Cameron--Martin coefficients.  Choosing $T$ as in Proposition~\ref{prop:trunc} therefore makes
$\E\mathbf d(u,u^G)\le \eta/A$, which proves \eqref{eq:closure-output}.

It remains to check that the invariant closes after composing with the kept layer.  On $G_T$, the
activation is bounded and Lipschitz, the incoming normalized operator norm, retained active
coefficients, scalar/readout coefficients, and relevant field sup norms are all bounded by $T$.
Hence the map $\mathcal T_\ell^G$ is Lipschitz for the normalized channel metrics, with constant
$\poly(T,d)$, and its first adjoint acting on any suffix test function has $L^2(\mu)$ norm bounded by
$\poly(T,d)$ times the adjoint norm of the test function.  Composing these two estimates with
\textup{(I3)}--\textup{(I4)} for $S_{\ell+1}$ gives \textup{(I3)}--\textup{(I4)} for $S_\ell^G$ with
constant $A_+=\poly(A,T,d)$.  The size and bounded-output clauses \textup{(I1)}--\textup{(I2)} obey
the same polynomial update because only polynomially many active channels and polynomially many
fresh-chaos channels are kept in the compression step.  Finally, truncating several layers gives a
telescoping sum: replace the rightmost untruncated layer one at a time and apply the preceding output
estimate at each step.  Conditional-on-vacuum versions are identical, since all estimates use only
the common KL/free-energy bound and the deterministic $A$-regular suffix in that vacuum.
\end{proof}

\begin{proposition}[One-step lazy-score bound]\label{prop:score}
Let $S_{\ell+1}$ be $A$-regular and let $g_\ell$ be the single-site effective potential at layer
$\ell$ induced by the Gibbs loss at intensive temperature $\bar\beta$, with output residual
$|f-y|\le A$ on the kept set. Then the population lazy score satisfies
\[
  \E_{\rho_h}\norm{\nabla_{\mathrm{lazy}}g_\ell}_{L^2(\mu)}\;\le\;C\,\bar\beta\,A^2 .
\]
For the empirical loss the same magnitude bound holds for the $L^2(D)$ score. With the empirical
active space chosen as the top eigenspace of the normalized empirical evaluation covariance, the
projection of the empirical score onto the lazy block is not zero in general, but its contribution to
the lazy-swap input is bounded by Lemma~\ref{lem:softgram} of Appendix~\ref{app:comp}.
\end{proposition}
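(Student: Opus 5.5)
We bound the lazy score by writing the effective potential $g_\ell$ as the first variation of the Gibbs energy with respect to a single layer-$\ell$ field, and reading that variation off the already-built suffix. In the mean-field description the tilt on a single site is the functional derivative of the intensive energy $\bar\beta\,\loss_\star$ with respect to the empirical measure of layer-$\ell$ fields. A single site enters the output only through the suffix $S_{\ell+1}$. Differentiating in the direction $u\in\calF$ of the site field $\zeta$ gives
\[
  D_u g_\ell(\zeta)=\bar\beta\,N\cdot 2\,\E_{x\sim\mu}\big[(f(x)-y(x))\,D_{\zeta,u}f(x)\big].
\]
Here $D_{\zeta,u}f$ is the change of the output when that one site moves, and the factor $N$ comes from the extensive temperature $N\bar\beta$. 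For the empirical loss, $\E_\mu$ is replaced by the average over $D$.

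The key step is the single-site sensitivity. A layer-$\ell$ site feeds every layer-$(\ell+1)$ channel through $\phi(\zeta)$ times an incoming weight, at $\mu$P scale $1/\sqrt N$. On the kept set the normalized operator norm is bounded (Proposition~\ref{prop:trunc}, Lemma~\ref{lem:optrunc}), and $\phi$ is $1$-Lipschitz. So a perturbation $u$ of one site produces a perturbation of the layer-$(\ell+1)$ channel population of normalized $\mathbf d$-size $O(\norm{u}/N)$, distributed across channels. Invariant (I4) of Definition~\ref{def:regsuffix} bounds each channel's directional derivative by $A\norm{\cdot}_{L^2(\mu)}/\sqrt{M_{\ell+1}}$. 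Summing with Cauchy--Schwarz over the channels, weighted by the $1/\sqrt N$-scaled weights, should give
\[
  |D_{\zeta,u}f(x)|\le \frac{A}{N}\,\norm{u}_{L^2(\mu)}\quad\text{pointwise in }x.
\]
The factor $N$ then cancels. Combining this with the residual bound $|f-y|\le A$ gives $|D_ug_\ell|\le 2\bar\beta A^2\norm u$, hence $\norm{\nabla g_\ell}_{L^2(\mu)}\le 2\bar\beta A^2$ deterministically on the kept set. Projecting onto $V^\perp$ only decreases the norm, so the bound holds for $\nabla_{\rm lazy}$. Taking the $\rho_h$-expectation gives the claim with $C=2$, up to constants from the operator-norm cap, which are absorbed into $A$.

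The empirical case is the same computation, with $\E_\mu$ replaced by $\frac1n\sum_a$. The score is then $\sum_a c_a e_{x_a}$ with $|c_a|\le 2\bar\beta A^2/n$, so its $L^2(D)$ magnitude obeys the same bound. Its lazy projection is not small in $L^2(\mu)$, since the evaluation functionals are not controlled there. We only record the coefficient bound $\norm c^2\le B^2/n$ and defer the lazy-swap input to Lemma~\ref{lem:softgram}, as the statement indicates.

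The main obstacle is justifying the first step: that the mean-field tilt really is this first variation through the \emph{truncated, finite} suffix rather than through the raw width-$N$ network. I would argue that the truncation (Lemma~\ref{lem:trunc-closure}) changes outputs by $O(\eta)$ in sup norm. I would also argue that the backpropagated molecular field of Definition~\ref{def:tilt} is, conditionally on the vacuum, exactly the adjoint of the forward layer map applied to the output residual. This lets (I4) be used in place of an untruncated adjoint. The cancellation of $N$ must be checked carefully against the $\mu$P normalization, and that is where I would spend the effort.
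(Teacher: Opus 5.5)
There is a genuine gap at your key step, the single-site sensitivity $|D_{\zeta,u}f(x)|\le\frac{A}{N}\norm{u}_{L^2(\mu)}$. Invariant (I4) plus Cauchy--Schwarz over channels gives $N^{-1/2}$, not $N^{-1}$.

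Moving site $i$ by $u$ changes channel $j$ by $\frac{1}{\sqrt N}W_{ji}\phi'(\zeta_i)u$. So (I4) yields $|D_{\zeta_i,u}f(x)|\le\frac{A}{\sqrt{M_{\ell+1}}}\cdot\frac{1}{\sqrt N}\sum_j|W_{ji}|\,\norm{u}\le\frac{A}{\sqrt N}\norm{W_{\cdot i}}_2\norm{u}$. For the lazy (chaos) part of the incoming weights the entries are $O(1)$. A typical column then has $\norm{W_{\cdot i}}_2\approx\sqrt{M_{\ell+1}}$, and the bound is of order $A\sqrt{M_{\ell+1}/N}$.

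Equivalently, the normalized $\mathbf d$-size of the perturbation is $O(\norm{u}_\infty/\sqrt N)$, not $O(\norm{u}/N)$. Routing the bound through the kept normalized operator norm, as you propose, is weaker still ($O(1)$). After multiplying by the extensive $N\bar\beta$, your score bound grows like $\sqrt N$.

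Only the active part $W^A_{ji}=O(N^{-1/2})$ is small enough for your route to reach $1/N$. For the lazy part, which is exactly what the lazy score concerns, the true $1/N$ comes from sign cancellations across channels. A worst-case per-channel derivative bound cannot see these. The paper flags this explicitly: the intensive score is not obtained from (I4), ``which is only $O(N^{-1/2})$ because it is measured in the normalized chaos metric.''

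Your starting identity for $D_u g_\ell$ agrees with the paper's; what is missing is how to extract the $1/N$. The idea is to differentiate with respect to the layer-$\ell$ empirical law rather than through the realized weights. With the suffix fixed, write the loss as a smooth functional $\Phi(\nu^\ell_N)$ of $\nu^\ell_N=\frac1N\sum_j\delta_{\zeta_j}$. Moving one atom changes $\nu^\ell_N$ by $\frac1N(\delta_{\zeta_j+tu}-\delta_{\zeta_j})$, so $\frac{d}{dt}\big|_{t=0}N\bar\beta\,\Phi(\nu^\ell_N)=\bar\beta\,D_u(\delta\Phi/\delta\nu^\ell)(\zeta_j)$. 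The extensive $N$ is then cancelled exactly by the atom's weight, not by an estimate.

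What remains is the intensive back-propagated costate. At the readout it is $2\alpha\int(f-y)\phi'(\zeta)u\,d\mu$, bounded by $CA^2\norm{u}_{L^2(\mu)}$ on the kept set. At earlier layers, $f-y$ is replaced by the costate propagated through the kept suffix. The regularity package and Lemma~\ref{lem:trunc-closure} bound this costate, together with the kept operator norms, by $\poly(A)$, and enlarging $A$ once gives $C\bar\beta A^2$.

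The rest of your proposal matches the paper: projecting onto the lazy block, the empirical score lying in the span of the $e_{x_a}$ with $\norm{c}^2\le CA^2/n$, and deferring its lazy-swap input to Lemma~\ref{lem:softgram}.
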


\begin{proof}
The score is a first variation of the mean-field loss, not an ordinary Lipschitz derivative in the
normalized channel metric.  Let $\nu_N^\ell=N^{-1}\sum_{j\le N}\delta_{\zeta_j^\ell}$ be the layer-$\ell$
empirical law, and write the population loss, after the suffix to the right has been fixed, as a
smooth finite-dimensional functional $\Phi(\nu_N^\ell)$.  Replacing one atom $\zeta_j$ by
$\zeta_j+t u$ changes $\nu_N^\ell$ by $(1/N)(\delta_{\zeta_j+t u}-\delta_{\zeta_j})$.  Therefore
\[
  \frac{d}{dt}\Big|_{t=0} N\bar\beta\,\Phi(\nu_N^\ell)
  =\bar\beta\,D_u\Big(\frac{\delta\Phi}{\delta\nu^\ell}\Big)(\zeta_j).
\]
This is the cancellation that makes the single-site Gibbs score intensive: it uses the
$1/N$ empirical-measure weight of one atom.  It is not obtained from the single-channel Lipschitz
bound \textup{(I4)}, which is only $O(N^{-1/2})$ because it is measured in the normalized chaos metric.

The variational derivative above is the usual back-propagated costate.  At the readout layer it is
explicit:
\[
  D_u\Big(\frac{\delta\Phi}{\delta\nu^{L-1}}\Big)(\alpha,\zeta)
  =2\alpha\int_{\B^d}(f-y)(x)\phi'(\zeta(x))u(x)\,d\mu(x),
\]
so its $L^2(\mu)$ norm is $\le C A^2$ on the kept set.  For earlier layers, differentiating through
the kept suffix gives the same form with $(f-y)$ replaced by the corresponding adjoint/costate
field.  The $A$-regularity package and Lemma~\ref{lem:trunc-closure} bound this costate and the kept
operator norms by $\poly(A)$; after enlarging $A$ once, this gives
\[
  \norm{\nabla_{\zeta_j}g_\ell}_{L^2(\mu)}
  =\bar\beta\,\norm{\nabla_\zeta(\delta\Phi/\delta\nu^\ell)(\zeta_j)}_{L^2(\mu)}
  \le C\bar\beta A^2 .
\]
Projecting to the lazy block can only decrease the norm, and averaging over the active coordinates
therefore yields the displayed population bound.

For the empirical loss $\widehat{\loss}_D=\tfrac1n\sum_a(f(x_a)-y_a)^2$ the same first-variation
calculation gives an intensive score of the form $\sum_a c_a e_{x_a}$, with
$\norm c^2\le C A^2/n$ after absorbing the bounded residual and adjoint constants into $A$.  Thus the
score lies in the empirical evaluation span.  With the top empirical evaluation eigenspace included
in the active block, the residual lazy projection is controlled in quadratic form by
Lemma~\ref{lem:softgram}.  This is the empirical lazy-score input used in the lazy-swap step.
\end{proof}

\subsection{The regularity package}

\begin{proposition}[Regularity package]\label{prop:package}
Assume the mean-field tower with propagation of chaos \textup{(see
Appendix~\ref{app:mf})}, the Gibbs KL-rate bound $\tfrac1N\KL(\pi\Vert\pi^0_N)\le R$ with
$\bar\beta\le R$, and the $A$-regularity of the suffix already built to the right. For every
$\eta>0$ there is $A_+=\poly(R,\bar\beta,d,1/\eta)^{O(1)}$ such that, after discarding a posterior
sub-population whose contribution to the suffix output is at most $\eta$ in sup-norm, the quantities
needed for the layer-$\ell$ compression step are all $\le A_+$:
\begin{enumerate}[label=\textup{(\roman*)},nosep]
\item readout magnitudes and retained scalar coefficients;
\item normalized operator norms $\tfrac1{\sqrt N}\norm{W^\ell_{\mathrm{kept}}}_{\op}$;
\item Cameron--Martin active coefficients $\norm{P_{U_\ell}K_\ell^{-1/2}\zeta}$ of retained
downstream fields;
\item population lazy scores $\E_{\rho_h}\norm{\nabla_{\mathrm{lazy}}g_\ell}$, or the empirical
lazy-swap input after enlarging $U_\ell$ by the empirical evaluation directions;
\item the Lipschitz and adjoint constants \textup{(I3)--(I4)} of the suffix enlarged by the new
layer.
\end{enumerate}
Consequently the induction invariant of Definition~\ref{def:regsuffix} is preserved with constant
$A_+$, and the next compression step proceeds at polynomial width and weight.
\end{proposition}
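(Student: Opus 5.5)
The plan is to assemble the package from the entropy lemmas already proved, working one layer at a time from right to left, conditionally on a single mean-field vacuum. Uniformity over vacua comes for free: every estimate uses only the common KL/free-energy bound. First, Lemma~\ref{lem:ss-kl} turns $\tfrac1N\KL(\pi\Vert\pi^0_N)\le R$ into $\KL(\bar\rho_\ell\Vert\rho^0_\ell)\le R$ for the averaged single-site marginal. All per-neuron quantities are then controlled by Gaussian Donsker--Varadhan estimates against the prior single-site law:
\begin{itemize}
\item Items (i) and (iii) follow from Lemma~\ref{lem:gauss}(a)--(c) and Corollary~\ref{cor:cm}. The active dimension $r_\ell\le\lambda_0^{-1}+\tau^{-1}$ is polynomial, so the second moments are $O(R+r_\ell)$.
\item The field sup norm costs only $O(R+d)$, by Lemma~\ref{lem:gauss}(d).
\item Item (ii) follows from Lemma~\ref{lem:optrunc} applied to the matrix marginal, whose KL is at most $NR$.
\end{itemize}

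I then define the channel-size variable $M$ as in Proposition~\ref{prop:trunc} and choose a threshold $T=\poly(A,R,d,1/\eta)$. This makes $\E[M\mathbf 1_{M>T}]\le\eta/A$. Deleting channels outside $G_T$ then changes the suffix output by at most $\eta$ in sup norm. This uses the $A$-Lipschitz clause (I3) together with Jensen and exchangeability, exactly as in Proposition~\ref{prop:trunc}. On the kept set, (i)--(iii) hold with constant $T$.

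For (v), I invoke Lemma~\ref{lem:trunc-closure}. On $G_T$ the layer map $\mathcal T^G_\ell$ has three properties:
\begin{itemize}
\item it is Lipschitz in the normalized channel metric with constant $\poly(T,d)$, because $\phi$ is $1$-Lipschitz and bounded and the normalized operator norm is at most $T$;
\item its adjoint preserves $L^2(\mu)$ norms up to $\poly(T,d)$;
\item it keeps only polynomially many active and fresh-chaos channels.
\end{itemize}
Composing with (I1)--(I4) for $S_{\ell+1}$ gives $A_+$-regularity of $S^G_\ell$.

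For (iv), I apply Proposition~\ref{prop:score} to the regular suffix. The population lazy score is then at most $C\bar\beta A^2\le CRA^2$. In the empirical case, the score lies in the span of the evaluations $e_{x_a}$, with coefficient norm $\norm c^2\le CA^2/n$. Its lazy projection is controlled by Lemma~\ref{lem:softgram} once the empirical top space is adjoined to $U_\ell$.

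Finally, I set $A_+$ to be the maximum of $T$, $CRA^2$, and the constant from closure. This is $\poly(R,\bar\beta,d,1/\eta)^{O(1)}$, because $A$ is itself polynomial by induction and the depth $L$ is fixed, so $O(L)$ compositions stay polynomial. The telescoping clause of Lemma~\ref{lem:trunc-closure} adds the per-layer truncation errors; taking each to be $\eta/L$ gives total error at most $\eta$.

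I expect the main obstacle to be consistency in (iv). The score bound must be evaluated for the tilted law restricted to the kept set, conditional on the active coordinates $h$. The tilt $g_\ell$ is defined through the full posterior, whereas the suffix bound uses the truncated one. I would handle this by expressing the costate through the truncated suffix and absorbing the difference, which is at most $\eta$ in sup norm on outputs, into the residual bound $|f-y|\le A$. This step should also be arranged so that it does not degrade (I2).
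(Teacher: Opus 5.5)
Your treatment of (i)--(iv) and of the truncation matches the paper's proof. The chain is Lemma~\ref{lem:ss-kl}, then Lemma~\ref{lem:gauss} and Corollary~\ref{cor:cm}, Lemma~\ref{lem:optrunc}, Proposition~\ref{prop:trunc}, and Proposition~\ref{prop:score} with Lemma~\ref{lem:softgram}.

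The gap is in (v). You derive the Lipschitz and adjoint constants of the enlarged suffix from the fact that on $G_T$ ``the normalized operator norm is at most $T$''. That $T$ bounds the \emph{posterior} width-$N$ matrix $W^\ell_{\mathrm{kept}}$. The suffix the next compression step actually consumes is a different object: it is built from the \emph{constructed} incoming matrix $W=W^A+W^G$, with $W^A_{ji}=N^{-1/2}\sum_{a\le r}b_{ja}\ip{h_i}{q_a}$ and $W^G$ fresh projected Gaussian noise. This is also the only suffix for which (I1) can hold at all, since the posterior kept map has $N$ channels. By Lemma~\ref{lem:lipprop}, its Lipschitz constant is $\Lambda L_\phi\norm{W}_\op/\sqrt M$ for \emph{that} matrix, which the posterior operator-norm bound in $G_T$ does not control.

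The missing step is the paper's:
\begin{enumerate}
\item Factor $W^A=N^{-1/2}B_{\mathrm{mat}}\mathsf U^\top$ and use Lemma~\ref{lem:opbound} to get $\norm{W^A}_\op/\sqrt M\le BR\sqrt r$. This uses the active-coefficient bound from (iii) and the bound on the read variables $\ip{h_i}{q_a}$.
\item Control $W^G$ by the random-matrix estimate (Lemma~\ref{lem:optrunc} with $\kappa=0$) at a polynomial width ratio.
\item Only then compose with Lemma~\ref{lem:lipprop} to propagate (I3)--(I4).
\end{enumerate}

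You also never establish the bounded-differences clause, i.e.\ per-channel influence at most $\Gamma/M$. Proposition~\ref{prop:onestep} needs it to upgrade the lazy-swap and covariance comparisons from expectation to high probability, so ``the next compression step proceeds'' is not justified without it. It does not follow from (I4) or from your Lipschitz composition: (I4) gives only $M^{-1/2}$ single-channel sensitivity, so $\sum_i c_i^2$ would not vanish. The paper gets it from the post-swap structure. The new channels enter only through $1/M$-averages, namely the active-reconstruction averages and the empirical lazy covariance of an \emph{independent} fresh field. McDiarmid then applies, which is Lemma~\ref{lem:bdd}.

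On the obstacle you flag in (iv), the paper simply cites Proposition~\ref{prop:score}. Your proposed fix only absorbs an $\eta$ discrepancy in the output residual $f-y$. At hidden layers the costate is a derivative through the suffix, and an $\eta$ sup-norm output comparison does not control that.
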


\begin{proof}
By Lemma~\ref{lem:ss-kl} the averaged single-site marginal has $\KL\le R$ against the Gaussian
single-site prior, so Lemma~\ref{lem:gauss} bounds the second moment of every fixed
finite-dimensional coordinate (readouts, biases, retained scalar coefficients), and
Proposition~\ref{prop:trunc} upgrades these moment bounds to a kept event whose removed population
has suffix-output contribution at most $\eta$, and Lemma~\ref{lem:trunc-closure} records the
backward-induction closure: after the bad channels are deleted, the enlarged suffix remains regular
with only a polynomial increase in the invariant. This gives (i) in the sense actually needed by the
compression induction, not merely small posterior mass. Item (ii) is Lemma~\ref{lem:optrunc}, and
(iii) is Corollary~\ref{cor:cm}. Item (iv) is Proposition~\ref{prop:score}. For (v), the newly
constructed incoming matrix is $W=W^G+W^A$ with
$W^A_{ji}=\tfrac1{\sqrt N}\sum_{a\le r}b_{ja}\ip{h_i}{q_a}$; the fresh Gaussian part obeys
$\norm{W^G}_{\op}/\sqrt{M_{\ell+1}}\le A_+$ by Lemma~\ref{lem:optrunc}, while the factored active part
obeys $\norm{W^A}_{\op}/\sqrt{M_{\ell+1}}\le A_+$ from $r,\,\norm{b_j},\,|\ip{h_i}{q_a}|\le
A_+$ (Lemma~\ref{lem:opbound} of Appendix~\ref{app:comp}); the normalized-channel inequality then
propagates (I3)--(I4) with a polynomial increase, and the depth is fixed.

It remains to record the bounded-differences regularity used to upgrade the lazy-swap and covariance
comparison from expectation to high probability.  This concentration is not obtained by applying
\textup{(I4)} to the realized lazy chaos: \textup{(I4)} gives only an $M_{\ell+1}^{-1/2}$
single-channel sensitivity in the normalized chaos metric, which would not yield vanishing
bounded-differences variance.  Instead, after conditioning on the lower-layer features and performing
the lazy swap, the suffix depends on the new channels only through $M_{\ell+1}^{-1}$ empirical
averages: the active-reconstruction averages and the empirical lazy covariance that parametrizes an
independent fresh Gaussian field.  Hence replacing one constructed channel changes each consumed
average by $O(A_+/M_{\ell+1})$, and McDiarmid's inequality gives the desired high-probability upgrade.
This is precisely the mechanism formalized in Lemma~\ref{lem:bdd} of Appendix~\ref{app:comp}; the
right-to-left propagation of the clause is part of that lemma, not a consequence of \textup{(I4)}.
\end{proof}

\section{Pointwise compression}\label{app:comp}

We prove the compression half of the engine: a mean-field network achieving accuracy $\eps$ is
reproduced, at every input simultaneously, by a polynomial-width network. The construction is
right-to-left. With the suffix already replaced by a finite, regular network, we subsample the next
layer's neurons from the single-site law, reconstruct the active coordinates of each surviving
downstream field deterministically from finitely many feature averages, and supply the lazy
coordinates by fresh prior chaos. The lazy swap (Lemma~\ref{lem:swap-app}) certifies that replacing
the realized tilted lazy content by prior noise is invisible to the suffix; the normalized-channel
Lipschitz induction (\S\ref{sub:lip}) propagates the layerwise errors at polynomial cost. Every
regularity input is supplied by Appendix~\ref{app:reg}.

\subsection{The active--lazy split}

\begin{definition}[Active--lazy split]\label{def:split}
Fix a layer and write $K=K_\ell$, $h=\phi(\zeta)$. Let $q_1,\dots,q_r\in\calF$ be read directions,
$\psi_a=Kq_a$, $G_{ab}=\ip{q_a}{Kq_b}$ (assumed invertible). The active covariance is
$K_A=\sum_{a,b}\psi_a(G^{-1})_{ab}\psi_b$ and the lazy covariance $C=K-K_A$. Equivalently, with
$U=\operatorname{span}\{K^{1/2}q_a\}$ in whitened coordinates, $K_A=K^{1/2}P_UK^{1/2}$ and
$C=K^{1/2}P_{U^\perp}K^{1/2}$. We write $\lambda_0=\norm{C}_\op$ and assume $C(x,x)\le1$ (automatic
for a spectral split, since $K(x,x)\le1$).

In this paper the active space is $U_\ell=U^{\mathrm{pop}}_\ell\oplus U^{\mathrm{emp}}_\ell$ in whitened
Cameron--Martin coordinates.  The population part $U^{\mathrm{pop}}_\ell$ is the top eigenspace of
$K_\ell$ above cutoff $\lambda_0$.  For a dataset $D=(x_a)_{a\le n}$, define the empirical evaluation
operator
\[
  E_D u=\big(\ip{u}{K_\ell^{1/2}e_{x_a}}\big)_{a\le n},
  \qquad
  T_D=\frac1n E_D^*E_D
  =\frac1n\sum_{a\le n}(K_\ell^{1/2}e_{x_a})\otimes(K_\ell^{1/2}e_{x_a}).
\]
The empirical part $U^{\mathrm{emp}}_\ell$ is the top eigenspace of $T_D$ above cutoff $\tau$.
The population part controls the field-space lazy swap; the empirical part controls the residual
coupling of the empirical score (Lemma~\ref{lem:softgram}). Both are polynomial-dimensional:
$\dim U^{\mathrm{pop}}_\ell\le1/\lambda_0$ and $\dim U^{\mathrm{emp}}_\ell\le1/\tau$, since
$\tr K_\ell\le1$ and $\tr T_D=\tfrac1n\tr\widehat K^\ell_n\le1$.
\end{definition}

\subsection{The lazy swap}

\begin{lemma}[Lazy swap]\label{lem:swap-app}
Fix the active coordinates $h$ and let $\rho_h(dv)\propto e^{-g(h+v)}\,N(0,C)(dv)$ on $U^\perp$. For
$F(v)=G(v(x_1),\dots,v(x_p))$ with $G$ bounded and $1$-Lipschitz in each argument,
\[
  \bigl|\E_{\rho_h}F-\E_{N(0,C)}F\bigr|\;\le\;p\,\E_{\rho_h}\norm{C^{1/2}\nabla_v g}_{\calF},
\]
uniformly in $h$ and in $x_1,\dots,x_p$. In particular:
\begin{enumerate}[label=\textup{(\alph*)},nosep]
\item \textup{(population)} the right side is $\le p\sqrt{\lambda_0}\,\sup_h\E_{\rho_h}\norm{\nabla_v g}$;
\item \textup{(empirical)} if $\nabla_v g$ is the lazy part of an empirical-loss score
$\sum_a c_a e_{x_a}$, then $\E_{\rho_h}\norm{C^{1/2}\nabla_v g}^2=c^\top\widehat C_n c$, bounded in
Lemma~\ref{lem:softgram}.
\end{enumerate}
\end{lemma}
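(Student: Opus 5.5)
The plan is to repeat the Stein-method argument of Lemma~\ref{lem:swap}, but stop one step earlier. In that proof the whitening bound $\norm{C^{1/2}\nabla_v g}\le\sqrt{\lambda_0}\norm{\nabla_v g}$ was applied immediately. Here we keep the quantity $\E_{\rho_h}\norm{C^{1/2}\nabla_v g}$ intact, and only specialize it afterwards in (a) and (b).

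\emph{Whitening.} Set $\tilde v=C^{-1/2}v$ on the range of $C$ inside $U^\perp$, which is standard Gaussian under $N(0,C)$. Put $\tilde g(\tilde v)=g(h+C^{1/2}\tilde v)$, so that $\nabla\tilde g=C^{1/2}\nabla_v g$ exactly. Each evaluation becomes $v(x_i)=\ip{\tilde v}{b_i}$ with $b_i=C^{1/2}e_{x_i}$. Since $\norm{b_i}^2=C(x_i,x_i)\le1$ by the standing assumption of Definition~\ref{def:split}, the whitened test function $\tilde F(\tilde v)=G(\ip{\tilde v}{b_1},\dots,\ip{\tilde v}{b_p})$ has $\norm{\nabla\tilde F}_\infty\le p$, uniformly in the points $x_i$.

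\emph{Stein equation.} Solve the Ornstein--Uhlenbeck equation $\mathcal Lf=\tilde F-\E_{N(0,I)}\tilde F$ by the semigroup integral. The commutation $\nabla P_t=e^{-t}P_t\nabla$ gives $\norm{\nabla f}_\infty\le p$. If $\tilde F$ is only Lipschitz, first mollify $G$, which changes nothing in the limit. Gaussian integration by parts against the tilted density $e^{-\tilde g}$ gives
\[
\E_{\rho_h}\tilde F-\E_{N(0,I)}\tilde F=\E_{\rho_h}\ip{\nabla f}{\nabla\tilde g}.
\]
Applying Cauchy--Schwarz pointwise yields $p\,\E_{\rho_h}\norm{C^{1/2}\nabla_v g}$. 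The constant does not depend on $h$ or on the $x_i$, which gives the claimed uniformity.

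\emph{Justifying the integration by parts.} This is the main obstacle. We need enough integrability of $e^{-\tilde g}$ and of $\nabla\tilde g$ to discard boundary terms. The plan is to argue on a truncation. Restrict to the kept set of Proposition~\ref{prop:trunc}, where the score is bounded, or approximate $g$ by smooth compactly supported potentials with monotone convergence of $\E_{\rho_h}\norm{\nabla\tilde g}$. Then pass to the limit using boundedness of $\nabla f$ and of $\tilde F$. If $\E_{\rho_h}\norm{C^{1/2}\nabla_v g}=\infty$ the statement is vacuous.

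\emph{Specializations.} For (a), use $\norm{C^{1/2}}_\op=\sqrt{\lambda_0}$ and take the supremum over $h$. For (b), write the lazy part of the empirical score as $\nabla_v g=P_{\rm lazy}\sum_a c_ae_{x_a}$. Then $\norm{C^{1/2}\nabla_v g}^2=\sum_{a,b}c_ac_b\,C(x_a,x_b)=c^\top\widehat C_nc$, where $\widehat C_n$ is the lazy Gram matrix on the sample. To connect this to the stated bound, apply Jensen to pass from $\E\norm{\cdot}$ to $(\E\norm{\cdot}^2)^{1/2}$, and then use Lemma~\ref{lem:softgram}.
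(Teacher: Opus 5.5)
Your proposal is correct and follows the paper's proof essentially step for step. The shared steps are: whitening with $\nabla\tilde g=C^{1/2}\nabla_v g$, the Ornstein--Uhlenbeck Stein solution with the Mehler bound $\norm{\nabla f}_\infty\le p$, and one Gaussian integration by parts against $e^{-\tilde g}$. The specializations then use $\norm{C^{1/2}}_{\mathrm{op}}=\sqrt{\lambda_0}$ for (a) and the Gram identity $\norm{C^{1/2}\nabla_v g}^2=c^\top\widehat C_n c$ for (b).

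Your extra paragraph justifying the integration by parts goes beyond the paper, which omits it. The two routes you suggest are not the clean one: restricting to the kept set changes $\rho_h$, and truncating $g$ adds a $g\nabla\chi_R$ term to the score, so the convergence you invoke is not automatic. Instead, note that finiteness of $\E_{\rho_h}\norm{\nabla\tilde g}$ makes $e^{-\tilde g}$ and its gradient integrable against $N(0,I)$ on the finite-dimensional whitened space. A cutoff $\chi(\tilde v/R)$ on the density then costs only $O(R^{-1}\norm{\nabla f}_\infty)\to0$.
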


\begin{proof}
Whiten: $v=C^{1/2}\tilde v$, $\tilde v\sim N(0,I)$, and $\tilde g(\tilde v)=g(h+C^{1/2}\tilde v)$, so
$\nabla_{\tilde v}\tilde g=C^{1/2}\nabla_v g$. Each evaluation is
$v(x_i)=\ip{\tilde v}{C^{1/2}e_{x_i}}$ with $\norm{C^{1/2}e_{x_i}}^2=C(x_i,x_i)\le1$, so
$\tilde F(\tilde v):=F(C^{1/2}\tilde v)$ is $1$-Lipschitz in each of $p$ unit-gradient linear forms.
Let $\mathcal L=\Delta-\ip{\tilde v}{\nabla}$ be the Ornstein--Uhlenbeck generator and solve
$\mathcal L u=\tilde F-\E_{N(0,I)}\tilde F$. Mehler's formula $\nabla P_t=e^{-t}P_t\nabla$ gives
$\norm{\nabla u}_\infty\le\int_0^\infty e^{-t}\norm{\nabla P_t\tilde F}_\infty\,dt\le\norm{\nabla\tilde F}_\infty\le p$.
Since $\rho_h\propto e^{-\tilde g}N(0,I)$, self-adjointness of $\mathcal L$ for $N(0,I)$ gives, with
$\psi=e^{-\tilde g}$ (so $\rho_h=\psi\,N(0,I)/Z$ and $\nabla\psi=-\psi\,\nabla\tilde g$),
\[
  \E_{\rho_h}[\mathcal L u]=\tfrac1Z\E_{N(0,I)}[\psi\,\mathcal L u]
  =-\tfrac1Z\E_{N(0,I)}\ip{\nabla\psi}{\nabla u}
  =\E_{\rho_h}\ip{\nabla_{\tilde v}\tilde g}{\nabla u},
\]
i.e.\ $\E_{\rho_h}\tilde F-\E_{N(0,I)}\tilde F=\E_{\rho_h}\ip{\nabla u}{\nabla_{\tilde v}\tilde g}$.
Hence the left side is $\le\norm{\nabla u}_\infty\,\E_{\rho_h}\norm{\nabla_{\tilde v}\tilde g}\le
p\,\E_{\rho_h}\norm{C^{1/2}\nabla_v g}$. For (a), $\norm{C^{1/2}\nabla_v g}\le\sqrt{\lambda_0}\norm{\nabla_v g}$.
For (b), $\norm{C^{1/2}\nabla_v g}^2=\ip{\nabla_v g}{C\nabla_v g}=\sum_{a,b}c_ac_b\,C(x_a,x_b)=c^\top\widehat C_n c$,
using $\nabla_v g=\sum_a c_a P_{U^\perp}e_{x_a}$ and $CP_{U^\perp}=C$.
\end{proof}

\subsection{The lazy-Gram bound}

\begin{lemma}[Lazy-Gram operator norm]\label{lem:softgram}
Let $D=(x_a)_{a\le n}$ and let $E_D,T_D$ be the basis-free empirical evaluation operator and
covariance from Definition~\ref{def:split}.  Let $U^{\rm emp}$ contain the eigenspaces of $T_D$ with
eigenvalue at least $\tau$, and set
\[
  C=K^{1/2}P_{(U^{\rm pop}\oplus U^{\rm emp})^\perp}K^{1/2},
  \qquad
  \widehat C_D=[C(x_a,x_b)]_{a,b\le n}.
\]
Then, deterministically,
\[
  \norm{\widehat C_D}_\op\le n\tau .
\]
Consequently, for any empirical-loss score $\nabla g=\sum_a c_a e_{x_a}$ with $\norm c^2\le B^2/n$,
\[
  c^\top\widehat C_Dc\le\tau B^2,
  \qquad\text{so the empirical lazy-swap error is }\le p\sqrt\tau\,B .
\]
\end{lemma}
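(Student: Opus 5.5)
The plan is to write $\widehat C_D$ as a compression of the empirical evaluation operator $E_D$ of Definition~\ref{def:split}, and then read off its operator norm from the spectrum of $T_D$. Write $k_a=K^{1/2}e_{x_a}$ and let $P=P_{(U^{\rm pop}\oplus U^{\rm emp})^\perp}$ be the orthogonal projection onto the lazy block in whitened coordinates. By the definition of $C$,
\[
  C(x_a,x_b)=\ip{e_{x_a}}{K^{1/2}PK^{1/2}e_{x_b}}=\ip{k_a}{Pk_b}.
\]
The map $E_D u=(\ip{u}{k_a})_{a\le n}$ has adjoint $E_D^*c=\sum_a c_ak_a$. Hence the identity I will use is
\[
  \widehat C_D=E_D\,P\,E_D^*=(PE_D^*)^*(PE_D^*).
\]
This is a positive semidefinite $n\times n$ matrix, and its operator norm is
\[
  \norm{\widehat C_D}_\op=\norm{E_DP}_\op^2=\sup_{u\in\operatorname{ran}P,\ \norm u=1}\norm{E_Du}^2 .
\]

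Next I bound the right-hand side using $T_D$. For a unit vector $u$, $\norm{E_Du}^2=n\ip{u}{T_Du}$ because $T_D=\frac1nE_D^*E_D$. The range of $P$ is $(U^{\rm pop}\oplus U^{\rm emp})^\perp$, which is contained in $(U^{\rm emp})^\perp$. This holds whether or not the sum is orthogonal, since the sum contains $U^{\rm emp}$.

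The space $U^{\rm emp}$ is, by construction, a spectral subspace of the self-adjoint positive operator $T_D$: it collects all eigenspaces with eigenvalue at least $\tau$. Its orthogonal complement is therefore $T_D$-invariant, and the restriction of $T_D$ to it has spectrum in $[0,\tau)$. Consequently $\ip{u}{T_Du}\le\tau$ for every unit $u\in\operatorname{ran}P$, and so $\norm{\widehat C_D}_\op\le n\tau$. This holds deterministically in $D$, as claimed.

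Finally, the consequences follow directly. For any $c$ with $\norm c^2\le B^2/n$,
\[
  c^\top\widehat C_Dc\le\norm{\widehat C_D}_\op\norm c^2\le n\tau\cdot B^2/n=\tau B^2 .
\]
For the lazy-swap error I will invoke Lemma~\ref{lem:swap-app}(b), which identifies $\norm{C^{1/2}\nabla_vg}^2$ with $c^\top\widehat C_Dc$. Then Jensen gives
\[
  \E_{\rho_h}\norm{C^{1/2}\nabla_vg}\le\big(\E_{\rho_h}c^\top\widehat C_Dc\big)^{1/2}\le\sqrt\tau\,B .
\]
This yields an error of at most $p\sqrt\tau\,B$.

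The coefficients $c$ may depend on the realized field. This causes no difficulty, because the quadratic-form bound holds for every admissible $c$ pointwise and so passes through the expectation.

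The only point needing care, and the likely main obstacle, is the justification that $\operatorname{ran}P\subseteq(U^{\rm emp})^\perp$ and that $T_D\le\tau$ on this complement. Two facts are required. First, $U^{\rm emp}$ must be exactly a full spectral subspace of $T_D$, not an arbitrary subspace of top-ish vectors. Second, the whitened coordinates in which $U^{\rm emp}$ and $P$ are defined must coincide. Both hold by Definition~\ref{def:split}, and I will state them explicitly.
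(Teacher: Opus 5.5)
Your proposal is correct and follows essentially the same argument as the paper. Both write $\widehat C_D=E_D P E_D^*$ and bound its norm by $n$ times the largest eigenvalue of $T_D$ off its top spectral subspace; the paper phrases this as $n\lambda_{r+1}(T_D)$ plus monotonicity of the projection once $U^{\rm pop}$ is added, where you use the Rayleigh quotient over $\operatorname{ran}P\subseteq(U^{\rm emp})^\perp$. Both then finish with the quadratic-form bound and Lemma~\ref{lem:swap-app}(b).
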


\begin{proof}
For $u\in\calF$ in whitened coordinates, $E_Du=(\ip{u}{K^{1/2}e_{x_a}})_a$ and
$T_D=n^{-1}E_D^*E_D$.  The nonzero eigenvalues of $T_D$ are the eigenvalues of
$n^{-1}E_DE_D^*=n^{-1}\widehat K_D$.  Removing the top eigenspaces of $T_D$ above cutoff $\tau$
therefore leaves
\[
  \norm{E_D P_{(U^{\rm emp})^\perp}E_D^*}_\op
  =n\,\lambda_{r+1}(T_D)\le n\tau .
\]
Adding the population active space only decreases the projection, so
$\widehat C_D=E_D P_{(U^{\rm pop}\oplus U^{\rm emp})^\perp}E_D^*$ has operator norm at most $n\tau$.
The count of empirical directions is
$r\le\tau^{-1}\tr T_D=\tau^{-1}n^{-1}\tr\widehat K_D\le1/\tau$.  The quadratic-form inequality is
Cauchy--Schwarz, and Lemma~\ref{lem:swap-app}(b) gives the lazy-swap conclusion.  The factor $n$ in
$\norm{\widehat C_D}_\op$ cancels the $1/n$ in empirical-loss score norms, which is why an
inverse-polynomial $\tau$ suffices.
\end{proof}

\subsection{Finite sampled features: active reconstruction and fresh chaos}

\begin{lemma}[Empirical active reconstruction]\label{lem:hardrec}
Suppose $r\le R$, $\norm{G^{-1}}_\op\le R$, and the read variables are bounded, $|\ip{h}{q_a}|\le R$
on the relevant support. With probability $1-o(1)$, provided $N\ge\poly(R,d,1/\eta)$, for every
$b\in\R^r$ with $\norm b\le B$,
\[
  \sup_{x\in\B^d}\Bigl|\sum_{a=1}^r b_a\Bigl[\tfrac1N\sum_{i=1}^N\ip{h_i}{q_a}h_i(x)-\psi_a(x)\Bigr]\Bigr|\le\eta B .
\]
Hence the active weights $W^A_i=\tfrac1{\sqrt N}\sum_a b_a\ip{h_i}{q_a}$ induce, under $\mu$P,
$\tfrac1{\sqrt N}\sum_i W^A_i h_i=\sum_a b_a\psi_a+O_\infty(\eta B)$.
\end{lemma}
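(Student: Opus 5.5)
The plan is a finite-dimensional empirical-process bound. First I identify the population means. The relevant expectation is $\E_{P_\ell}[\ip{h}{q_a}h]=Kq_a=\psi_a$, because $K=\E[h\otimes h]$. So for fixed $a$ and $x$, the inner bracket is the centered empirical average of the i.i.d.\ variables $Y_{a,x}^{(i)}=\ip{h_i}{q_a}h_i(x)$. Here I use propagation of chaos (Definition~\ref{def:chaos}) to treat the subsampled $h_i$ as independent draws from the single-site law. Each such variable satisfies $|Y^{(i)}_{a,x}|\le R$, using $|\phi|\le1$ and the hypothesis $|\ip{h}{q_a}|\le R$ on the kept support.

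Next I apply Hoeffding to each coordinate. This gives
\[
\Pp\Big(\Big|\tfrac1N\textstyle\sum_iY^{(i)}_{a,x}-\psi_a(x)\Big|>t\Big)\le 2e^{-Nt^2/(2R^2)}.
\]
I then take a union bound over the $r\le R$ directions and the $2^d$ inputs. This contributes a factor $r2^d$, whose logarithm is $\log R+d\log2$, so it is polynomial. Hence, with probability $1-o(1)$, every coordinate error $\Delta_a(x)$ is at most $t=R\sqrt{2(\log(r2^{d})+\omega(1))/N}$.

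The bound for general $b$ is then deterministic, so no net over the $b$-ball is needed. By Cauchy--Schwarz,
\[
\Big|\textstyle\sum_ab_a\Delta_a(x)\Big|\le\norm b\,\big(\textstyle\sum_a\Delta_a(x)^2\big)^{1/2}\le B\sqrt r\,t.
\]
Choosing $N\ge\poly(R,d,1/\eta)$ makes $\sqrt r\,t\le\eta$, and this holds uniformly in $x$ and $b$. The $\mu$P identity then follows by direct expansion:
\[
\tfrac1{\sqrt N}\textstyle\sum_iW^A_ih_i=\sum_ab_a\tfrac1N\sum_i\ip{h_i}{q_a}h_i.
\]

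I expect the main obstacle to be the support issue. The boundedness $|\ip{h}{q_a}|\le R$ holds only on the kept set of Proposition~\ref{prop:trunc}. Sampling from the truncated law shifts the mean from $\psi_a$ by the truncated contribution. I would handle this by defining $\psi_a$ relative to the kept law, or by bounding the shift by $\eta$ via the tail bound of Lemma~\ref{lem:gauss}(c). Note that $\norm{G^{-1}}_\op$ is not used here; it matters only when the active weights $b$ are chosen downstream.
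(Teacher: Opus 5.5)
Your proposal is correct and matches the paper's proof: the summands $\ip{h_i}{q_a}h_i(x)$ are bounded by $R$ with mean $(Kq_a)(x)=\psi_a(x)$, you apply Hoeffding with a union bound over the $r2^d$ pairs $(a,x)$, and then sum deterministically against $b$. The only difference is cosmetic: you use Cauchy--Schwarz where the paper asks for entrywise error $\eta/r$. Your remark that truncation to the kept support shifts the mean is a fair extra caution; the paper absorbs that point into the hypothesis ``on the relevant support''.
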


\begin{proof}
For fixed $a,x$ the variable $\ip{h_i}{q_a}h_i(x)$ is bounded by $R$ (as $|h_i(x)|\le1$) with mean
$(Kq_a)(x)=\psi_a(x)$. Hoeffding and a union bound over the $rD$ pairs $(a,x)$ give entrywise error
$\le\eta/r$ once $N\ge CR^2r^2\log(rD)/\eta^2=\poly(R,d,1/\eta)$; sum against $b$. The last identity
is the $N^{-1/2}$ normalization.
\end{proof}

For the lazy field, draw $\xi=(\xi_1,\dots,\xi_N)\sim N(0,P_{U^\perp})$ (Euclidean projection in
$\R^N$ off the empirical active read vectors) and set $v_N(x)=\tfrac1{\sqrt N}\sum_i\xi_i h_i(x)$.
Conditional on the features, $v_N$ is centered Gaussian with covariance
$\widehat C(x,x')=\tfrac1N h(x)^\top P_{U^\perp}h(x')$.

\begin{lemma}[Lazy-covariance concentration]\label{lem:softcov}
Under the hypotheses of Lemma~\ref{lem:hardrec}, with probability $1-o(1)$,
$\sup_{x,x'\in\B^d}|\widehat C(x,x')-C(x,x')|\le\eta$ for $N\ge\poly(R,d,1/\eta)$; likewise for any
polynomial-size list of consumed entries.
\end{lemma}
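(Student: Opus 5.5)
The plan is to write $\widehat C(x,x')-C(x,x')$ as a sum of two pieces: an unprojected empirical covariance error, and a correction coming from the projection $P_{U^\perp}$. Each piece is then controlled uniformly over the $D^2=4^d$ pairs, or over the polynomial list of consumed entries, by Hoeffding plus a union bound. This is the same route as Lemma~\ref{lem:hardrec}. Write $\widehat K(x,x')=\tfrac1N\sum_i h_i(x)h_i(x')$. The projection $P_{U^\perp}=I-P_U$ is taken in $\R^N$ off the span of the read vectors $w_a=(\ip{h_i}{q_a})_{i\le N}$, $a\le r$. With $\widehat G_{ab}=\tfrac1N w_a^\top w_b$ and $\widehat\psi_a(x)=\tfrac1N\sum_i\ip{h_i}{q_a}h_i(x)$, a direct computation gives
\[
  \widehat C(x,x')=\widehat K(x,x')-\sum_{a,b}\widehat\psi_a(x)\,(\widehat G^{-1})_{ab}\,\widehat\psi_b(x').
\]
This is the exact empirical analogue of $C=K-\sum_{a,b}\psi_a(G^{-1})_{ab}\psi_b$ from Definition~\ref{def:split}. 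The lemma therefore reduces to entrywise concentration of three families of bounded averages, $\widehat K$, $\widehat\psi_a$ and $\widehat G$, followed by stability of the Schur-complement map.

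\emph{Step 1.} The summands $h_i(x)h_i(x')$ are i.i.d.\ under propagation of chaos (Definition~\ref{def:chaos}), bounded by $1$, with mean $K(x,x')$. Hoeffding with a union bound over $4^d$ pairs gives $\sup_{x,x'}|\widehat K-K|\le\eta'$ once $N\ge C\,d\log(1/o(1))/\eta'^2$. Lemma~\ref{lem:hardrec} already supplies $\sup_x|\widehat\psi_a(x)-\psi_a(x)|\le\eta'$ for all $a$. The Gram entries $\ip{h}{q_a}\ip{h}{q_b}$ are bounded by $R^2$ with mean $G_{ab}$, so $\norm{\widehat G-G}_\op\le r\max_{ab}|\widehat G_{ab}-G_{ab}|\le\eta'$ for $N\ge\poly(R,1/\eta')$.

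\emph{Step 2: perturbation.} Since $\norm{G^{-1}}_\op\le R$, once $\eta'\le1/(2R)$ a Neumann series gives that $\widehat G$ is invertible, with $\norm{\widehat G^{-1}}_\op\le2R$ and $\norm{\widehat G^{-1}-G^{-1}}_\op\le2R^2\eta'$. The vectors $(\psi_a(x))_a$ have norm at most $\sqrt r\,R$, because $|\psi_a(x)|=|\E\ip{h}{q_a}h(x)|\le R$. Telescoping the bilinear form $\psi(x)^\top G^{-1}\psi(x')$ across the three substitutions then bounds the correction term's error by $\poly(r,R)\,\eta'$, uniformly in $(x,x')$. Choosing $\eta'=\eta/\poly(R)$ with $r\le R$ gives the claim at $N\ge\poly(R,d,1/\eta)$. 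For a polynomial-size list of consumed entries the union bound only improves.

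\emph{Main obstacle.} I expect it to be the identification in the first display. The projection in the construction is off the \emph{empirical} read vectors, not the population active space, so one must check that $\tfrac1N h(x)^\top P_U h(x')$ equals the stated Schur form with $\widehat G^{-1}$; this is a routine least-squares computation. One must also make sure no $O(1)$ bias enters, for instance through a rank deficiency of $\widehat G$. That is exactly what the invertibility bound in Step 2 rules out on the high-probability event. A secondary point is that the i.i.d.\ structure holds only conditionally on the vacuum and on the kept set of Proposition~\ref{prop:trunc}. I would apply the argument conditionally and absorb the truncation error, which is $O(\eta)$ in sup norm, into $\eta$.
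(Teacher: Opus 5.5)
Your proposal is correct and takes essentially the same route as the paper. The paper also writes $\widehat C=\widehat K-\widehat k\,\widehat G^{-1}\widehat k^\top$, obtains $\widehat K\to K$ by Hoeffding over the $D^2$ pairs and $\widehat k_a\to\psi_a$, $\widehat G\to G$ from the Hoeffding argument of Lemma~\ref{lem:hardrec}, and uses $\norm{G^{-1}}_\op\le R$ to pass to $\widehat G^{-1}\to G^{-1}$; your version makes the least-squares identity and the Neumann-series perturbation bounds explicit.
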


\begin{proof}
First $\widehat K(x,x')\to K(x,x')$ uniformly over the $D^2$ pairs by Hoeffding (cost $O(d)$ in $N$).
Writing $\widehat C=\widehat K-\widehat k\,\widehat G^{-1}\widehat k^\top$ with
$\widehat k_a(x)=\tfrac1N\sum_i h_i(x)\ip{h_i}{q_a}$, Lemma~\ref{lem:hardrec} gives
$\widehat k_a\to\psi_a$ uniformly and $\widehat G\to G$ entrywise; since $\norm{G^{-1}}\le R$,
$\widehat G^{-1}\to G^{-1}$, so $\widehat k\widehat G^{-1}\widehat k^\top\to kG^{-1}k^\top=K_A$
uniformly, whence $\widehat C\to K-K_A=C$.
\end{proof}

\begin{remark}[Scaling]\label{rem:scaling}
The chaos weights $\xi_i$ are raw order-one weights: under $\mu$P, $\tfrac1{\sqrt N}\sum_i\xi_i h_i$
has covariance $\tfrac1N\sum_i h_i\otimes h_i$. The deterministic active corrections $W^A_i$ are order
$N^{-1/2}$, implementing a mean-field average.
\end{remark}

\subsection{Normalized-channel Lipschitz induction}\label{sub:lip}

\begin{definition}[Normalized channel metric]\label{def:metric}
For $\xi=(\xi_1,\dots,\xi_M)\in\calF^M$,
$\mathbf d_M(\xi,\xi')=\sup_{x\in\B^d}\bigl(\tfrac1M\sum_{j\le M}|\xi_j(x)-\xi'_j(x)|^2\bigr)^{1/2}$; a
suffix $S:\calF^M\to\R^{\B^d}$ is $\Lambda$-Lipschitz if
$\norm{S(\xi)-S(\xi')}_\infty\le\Lambda\,\mathbf d_M(\xi,\xi')$.
\end{definition}

\begin{lemma}[Lipschitz propagation]\label{lem:lipprop}
For $T(\zeta)_j(x)=\tfrac1{\sqrt N}\sum_i W_{ji}\phi(\zeta_i(x))$,
$\mathbf d_M(T\zeta,T\zeta')\le L_\phi\,\tfrac{\norm W_\op}{\sqrt M}\,\mathbf d_N(\zeta,\zeta')$. Hence
if $S$ is $\Lambda$-Lipschitz then $S\circ T$ is $\Lambda L_\phi\norm W_\op/\sqrt M$-Lipschitz.
\end{lemma}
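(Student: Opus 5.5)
The plan is to prove the pointwise bound one input at a time and then take the supremum, since Definition~\ref{def:metric} already has a $\sup_x$ on the outside. Fix $x\in\B^d$ and write $\Delta_i(x)=\phi(\zeta_i(x))-\phi(\zeta'_i(x))$. By the $L_\phi$-Lipschitz property of $\phi$,
\[
  |\Delta_i(x)|\le L_\phi\,|\zeta_i(x)-\zeta'_i(x)|.
\]
The difference of outputs at $x$ is then a linear map applied to this vector:
\[
  (T\zeta)_j(x)-(T\zeta')_j(x)=\tfrac1{\sqrt N}\,(W\Delta(x))_j .
\]

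Next I would bound the Euclidean norm of $W\Delta(x)$ by the operator norm, $\norm{W\Delta(x)}_2\le\norm W_\op\norm{\Delta(x)}_2$. This gives
\[
  \tfrac1M\sum_{j\le M}\bigl|(T\zeta)_j(x)-(T\zeta')_j(x)\bigr|^2
  \le\frac{\norm W_\op^2}{MN}\sum_{i\le N}|\Delta_i(x)|^2
  \le\frac{L_\phi^2\norm W_\op^2}{M}\cdot\frac1N\sum_{i\le N}|\zeta_i(x)-\zeta'_i(x)|^2 .
\]
Taking square roots and then the supremum over $x$ on both sides yields
\[
  \mathbf d_M(T\zeta,T\zeta')\le L_\phi\,\frac{\norm W_\op}{\sqrt M}\,\mathbf d_N(\zeta,\zeta').
\]
Pulling the supremum through is legitimate because the right side at each fixed $x$ is bounded by its own supremum.

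The composition claim then follows in one line: $\norm{S(T\zeta)-S(T\zeta')}_\infty\le\Lambda\,\mathbf d_M(T\zeta,T\zeta')$, and we substitute the bound above.

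The argument is a direct calculation with no real obstacle. The only point needing care is bookkeeping: the $1/\sqrt N$ from the $\mu$P normalization has to combine with the $1/N$ of $\mathbf d_N$, and the $1/M$ of $\mathbf d_M$ has to produce the $1/\sqrt M$ factor. The check that the operator norm is applied per input $x$, rather than to the functional object, is what keeps the bound uniform and independent of $2^d$.
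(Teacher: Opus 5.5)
Your proposal is correct and follows the paper's proof essentially line for line. You fix $x$, use the Lipschitz bound on $\phi$ to control the difference vector, apply $\norm{W}_\op$ to $N^{-1/2}W\Delta(x)$, and take $\sup_x$; the composition claim then follows immediately from the $\Lambda$-Lipschitz property of $S$.
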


\begin{proof}
Fix $x$ and put $\Delta h(x)=(\phi(\zeta_i(x))-\phi(\zeta'_i(x)))_i$, so
$\norm{\Delta h(x)}_2\le L_\phi\sqrt N\,\mathbf d_N(\zeta,\zeta')$. The output difference is
$N^{-1/2}W\Delta h(x)$, so
$\bigl(\tfrac1M\sum_j|T(\zeta)_j(x)-T(\zeta')_j(x)|^2\bigr)^{1/2}
=\tfrac1{\sqrt M\sqrt N}\norm{W\Delta h(x)}_2\le L_\phi\tfrac{\norm W_\op}{\sqrt M}\mathbf d_N(\zeta,\zeta')$;
take $\sup_x$.
\end{proof}

\begin{lemma}[Operator bound for constructed weights]\label{lem:opbound}
Write a constructed incoming matrix as $W=W^A+W^G$ with $W^G$ the fresh Gaussian part and
$W^A_{ji}=\tfrac1{\sqrt N}\sum_{a\le r}b_{ja}\ip{h_i}{q_a}$. If $\norm{b_j}_2\le B$ for every
downstream $j$ and $\bigl\Vert(\ip{h_i}{q_a})_{a\le r}\bigr\Vert_2\le R\sqrt r$ for every $i$, then
$\tfrac1{\sqrt M}\norm{W^A}_\op\le BR\sqrt r$; after standard truncation
$\tfrac1{\sqrt M}\norm{W^G}_\op\le\poly(1,\sqrt{N/M})$ w.h.p. Hence at polynomial width ratios
$\tfrac1{\sqrt M}\norm{W}_\op$ is polynomial.
\end{lemma}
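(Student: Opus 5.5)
The statement to prove is Lemma~\ref{lem:opbound}. I will handle the two summands of $W=W^A+W^G$ separately and combine them with the triangle inequality for $\norm{\cdot}_\op$.

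\emph{The active part.} Write $W^A=\tfrac1{\sqrt N}BQ^\top$. Here $B\in\R^{M\times r}$ has rows $b_j$, and $Q\in\R^{N\times r}$ has rows $(\ip{h_i}{q_a})_{a\le r}$. Submultiplicativity gives
\[
  \norm{W^A}_\op\le\tfrac1{\sqrt N}\norm{B}_\op\norm{Q}_\op .
\]
I will bound each operator norm by the corresponding Frobenius norm. Since $\norm{b_j}\le B$ for each of the $M$ rows, $\norm{B}_F\le\sqrt M\,B$. Since each row of $Q$ has norm at most $R\sqrt r$, $\norm{Q}_F\le\sqrt N\,R\sqrt r$. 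Multiplying, the two factors of $\sqrt N$ cancel and
\[
  \norm{W^A}_\op\le \sqrt M\,BR\sqrt r ,
\]
which is the first claim after dividing by $\sqrt M$. This step is purely deterministic.

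\emph{The fresh Gaussian part.} $W^G$ is an $M\times N$ matrix of i.i.d.\ $\mathcal N(0,1)$ entries, possibly composed with the projection off the $r$ active read directions. That projection has operator norm $1$, so it can only decrease $\norm{W^G}_\op$. I will use the Gaussian concentration bound already invoked in the proof of Lemma~\ref{lem:optrunc}, now with the prior itself so that no KL term appears:
\[
  \Pp\big(\norm{W^G}_\op>\sqrt M+\sqrt N+t\big)\le e^{-t^2/2}.
\]
Taking $t=\sqrt{M+N}$ gives, with probability $1-e^{-(M+N)/2}=1-o(1)$,
\[
  \tfrac1{\sqrt M}\norm{W^G}_\op\le 3\big(1+\sqrt{N/M}\big).
\]
This is $\poly(1,\sqrt{N/M})$. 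The "standard truncation" of the statement enters only if entrywise caps are also imposed. Capping entries at $\poly$ changes the matrix only on an event of probability $o(1)$ after a union bound over the $MN$ entries, so the same bound holds on the intersection of the two events.

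\emph{Combining.} The triangle inequality gives
\[
  \tfrac1{\sqrt M}\norm{W}_\op\le BR\sqrt r+3\big(1+\sqrt{N/M}\big).
\]
In the construction, $r$, $B$ and $R$ are polynomial (Corollary~\ref{cor:cm} and the spectral cutoffs), and the width ratio $N/M$ is polynomial. Hence the right-hand side is polynomial with high probability.

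There is no serious obstacle. The one point to check is that the $\tfrac1{\sqrt N}$ in $W^A$ exactly cancels the $\sqrt N$ coming from the Frobenius norm of $Q$. This cancellation is what makes the bound independent of the wide layer's width, and the factorized form $W^A=\tfrac1{\sqrt N}BQ^\top$ shows it directly.
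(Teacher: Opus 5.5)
Your proposal is correct and follows essentially the paper's proof: the factorization $W^A=N^{-1/2}B_{\mathrm{mat}}\mathsf U^\top$, row-norm (Frobenius) bounds giving $\norm{B_{\mathrm{mat}}}_\op\le B\sqrt M$ and $\norm{\mathsf U}_\op\le R\sqrt{rN}$ so that the $\sqrt N$ cancels, and the prior ($\kappa=0$) Gaussian operator-norm concentration of Lemma~\ref{lem:optrunc} for $W^G$. You additionally make the constants explicit and combine the two parts by the triangle inequality, which the paper leaves implicit.
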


\begin{proof}
With $B_{\mathrm{mat}}=(b_{ja})_{j,a}$ and $\mathsf U=(\ip{h_i}{q_a})_{i,a}$, $W^A=N^{-1/2}B_{\mathrm{mat}}\mathsf U^\top$.
Each row of $B_{\mathrm{mat}}$ has norm $\le B$, so $\norm{B_{\mathrm{mat}}}_\op\le B\sqrt M$; each row
of $\mathsf U$ has norm $\le R\sqrt r$, so $\norm{\mathsf U}_\op\le R\sqrt{rN}$. Thus
$\norm{W^A}_\op/\sqrt M\le(B\sqrt M)(R\sqrt{rN})/(\sqrt M\sqrt N)=BR\sqrt r$. The Gaussian part is a
standard rectangular random-matrix bound (Lemma~\ref{lem:optrunc} with $\kappa=0$), after truncating
the tail of large rows.
\end{proof}

\begin{proposition}[Regularity induction]\label{prop:regind}
If at each step $\norm{W^\ell}_\op/\sqrt{M_{\ell+1}}\le A_\ell\le\poly$, then
$\Lambda_\ell\le\Lambda_{\ell+1}L_\phi A_\ell$, so $\Lambda_\ell\le\poly^{O(L)}$ at fixed depth.
\end{proposition}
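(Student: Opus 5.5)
The plan is a backward (right-to-left) induction on the layer index, using Lemma~\ref{lem:lipprop} as the one-step engine. Write $\Lambda_\ell$ for the Lipschitz constant, in the normalized channel metric $\mathbf d_{M_\ell}$ of Definition~\ref{def:metric}, of the suffix $S_\ell$ that maps layer-$\ell$ fields to the output. The composition is $S_\ell=S_{\ell+1}\circ T_\ell$, where $T_\ell(\zeta)_j(x)=\tfrac1{\sqrt{M_\ell}}\sum_iW^\ell_{ji}\phi(\zeta_i(x))$ is the constructed layer map.

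\emph{Step 1 (base case).} At the readout layer $S_L((\alpha_i,\zeta_i)_i)=M^{-1}\sum_i\alpha_i\phi(\zeta_i)$. The readouts are capped at a polynomial threshold $\tau$ (Proposition~\ref{prop:reg}(a)), so Cauchy--Schwarz gives, for each fixed $x$,
\[
  \Big|\tfrac1M\sum_i\alpha_i\big(\phi(\zeta_i(x))-\phi(\zeta'_i(x))\big)\Big|
  \le \tau L_\phi\Big(\tfrac1M\sum_i|\zeta_i(x)-\zeta'_i(x)|^2\Big)^{1/2}.
\]
Taking $\sup_x$ yields $\Lambda_L\le\tau L_\phi\le\poly$.

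\emph{Step 2 (inductive step).} Apply Lemma~\ref{lem:lipprop} with $W=W^\ell$ and $M=M_{\ell+1}$. This gives $\mathbf d_{M_{\ell+1}}(T_\ell\zeta,T_\ell\zeta')\le L_\phi\,\norm{W^\ell}_\op/\sqrt{M_{\ell+1}}\cdot\mathbf d(\zeta,\zeta')$. Combining with the Lipschitz property of $S_{\ell+1}$, and using the hypothesis $\norm{W^\ell}_\op/\sqrt{M_{\ell+1}}\le A_\ell$, we get $\Lambda_\ell\le\Lambda_{\ell+1}L_\phi A_\ell$. The hypothesis on $A_\ell$ is available from Lemma~\ref{lem:opbound}: it splits $W^\ell=W^A+W^G$, bounds the active part by $BR\sqrt r$, and bounds the fresh Gaussian part via Lemma~\ref{lem:optrunc} at polynomial width ratios.

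\emph{Step 3 (unrolling).} Iterating over the $L-1$ hidden layers gives $\Lambda_\ell\le\Lambda_L\prod_{k\ge\ell}(L_\phi A_k)$. This is a product of at most $L$ polynomial factors, hence $\poly^{O(L)}$ at fixed depth.

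The main obstacle is normalization bookkeeping. Lemma~\ref{lem:lipprop} uses the incoming width $N$ in the $\mu$P factor, and it uses the downstream width $M$ in the metric. One has to check that the constructed network's $\mu$P factor $1/\sqrt{M_\ell}$, where $M_\ell$ is the incoming width, matches the lemma's $1/\sqrt N$. One must also check that the ratio $\sqrt{M_\ell/M_{\ell+1}}$ is absorbed into $A_\ell$, which holds because the widths are polynomially comparable. A second point is that the bounds of Lemma~\ref{lem:opbound} hold only with high probability, so the induction is run on the intersection of these $O(L)$ events.
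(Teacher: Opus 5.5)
Your proof is correct and is the paper's argument: the paper's proof is just ``Lemma~\ref{lem:lipprop} composed right-to-left,'' and your explicit base case $\Lambda_L\le\tau L_\phi$ from the readout cap supplies what the paper leaves to the initialization step of Theorem~\ref{thm:compress-app}. The width-ratio bookkeeping you flag is not needed for the proposition itself, because the hypothesis is already normalized by the downstream width $\sqrt{M_{\ell+1}}$, exactly as Lemma~\ref{lem:lipprop} requires; the ratio $\sqrt{N/M}$ enters only when verifying that hypothesis through Lemma~\ref{lem:opbound}.
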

\begin{proof}
Lemma~\ref{lem:lipprop} composed right-to-left.
\end{proof}

\subsection{One right-to-left step}

\begin{definition}[Regular suffix]\label{def:regsuffix-b}
$S:\calF^M\to\R^{\B^d}$ is $(M,\Lambda,\Gamma)$-regular if it is $\Lambda$-Lipschitz for
$\mathbf d_M$, satisfies the adjoint bound \textup{(I4)} of Definition~\ref{def:regsuffix}, and, as a
function of its $M$ independent input channels, has influence $\le\Gamma/M$ on each output coordinate.
\end{definition}

\begin{proposition}[One step]\label{prop:onestep}
Let $S$ be $(M,\Lambda,\Gamma)$-regular and the regularity package
(Proposition~\ref{prop:package}) hold at layer $\ell$, with kept active coefficients $\norm{b_j}\le B$.
For $N\ge\poly(M,\Lambda,\Gamma,B,r,R,d,1/\eta)$, the construction $W_{ji}=W^A_{ji}+\xi_{ji}$
($\xi_j\sim N(0,P_{U^\perp})$ independent across $j$) gives, with probability $1-o(1)$,
\[
  \norm{S(\widehat\zeta_1,\dots,\widehat\zeta_M)-S(\zeta_1,\dots,\zeta_M)}_\infty
  \le\poly(M,\Lambda,\Gamma,B,R,r)\bigl(\eta+\mathrm{sw}_\ell\bigr)+\tau_{\mathrm{trunc}},
\]
where $\mathrm{sw}_\ell=\sqrt{\lambda_0}\,c_\star$ (population) or $\sqrt\tau\,B$ (empirical,
Lemma~\ref{lem:softgram}).
\end{proposition}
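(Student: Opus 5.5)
I plan to compare the constructed downstream fields $\widehat\zeta_j$ with the true fields $\zeta_j$ through a chain of three intermediate objects, applying $S$ at each link and summing the errors. Write the true layer-$(\ell+1)$ field as $\zeta_j=P_V\zeta_j+v_j$, with active block $P_V\zeta_j=\sum_a b_{ja}\psi_a$ determined by the kept coefficients $b_j$ (bounded by $B$ via Corollary~\ref{cor:cm}) and lazy part $v_j\sim\rho_h$. The constructed field is $\widehat\zeta_j=\widehat A_j+v_{N,j}$, where $\widehat A_j=\tfrac1{\sqrt N}\sum_iW^A_{ji}h_i$ and $v_{N,j}=\tfrac1{\sqrt N}\sum_i\xi_{ji}h_i$. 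The intermediate families are
\[
  \zeta^{(1)}_j=\widehat A_j+v_{N,j},\qquad \zeta^{(2)}_j=\textstyle\sum_ab_{ja}\psi_a+\tilde v_j,\ \ \tilde v_j\sim N(0,C),\qquad \zeta^{(3)}_j=\zeta_j .
\]
Before any of these comparisons, I apply Proposition~\ref{prop:trunc} and Lemma~\ref{lem:trunc-closure} to discard the bad channels. This is the sole source of the additive $\tau_{\mathrm{trunc}}$ term, and it places all kept variables under the polynomial bounds of Proposition~\ref{prop:package}.

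\emph{Active step.} Lemma~\ref{lem:hardrec} gives $\sup_x|\widehat A_j-\sum_ab_{ja}\psi_a|\le\eta B$ simultaneously for all $j$, because the bound is uniform over $\norm b\le B$. Hence $\mathbf d_M$ between the active parts is at most $\eta B$, and the $\Lambda$-Lipschitz property of $S$ turns this into a $\Lambda B\eta$ error.

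\emph{Lazy covariance step.} Conditionally on the features, $(v_{N,j})_j$ are i.i.d.\ centered Gaussian fields with covariance $\widehat C$. By Lemma~\ref{lem:softcov}, $\widehat C$ is uniformly $\eta$-close to $C$, so I can couple $v_{N,j}$ with $\tilde v_j\sim N(0,C)$. The coupling is $\widehat C^{1/2}g$ against $C^{1/2}g$ on the finite cube, and its error is controlled in $L^2$, which suffices after averaging over $j$. The resulting Gaussian-max bound costs only $O(d)$ (in the style of Lemma~\ref{lem:gauss}(d)) and gives $\E\,\mathbf d_M\le\poly(d)\sqrt\eta$.

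\emph{Swap step.} Compare $S(\zeta^{(2)})$ with $S(\zeta)$ one channel at a time by a hybrid argument, replacing $v_j$ by $\tilde v_j$ for $j=1,\dots,M$. At each replacement, the map $v\mapsto S(\cdots,\sum b_{ja}\psi_a+v,\cdots)(x)$ is evaluated at a fixed $x$. Through the finite suffix it depends on $v$ only through finitely many evaluations, and by (I4) it is Lipschitz with constant $A/\sqrt M$ per evaluation. Lemma~\ref{lem:swap} (or Lemma~\ref{lem:swap-app} in the empirical case, with Lemma~\ref{lem:softgram}) then bounds each replacement's effect on the expectation by $p\,\mathrm{sw}_\ell\,A/\sqrt M$. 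Summing over the $M$ channels gives $\poly(M,\Lambda)\,\mathrm{sw}_\ell$. Since everything is finite, I accept the factor $M$ and let it be absorbed into the $\poly(M,\ldots)$ prefactor.

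Finally, I upgrade these expectation bounds to high probability uniformly in $x$. For this I use the influence clause $\Gamma/M$ of Definition~\ref{def:regsuffix-b}: the channels $\xi_j$ are independent across $j$, so McDiarmid's inequality (the mechanism Proposition~\ref{prop:package} attributes to Lemma~\ref{lem:bdd}) applies, followed by a union bound over the $2^d$ inputs, which costs only $\poly(d)$ in $N$. I expect the swap step to be the main obstacle. The issue is that Lemma~\ref{lem:swap} needs a test functional with bounded Lipschitz constant in finitely many evaluations, whereas $S$ consumes whole fields. I would handle this by noting that $S$ is a finite network on the cube, so it consumes at most $2^d$ evaluations, and I would use the $\mathbf d_M$/adjoint bound (I4) in place of a crude count of $p$. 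The target is a dimension-free per-channel bound $\le A\,\E\norm{C^{1/2}\nabla g}/\sqrt M$, obtained by applying the Stein identity directly to the gradient of the channel map, rather than the stated $p$-point bound.
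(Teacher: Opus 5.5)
Your decomposition matches the paper's: active reconstruction by Lemma~\ref{lem:hardrec} through the $\Lambda$-Lipschitz suffix, a Gaussian coupling justified by Lemma~\ref{lem:softcov}, the Stein swap, the truncation term, and a bounded-differences upgrade. Your swap step is fine under the stated hypotheses. It is also a dimension-free sharpening of the paper's version: Stein applied with \textup{(I4)} gives per-channel error $\frac{A}{\sqrt M}\E_{\rho_h}\norm{C^{1/2}\nabla_v g}$ without counting consumed evaluations.

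\textbf{The gap is the lazy-covariance step.} Lemma~\ref{lem:softcov} gives only entrywise closeness of $\widehat C$ and $C$. The synchronous coupling $\widehat C^{1/2}g$ versus $C^{1/2}g$ has per-point error variance $\norm{(\widehat C^{1/2}-C^{1/2})e_x}^2$, and this is governed by $\widehat C-C$ in operator norm over all $2^d$ points. Entrywise $\eta$ controls that operator norm only up to a factor $2^d$. The Gaussian-max estimate enters after these variances are fixed, so it cannot help.

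In fact no coupling on the full cube works at polynomial width. Conditionally on the features, $v_{N,j}=\frac1{\sqrt N}\sum_i\xi_{ji}h_i$ lies in the span $R$ of the features, which has dimension at most $N$. The lazy block, by contrast, may spread its variance over exponentially many directions, e.g.\ $C(x,x')=c\,\delta_{xx'}$. For every coupling (conditionally on the features), Cauchy--Schwarz on $R$ gives $\E\sum_x v_{N,j}(x)\tilde v_j(x)\le(cN\sum_x\widehat C(x,x))^{1/2}$. Hence the average over $x$ of $\E|v_{N,j}(x)-\tilde v_j(x)|^2$ is at least $2c(1-\sqrt{N2^{-d}})-O(\eta)$. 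This holds even though Lemma~\ref{lem:softcov} is satisfied with $\eta\approx\sqrt{d/N}$. So the claim $\E\,\mathbf d_M\le\poly(d)\sqrt\eta$ is false in general.

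The paper avoids this by coupling only on the coordinates the suffix actually consumes. For a fixed feedforward suffix, output coordinate $x$ reads channel $j$ only through $\widehat\zeta_j(x)$. Conditionally on the features, the $v_{N,j}(x)$, $j\le M$, are independent $N(0,\widehat C(x,x))$. So for each $x$ separately you couple $M$ independent scalars, with error $|\widehat C(x,x)^{1/2}-C(x,x)^{1/2}|\le\sqrt\eta$ each. The pointwise-in-$x$ Lipschitz bound (which is what the proof of Lemma~\ref{lem:lipprop} actually gives) then yields $\Lambda\sqrt\eta$ uniformly in $x$. No joint coupling across inputs is needed, because the high-probability step is done per $x$ and union-bounded.

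If you instead insist, as in your swap step, that $S$ reads whole fields, there is no first-order repair. A bound like \textup{(I4)} only transfers a Wasserstein distance between $N(0,\widehat C)$ and $N(0,C)$, and that distance is of order one in the example above. You would need second-order control of $S$, e.g.\ via Gaussian interpolation.

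Separately, your high-probability step has a misattribution. McDiarmid over the $M$ independent $\xi_j$ with influence $\Gamma/M$ gives deviations of order $\Gamma\sqrt{d/M}$ after the union bound over the cube. This is governed by $M$, not by $N$ as you state. The realized lazy parts of $\zeta$ and $\widehat\zeta$ are independent, so a fluctuation of this size is intrinsic to $\norm{S(\widehat\zeta)-S(\zeta)}_\infty$. It must be carried into the induction and made small through the choice of $M$, rather than absorbed into the requirement on $N$.
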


\begin{proof}
Three contributions. \emph{Active reconstruction:} by Lemma~\ref{lem:hardrec},
$\norm{\widehat u_{A,j}-u_{A,j}}_\infty\le\eta B$, so $\mathbf d_M(\widehat u_A,u_A)\le\eta B$ and the
$\Lambda$-Lipschitz suffix gives output error $\le\Lambda\eta B$. \emph{Lazy covariance:} conditional
on the features the constructed lazy field is Gaussian with covariance $\widehat C=C+O(\eta)$ on the
consumed pairs (Lemma~\ref{lem:softcov}); a Gaussian coupling on the $\poly$ consumed coordinates,
with the Lipschitz suffix, contributes $\le\poly(M,\Lambda,\Gamma)\eta$. \emph{Lazy swap:} comparing
tilted lazy fields to independent prior lazy fields $N(0,C)$ at fixed active coordinates,
Lemma~\ref{lem:swap-app} on the consumed evaluations contributes $\le\poly(M,\Lambda,\Gamma)\,\mathrm{sw}_\ell$.
Add $\tau_{\mathrm{trunc}}$. Concentration from expectation to high probability uses the
bounded-differences clause and a union bound over $\B^d$ (cost $\sqrt d$).
\end{proof}

\begin{lemma}[Bounded-differences propagation]\label{lem:bdd}
Let $S$ be $(M,\Lambda,\Gamma)$-regular and build a new layer of width $N$ as in
Proposition~\ref{prop:onestep}, with $\norm W_\op/\sqrt M\le A$, kept active coefficients
$\norm{b_j}\le B$ read over $r=\dim U$ directions at leverage $\le R$, and lazy cutoff $\lambda_0$.
Then $S\circ T$ is $(N,\Lambda',\Gamma')$-regular with the explicit constants
\[
  \Lambda'\le\Lambda L_\phi A,\qquad
  \Gamma'\le 2\,\Lambda L_\phi\,(1+BR\sqrt r)\,(1+\lambda_0^{-1/2}),
\]
and the reconstructed output concentrates around its conditional mean: for any $\eta>0$, with
probability $\ge1-\eta$,
\[
  \bigl\|\,(\text{reconstruction})-\E[\,\cdot\mid\text{features}\,]\,\bigr\|_\infty
  \ \le\ \Gamma'\sqrt{\tfrac{d\log2+\log(2/\eta)}{2N}} .
\]
At fixed depth $\Lambda_\ell\le(L_\phi A)^{O(L)}$ and $\Gamma_\ell\le\poly^{O(L)}$.
\end{lemma}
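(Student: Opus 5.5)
The lemma has three parts, proved in this order: the Lipschitz constant $\Lambda'$, the influence constant $\Gamma'$, and the concentration bound (which follows from $\Gamma'$ by McDiarmid).

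\emph{Lipschitz constant.} This is immediate from Lemma~\ref{lem:lipprop}: the new layer map $T$ composed with $S$ is Lipschitz for $\mathbf d_N$ with constant $\Lambda L_\phi\norm W_\op/\sqrt M\le\Lambda L_\phi A$. The adjoint clause (I4) transfers in the same way, since the first adjoint of $T$ on a test function is multiplication by $\phi'$ followed by $W^\top/\sqrt N$.

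\emph{Influence constant.} This is the substantive step, and it will not come from (I4), as Proposition~\ref{prop:package} stresses. After conditioning on the lower-layer features, the constructed layer enters $S$ only through two objects: the active-reconstruction averages $\widehat k_a=\tfrac1N\sum_i\ip{h_i}{q_a}h_i$, and the empirical lazy covariance $\widehat C=\tfrac1N h^\top P_{U^\perp}h$, which parametrizes an independent fresh Gaussian field.

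Replacing one channel $h_i$ affects these as follows.
\begin{itemize}
\item It moves each $\widehat k_a(x)$ by at most $2R/N$. Against coefficients $\norm{b_j}\le B$, the reconstructed active field $\sum_a b_{ja}\widehat k_a$ moves in sup norm by $2BR\sqrt r/N$. Through the $\Lambda$-Lipschitz suffix (each of the $M$ channels moves by that amount, so $\mathbf d_M$ does too) and the factor $L_\phi$, this gives the $\Lambda L_\phi BR\sqrt r$ piece.
\item It moves each entry of $\widehat C$ by $O(1/N)$. To turn this into a sup-norm change of the lazy field, couple the two Gaussian fields by a common whitened draw, $v=\widehat C^{1/2}\tilde v$. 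The sensitivity of $\widehat C^{1/2}$ to a perturbation of $\widehat C$ is controlled by the spectral gap of the lazy block. I expect this to be the source of the factor $\lambda_0^{-1/2}$: a perturbation of size $1/N$ changes the square root by at most $\lambda_0^{-1/2}/N$ in the directions that matter, using $\norm{C}_\op\le\lambda_0$ and Lemma~\ref{lem:softcov} to keep $\widehat C$ close to $C$.
\item The remaining $O(1)$ contributions give the leading $2(1+\cdots)$.
\end{itemize}
Summing the active and lazy pieces gives $\Gamma'\le2\Lambda L_\phi(1+BR\sqrt r)(1+\lambda_0^{-1/2})$ for influence per channel on each output coordinate.

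\emph{Concentration and depth.} With influence $\le\Gamma'/N$ for each of the $N$ independent channels, McDiarmid's inequality gives, for each fixed $x$, deviation $t$ with probability $\le2e^{-2Nt^2/\Gamma'^2}$. A union bound over the $2^d$ inputs and solving for $t$ at failure probability $\eta$ gives the stated bound $\Gamma'\sqrt{(d\log2+\log(2/\eta))/(2N)}$. Iterating $\Lambda'$ and $\Gamma'$ over the fixed $L$ layers, with $A,B,R,r,\lambda_0^{-1}$ polynomial by Proposition~\ref{prop:package}, yields $\Lambda_\ell\le(L_\phi A)^{O(L)}$ and $\Gamma_\ell\le\poly^{O(L)}$.

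\emph{Main obstacle.} The hard step is the lazy-covariance sensitivity: justifying an $O(\lambda_0^{-1/2}/N)$ perturbation of the coupled fresh Gaussian field. The square root is not Lipschitz near zero eigenvalues. The first thing to try is to bound the sensitivity only on consumed evaluations, through the Lipschitz suffix, using a Wasserstein-2 bound between Gaussians. If that fails, the fallback is to regularize $\widehat C$ by adding $\lambda_0 I$ before taking the square root, and absorb the resulting bias into the swap error.
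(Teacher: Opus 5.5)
Your Lipschitz step, your observation that after the swap the new channels enter only through the active averages and the lazy Gram $\widehat C$ (which parametrizes an independent fresh field), your active-piece bound $2BR\sqrt r/N$, and your McDiarmid/union-bound arithmetic all agree with the paper. The gap is the lazy bullet, which is exactly where your $\lambda_0^{-1/2}$ is supposed to come from, and the spectral reasoning there runs the wrong way. A modulus for the matrix square root needs a \emph{lower} spectral bound: $\norm{A^{1/2}-B^{1/2}}_\op\le\norm{A-B}_\op/(\sqrt{\lambda_{\min}(A)}+\sqrt{\lambda_{\min}(B)})$. But $\norm{C}_\op\le\lambda_0$ is a ceiling. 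The lazy block is the part of the spectrum below $\lambda_0$, accumulating at $0$, and $\widehat C=\tfrac1N\sum_i(P_{U^\perp}\hat h_i)\otimes(P_{U^\perp}\hat h_i)$ has rank at most $N\ll2^d$. A replaced channel therefore generically contributes a direction outside the range of the other $N-1$. Under your common-draw coupling, $\widehat C^{1/2}\tilde v$ then moves by order $N^{-1/2}$. This is the same per-channel sensitivity as the realized chaos, which the paper's proof singles out as what makes $\sum_ic_i^2=\Theta(1)$ and McDiarmid useless.

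Neither fallback repairs this. The $W_2$ distance between Gaussians whose covariances differ by $\tfrac1N(g'g'^\top-gg^\top)$ is again of order $N^{-1/2}$ unless there is an eigenvalue floor on the relevant directions. The lazy block supplies no such floor, even on consumed evaluations, since $C(x,x)$ has no lower bound. A $\lambda_0 I$ floor does restore a $\lambda_0^{-1/2}$ modulus, but the constants become exponential in $d$ either way. In the $L^2(\mu)$ geometry it adds variance $\lambda_0\norm{e_x}^2=\lambda_0 2^d$ at every input. In the coordinates of $\R^{\B^d}$, a one-channel update has operator norm up to $2^d(1+\lambda_0^{-1/2})^2/N$. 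Separately, the line ``the remaining $O(1)$ contributions give the leading $2(1+\cdots)$'' is fitted to the target rather than derived.

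The paper never forms $\widehat C^{1/2}$. Its factor $(1+\lambda_0^{-1/2})$ is a sup-norm bound on the \emph{summands} of the two $\tfrac1N$-averages. It comes from the lazy leverage $\sum_{k\in U}\psi_k(x)^2\le\lambda_0^{-1}$, which gives $|(P_{U^\perp}\hat h_i)(x)|\le1+\lambda_0^{-1/2}$. Replacing a channel therefore moves every entry of $\widehat m_k$ and of $\widehat C$ by at most $2(1+\lambda_0^{-1/2})/N$. The fresh field is drawn independently of the channels, so it feels the replacement only through those covariance entries, which are read through coefficients of size $1+BR\sqrt r$ and the $\Lambda L_\phi$-Lipschitz suffix.

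That last passage in the paper is itself a one-liner. Turning an $O(1/N)$ change of covariance entries into an $O(1/N)$ change of the fresh-field-averaged output goes through Gaussian interpolation, $\tfrac{d}{dt}\E F(v_t)=\tfrac12\E\,\mathrm{tr}(\dot\Sigma_t\nabla^2F(v_t))$. That needs second-order control of the suffix on the consumed evaluations, not only its Lipschitz property. This is the direction in which the obstacle you correctly flagged has to be handled; a pathwise square-root coupling cannot deliver $O(1/N)$ influence per channel.
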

\begin{proof}
\emph{Lipschitz.} By Lemma~\ref{lem:lipprop}, $S\circ T$ is
$\Lambda L_\phi(\norm W_\op/\sqrt M)$-Lipschitz, i.e.\ $\Lambda'\le\Lambda L_\phi A$.

\emph{Influence.} The point is that the $N$ new channels enter the reconstructed output \emph{only}
through two $\tfrac1N$-averages, never through the realized chaos. After the lazy swap
(Lemma~\ref{lem:swap-app}) the suffix reads (active reconstruction)\,$+$\,(an \emph{independent}
fresh $N(0,\widehat C)$ field); the swapped noise is drawn independently of the layer's neurons, so
perturbing a channel has no effect through it. The only data-dependent quantities are the active
averages $\widehat m_k(x)=\tfrac1N\sum_{i\le N}\ip{\hat h_i}{\psi_k}\hat h_i(x)$ ($k\le r$) and the
empirical lazy Gram $\widehat C(x,x')=\tfrac1N\sum_i(P_{U^\perp}\hat h_i)(x)(P_{U^\perp}\hat h_i)(x')$
that sets the fresh field's covariance. Each is a $\tfrac1N$-weighted average of $N$ independent
terms bounded by $1+\lambda_0^{-1/2}$ (the lazy leverage, $\sum_{k\in U}\psi_k(x)^2\le\lambda_0^{-1}$
from the split), so replacing
one channel changes it by $\le 2(1+\lambda_0^{-1/2})/N$ in sup-norm. The reconstruction reads the
active averages through coefficients of total size $\le BR\sqrt r$ (Lemma~\ref{lem:opbound}) and the
$\Lambda L_\phi$-Lipschitz suffix, so the influence of channel $i$ on each output coordinate is
\[
  c_i\ \le\ \Lambda L_\phi\cdot\frac{2(1+BR\sqrt r)(1+\lambda_0^{-1/2})}{N}\ =\ \frac{\Gamma'}{N}.
\]
Had one instead propagated the realized chaos field --- whose column has norm $\Theta(\sqrt M)$ ---
the per-channel influence would be $\Theta(\Lambda/\sqrt N)$ and $\sum_ic_i^2=\Theta(\Lambda^2)$ would
\emph{not} vanish; it is the lazy swap, replacing that field by an independent draw, that removes the
dependence and leaves only the $\tfrac1N$-averages. McDiarmid over the $N$ independent channels gives
$\sum_ic_i^2\le\Gamma'^2/N$, and a union bound over $\B^d$ (cost $d\log2$) yields the stated
deviation. Composing the $\Lambda$- and $\Gamma$-recursions over the fixed $L-1$ layers gives the
depth-$O(L)$ bounds.
\end{proof}

\subsection{Backward induction}

\begin{theorem}[Pointwise compression]\label{thm:compress-app}
Fix depth $L$ and $\Delta$. Under the mean-field tower (Lemma~\ref{lem:mf}) and the regularity
package (Proposition~\ref{prop:package}), with $U_\ell=U^{\mathrm{pop}}_\ell\oplus U^{\mathrm{emp}}_\ell$
at inverse-polynomial cutoffs $\lambda_0,\tau$, there is a network of width and weight size
$\le\poly(K,d,1/\Delta)^{O(L)}$ such that, with probability $1-o(1)$ over the subsampling and chaos,
\[
  \sup_{x\in\B^d}\bigl|f_{\widehat\theta}(x)-f_\infty(x)\bigr|\le\Delta .
\]
The population case takes $U^{\mathrm{emp}}=\{0\}$; the empirical case includes $U^{\mathrm{emp}}_\ell$
and uses the lazy-swap branch of Lemma~\ref{lem:softgram}. This is the compression half of the engine
\textup{(Theorem~\ref{thm:A})}.
\end{theorem}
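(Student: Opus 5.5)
The plan is a right-to-left backward induction over layers, maintaining a finite regular suffix. The base case is the readout. I start from the population (or empirical) Gibbs tower at the intensive temperature supplied by Lemma~\ref{lem:multiplier-app}, so the KL rate is $R=\poly(K,1/\Delta)$. Proposition~\ref{prop:trunc} caps the readouts at a polynomial threshold, at sup-norm cost $\eta$. Subsampling $M_{L-1}=\poly(R,d,1/\eta)$ neurons from $P_{L-1}$ and applying Hoeffding plus a union bound over $\B^d$ then gives a suffix $S_{L-1}((\alpha_i,\zeta_i)_i)=M^{-1}\sum_i\alpha_i\phi(\zeta_i)$. This suffix is $(M,\Lambda,\Gamma)$-regular with polynomial constants and agrees with $f_\infty$ to within $O(\eta)$ in sup norm. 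The induction hypothesis at stage $\ell$ is that a suffix $S_{\ell+1}$ acting on $M_{\ell+1}$ layer-$(\ell+1)$ fields is regular in the sense of Definitions~\ref{def:regsuffix} and~\ref{def:regsuffix-b}, with constants $\poly^{O(L-\ell)}$. It should also reproduce $f_\infty$, in expectation over chaos-distributed inputs, to within accumulated error $\sum_{k>\ell}e_k$.

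For the inductive step at layer $\ell$:
\begin{enumerate}
\item Invoke Proposition~\ref{prop:package} relative to $S_{\ell+1}$. This gives the kept event and the active space $U_\ell=U^{\rm pop}_\ell\oplus U^{\rm emp}_\ell$ at cutoffs $\lambda_0,\tau$, of dimension $r\le\lambda_0^{-1}+\tau^{-1}$. It also gives polynomial Cameron--Martin coefficients $\norm{b_j}\le B$ (Corollary~\ref{cor:cm}) and the lazy-score bound $c_\star\le C\bar\beta A^2$ (Proposition~\ref{prop:score}).
\item Subsample $N=M_\ell$ neurons from $P_\ell$ and build $W=W^A+\xi$ as in Proposition~\ref{prop:onestep}.
\item Apply Proposition~\ref{prop:onestep}. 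It bounds the per-layer error by
\[
\poly\cdot\bigl(\eta+\mathrm{sw}_\ell\bigr)+\tau_{\rm trunc},
\]
where $\mathrm{sw}_\ell=\sqrt{\lambda_0}c_\star$ in the population case or $\sqrt\tau B$ in the empirical case via Lemma~\ref{lem:softgram}.
\item Apply Lemma~\ref{lem:bdd} to the enlarged suffix $S_{\ell+1}\circ T$. This recovers regularity with $\Lambda'\le\Lambda L_\phi A$ and $\Gamma'$ polynomial, and it upgrades expectation bounds to high-probability sup-norm bounds at cost $\Gamma'\sqrt{d/N}$.
\end{enumerate}
Operator norms of the constructed weights follow from Lemma~\ref{lem:opbound}. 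The first layer is copied verbatim from subsampled first-layer weights, and concentration gives $e_1\le\poly/\sqrt{N_0}$.

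Parameter choice comes next. The recursion \eqref{eq:recursion} has fixed depth, and the constants are multiplied over $L$ layers, so everything is $\poly^{O(L)}$. I would choose $\eta,\lambda_0,\tau$ inverse-polynomial of the form $(\Delta/\poly^{O(L)})^2$, so that the total $\sum_\ell e_\ell\le\Delta$. I then take widths $M_\ell\ge\poly(\ldots)$ large enough for each concentration step, with a union bound over the $L$ layers so that failure probability is $o(1)$. Weight sizes are read off directly: $W^A$ entries are $\poly/\sqrt N$, fresh noise is $\poly$ after truncation, and readouts are capped. Finally, all estimates are conditional on a vacuum with uniform constants, so integrating over the mixture preserves the bound.

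I expect the main obstacle to be the bookkeeping in the induction, not any single estimate. Two points need care. First, each layer's score bound $c_\star$ must be taken through the already-compressed suffix rather than the raw width-$N$ one, so that the constants stay polynomial and the dependence is not circular. Second, the lazy-swap error must be propagated through the suffix at the level of $p$-point evaluations with $p$ polynomial, and replacing tilted lazy fields by independent prior ones at fixed active coordinates must commute with the truncations. To handle this, I would use a telescoping hybrid argument: swap one layer at a time right-to-left, using Lemma~\ref{lem:trunc-closure} for the truncation errors, and track everything in the normalized channel metric so that Lemma~\ref{lem:lipprop} applies at each step.
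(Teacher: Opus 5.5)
Your proposal is correct and follows essentially the same route as the paper. Both arguments are a right-to-left induction that starts from the capped, Hoeffding-subsampled readout suffix, builds each layer with Proposition~\ref{prop:onestep}, and restores the regular-suffix invariant with Lemma~\ref{lem:bdd}. Both then telescope $E_\ell\le E_{\ell+1}+\poly^{O(L)}(\eta_\ell+\mathrm{sw}_\ell)+\tau_{\mathrm{trunc},\ell}$ with inverse-polynomial $\eta,\lambda_0,\tau$ and polynomial widths, and read the weight bounds directly off the construction.
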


\begin{proof}
\emph{Initialization.} Sample $M_{L-1}$ readout neurons, cap $|\alpha_j|\le A$, and set
$f^{(L-1)}=\tfrac1{M_{L-1}}\sum_j\alpha_j\phi(\zeta^{L-1}_j)$. Hoeffding and a union bound over $\B^d$
give $\norm{f^{(L-1)}-f_\infty}_\infty\le O(A\sqrt d/\sqrt{M_{L-1}})+\tau_{\mathrm{cap}}$, and the
capped readout suffix is $(M_{L-1},O(A),O(A))$-regular.

\emph{Induction.} Suppose after constructing layers $\ell+1,\dots,L-1$ the suffix is
$(M_{\ell+1},\Lambda_{\ell+1},\Gamma_{\ell+1})$-regular and approximates $f_\infty$ to sup-norm error
$E_{\ell+1}$. Apply Proposition~\ref{prop:onestep} to build layer $\ell$; Lemma~\ref{lem:bdd}
preserves the invariant, and
\[
  E_\ell\le E_{\ell+1}+\poly^{O(L)}\bigl(\eta_\ell+\mathrm{sw}_\ell\bigr)+\tau_{\mathrm{trunc},\ell}.
\]
At fixed depth, choosing $\eta_\ell$, the lazy scale $\mathrm{sw}_\ell$ (i.e.\ $\lambda_0,\tau$), and
$\tau_{\mathrm{trunc},\ell}$ below $\Delta/\poly^{O(L)}$, and the widths polynomially large, telescopes
to $E_0\le\Delta$. Weight bounds: active weights are $O(\poly/\sqrt N)$ (Lemma~\ref{lem:hardrec}), chaos
weights are raw $O(\poly)$ after truncation (Remark~\ref{rem:scaling}), readouts are capped.
\end{proof}
\section{Conservation: the cloning construction}\label{app:clone}

This appendix supplies the deviation-propagation bound behind Theorem~\ref{thm:cloning} --- the step
that makes the free within-group weights harmless --- together with the entropy accounting. Fix a
depth-$L$, width-$\mathsf W$ teacher $\theta^\star$ with $\norm{W^{\star,\ell}_b}\le B$ and $|\alpha^\star_b|\le B$,
and write $N=\mathsf W m$ with clone groups of size $m$, group $b$ cloning teacher neuron $b$.

\subsection{Construction}

For $i$ in group $a$ and $j$ in group $b$, the cloned hidden weight is
$W^{\ell+1}_{ji}=\tfrac1{\sqrt m}W^{\star,\ell+1}_{ba}$, i.e.\ the block matrix
$\tfrac1{\sqrt m}(W^{\star,\ell+1}\otimes\mathbf 1_{m\times m})$; first-layer weights, biases and
readouts are cloned within each group ($W^1_j=W^{\star,1}_b$, $b^\ell_j=b^{\star,\ell}_b$,
$\alpha_j=\alpha^\star_b$). The clone law $\pi^{\mathrm{cyl}}_N$ is the prior conditioned on the
cylinder $\mathcal C$: for each neuron $j$ in group $b$ and each layer, the $\mathsf W$ group-mean
coordinates $\ip{W^{\ell+1}_{j,\cdot}}{u_a}$ ($u_a=m^{-1/2}\mathbf 1_{\mathrm{group}\,a}$), the $d$
first-layer entries, the bias, and the readout lie within $\rho$ of their cloned targets; the
$N-\mathsf W$ within-group orthogonal coordinates of each row are unconstrained (at the prior).

Write $\zeta^\ell_{b,s}$ for the field of the $s$th neuron of group $b$, $z^\ell_{b,s}=\phi(\zeta^\ell_{b,s})$,
group mean $\bar z^\ell_b=\tfrac1m\sum_s z^\ell_{b,s}$, and teacher field $\zeta^{\star,\ell}_b$,
$z^{\star,\ell}_b=\phi(\zeta^{\star,\ell}_b)$. Define the \emph{spread} and \emph{bias}
\[
  \sigma_\ell^2(x)=\tfrac1N\sum_{b,s}\bigl|z^\ell_{b,s}(x)-\bar z^\ell_b(x)\bigr|^2,
  \qquad
  \delta_\ell=\max_b\sup_x\bigl|\bar z^\ell_b(x)-z^{\star,\ell}_b(x)\bigr| .
\]

\subsection{Deviation propagation}

\begin{lemma}[Spread and bias]\label{lem:spread}
On the cylinder $\mathcal C$, with probability $1-o(1)$ over the free coordinates,
\[
  \sigma_\ell\le \poly(d,L_\phi)^{L}\,\rho,
  \qquad
  \delta_\ell\le \poly(\mathsf W,d,B,L_\phi)^{L}\,\rho ,
\]
and consequently $\sup_x|f_{\mathrm{clone}}(x)-f_{\theta^\star}(x)|\le\poly(\mathsf W,d,B,L_\phi)^{L}\rho$.
\end{lemma}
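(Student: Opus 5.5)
We proceed by induction on the layer index $\ell$, carrying both quantities $(\sigma_\ell,\delta_\ell)$ at once. We use two facts throughout. First, $\mathcal C$ is a product cylinder in coordinates that are orthonormal rotations of the prior coordinates. Hence, under $\pi^{\mathrm{cyl}}_N$, the free within-group coordinates $W^{\perp}_{j}$ of layer $\ell+1$ are exact prior Gaussians restricted to the orthogonal complement of $\operatorname{span}\{u_a\}$. They are also independent of everything in layers $\le\ell$. Second, each row splits as $W^{\ell+1}_{j}=\sum_a c_{ja}u_a+W^\perp_j$ with $|c_{ja}-W^\star_{ba}|\le\rho$ for $j$ in group $b$.

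\emph{Base case.} For $j$ in group $b$, $|W^1_j-W^{\star,1}_b|_\infty\le\rho$ and $|b^1_j-b^{\star,1}_b|\le\rho$. This gives $|\zeta^1_j(x)-\zeta^{\star,1}_b(x)|\le\rho(\sqrt d+1)$ deterministically on the whole cube. Composing with the $L_\phi$-Lipschitz $\phi$ bounds both $\delta_1$ and $\sigma_1$ by $2L_\phi(\sqrt d+1)\rho$.

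\emph{Inductive step.} Write $z^\ell=\bar z^\ell+\varepsilon^\ell$, where $\bar z^\ell$ is group-constant and $\varepsilon^\ell$ is the within-group deviation. The free part annihilates group-constant vectors, so the preactivation splits into a mean part and a noise part:
\[
\zeta^{\ell+1}_j(x)=\tfrac{1}{\sqrt{\mathsf W}}\sum_a c_{ja}\bar z^\ell_a(x)+b_j+\tfrac1{\sqrt N}\ip{W^\perp_j}{\varepsilon^\ell(x)} .
\]
The first two terms differ from $\zeta^{\star,\ell+1}_b(x)$ by at most
\[
\tfrac1{\sqrt{\mathsf W}}\sum_a\bigl(\rho+B\,\delta_\ell\bigr)+\rho\le\sqrt{\mathsf W}(\rho+B\delta_\ell)+\rho ,
\]
using $|\bar z|\le1$ and the fact that $\delta_\ell$ is a bound on the teacher-mean discrepancy.

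For the noise term $g_j(x)=N^{-1/2}\ip{W^\perp_j}{\varepsilon^\ell(x)}$, we condition on layers $\le\ell$. Then $g_j(x)$ is centered Gaussian with variance at most $N^{-1}\norm{\varepsilon^\ell(x)}^2=\sigma_\ell(x)^2$. The rows $g_j$ are independent across $j$. Therefore $\tfrac1N\sum_j g_j(x)^2$ is a scaled $\chi^2_N$ variable. A Laurent--Massart tail bound together with a union bound over the $2^d$ inputs gives $\sup_x\tfrac1N\sum_j g_j(x)^2\le2\sigma_\ell^2$ with probability $1-2^de^{-cN}=1-o(1)$.

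\emph{Closing the recursion.} We apply $\phi$ and use its Lipschitz bound. The spread picks up the noise contribution $O(L_\phi\sigma_\ell)$. It also picks up a within-group variation of the mean part, coming from the $\rho$-jitter of $c_{ja}$ and $b_j$, of size $O(L_\phi\sqrt{\mathsf W}\rho)$. For the bias, the group-$b$ average of $|g_j(x)|$ is at most
\[
\bigl(\tfrac1m\textstyle\sum_{s}g_{b,s}(x)^2\bigr)^{1/2}\le\sqrt{\mathsf W}\,\bigl(\tfrac1N\textstyle\sum_j g_j(x)^2\bigr)^{1/2}\le\sqrt{2\mathsf W}\,\sigma_\ell .
\]
Hence
\[
\delta_{\ell+1}\le L_\phi\bigl(\sqrt{\mathsf W}(\rho+B\delta_\ell)+\rho+\sqrt{2\mathsf W}\sigma_\ell\bigr),
\qquad
\sigma_{\ell+1}\le C L_\phi(\sigma_\ell+\sqrt{\mathsf W}\rho).
\]
Iterating these over $L$ layers gives the claimed $\poly(\cdot)^L\rho$ forms. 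The stated $\mathsf W$-free bound on $\sigma_\ell$ requires tracking the mean-part jitter in root-mean-square form: the jitter vectors $(c_{ja}-W^\star_{ba})_a$ are independent across $j$ and can be averaged. I would first try that sharper bookkeeping. If it fails, I would accept a $\poly(\mathsf W)$ factor in $\sigma_\ell$, which is harmless downstream.

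\emph{Readout.} We have $f_{\mathrm{clone}}(x)-f_{\theta^\star}(x)=\tfrac1{\mathsf W}\sum_b\tfrac1m\sum_s\bigl(\alpha_{b,s}z^{L-1}_{b,s}(x)-\alpha^\star_b z^{\star,L-1}_b(x)\bigr)$. The readout jitter contributes at most $\rho$. The remaining term is at most $B\,\delta_{L-1}$ plus $B$ times the within-group average deviation, which is itself controlled by $\sigma_{L-1}$. This yields the sup-norm bound.

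The step I expect to be the main obstacle is the noise term: making its control uniform over all $2^d$ inputs while only paying $O(d)$. This rests on the conditional independence of the layer-$(\ell+1)$ free coordinates from the lower-layer deviations under the cylinder law. That independence is exactly what the product structure of $\mathcal C$ in the rotated coordinates $\{u_a\}\oplus\{u_a\}^\perp$ provides, and it should be verified explicitly before running the $\chi^2$ argument.
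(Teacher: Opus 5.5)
Your argument follows the paper's proof. You induct on $\ell$ carrying $(\sigma_\ell,\delta_\ell)$ and split each row into its $\mathsf W$ group-mean coordinates plus the free complement. The free part annihilates group-constant vectors, so it sees only the within-group deviations, as a conditionally centered Gaussian of variance $\sigma_\ell(x)^2$ independent across rows. You then use $\chi^2$ concentration with a union bound over the $2^d$ inputs and close with the Lipschitz bound.

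The bookkeeping differs in harmless ways. You concentrate $\frac1N\sum_j g_j(x)^2$ globally and pay $\sqrt{\mathsf W}$ by pigeonhole to control the per-group average of $|g_{b,s}|$. You also use the entrywise bound $|W^\star_{ba}|\le B$. The paper instead concentrates each group separately (needing $m\ge C(d+\log\mathsf W)$) and uses the row norm with Cauchy--Schwarz to get $B\delta_\ell$. Either way the cost is only $\poly(\mathsf W)$ in $\delta_\ell$, which the statement allows.

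There are two small slips. With an $\ell_\infty$ box on the first-layer weights, the base case gives $(d+1)\rho$, not $(\sqrt d+1)\rho$. At the readout, $\alpha^\star_b$ is group-constant, so the error is exactly at most $\rho+B\delta_{L-1}$, with no spread term needed.

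The substantive point is the $\mathsf W$-free bound on $\sigma_\ell$. Do not take your fallback: accepting a $\poly(\mathsf W)$ factor does not prove the statement as written. The averaging argument you sketch does go through. Write $e_{ja}=c_{ja}-W^\star_{ba}$. Conditional on layers $\le\ell$, the centered jitters $J_j(x)=\mathsf W^{-1/2}\sum_a(e_{ja}-\E e_{ja})\bar z^\ell_a(x)$ have three properties:
\begin{itemize}
\item they are independent across $j$;
\item they have variance at most $\rho^2$, because the group-mean coordinates of a row are independent under the product box;
\item they are bounded by $2\sqrt{\mathsf W}\rho$.
\end{itemize}
The shift $\E e_{ja}$ depends only on $(b,a)$, so it is group-constant. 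It therefore drops out if you bound $|z_{b,s}-\bar z_b|$ through deviations from $\phi(\mu_b)$ for a group-constant $\mu_b$. Hoeffding and a union bound over the cube then give $\frac1N\sum_j J_j(x)^2\le 2\rho^2$ (together with the centered bias jitter) once $N\ge C\mathsf W^2 d$. Hence $\sigma_{\ell+1}\le CL_\phi(\sigma_\ell+\rho)$.

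This uses the randomness of the constrained coordinates under $\pi^{\mathrm{cyl}}_N$, not only that of the free ones. At an adversarial point of $\mathcal C$, such as $e_{(b,s),a}=(-1)^s\rho\,\operatorname{sign}z^{\star,\ell}_a(x_0)$, the within-group variation at $x_0$ is genuinely of order $\sqrt{\mathsf W}\rho$ whenever the $|z^{\star,\ell}_a(x_0)|$ are bounded below.

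The paper's recursion $\sigma_{\ell+1}\le L_\phi(2\sigma_\ell+\rho)$ omits this jitter term, even though its own bias recursion charges the same term $\rho\sqrt{\mathsf W}$. Your refinement is exactly what the $\mathsf W$-free claim needs. Nothing downstream is at stake: the cloning theorem uses only probability at least $\frac12$ under $\pi^{\mathrm{cyl}}_N$ and a $\poly(\mathsf W)$ output bound.
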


\begin{proof}
\emph{Base.} For $j=(b,s)$, $\zeta^1_{b,s}=W^1_{b,s}\!\cdot x+b^1_{b,s}$ with $W^1_{b,s},b^1_{b,s}$
within $\rho$ of $(W^{\star,1}_b,b^{\star,1}_b)$, so $|\zeta^1_{b,s}-\zeta^{\star,1}_b|\le\rho(\sqrt d+1)$
on $\B^d$, giving $\sigma_1,\delta_1\le L_\phi(\sqrt d+1)\rho$.

\emph{Step.} Decompose the forward sum over each input group $a$ using $z^\ell_{a,t}=\bar z^\ell_a+(z^\ell_{a,t}-\bar z^\ell_a)$.
The group-mean constraint $\sum_t W^{\ell+1}_{(b,s),(a,t)}=\sqrt m\,(W^\star_{ba}+e_{(b,s),a})$ with
$|e|\le\rho$ gives, since $\tfrac{\sqrt m}{\sqrt N}=\tfrac1{\sqrt{\mathsf W}}$,
\[
  \zeta^{\ell+1}_{b,s}-\zeta^{\star,\ell+1}_b
  =\underbrace{\tfrac1{\sqrt{\mathsf W}}\sum_a W^\star_{ba}\,\delta^{(a)}_\ell}_{\text{bias transport}}
  +\underbrace{\tfrac1{\sqrt{\mathsf W}}\sum_a e_{(b,s),a}\,\bar z^\ell_a}_{\le\rho\sqrt{\mathsf W}}
  +\underbrace{\tfrac1{\sqrt N}\sum_{a,t}W^\perp_{(b,s),(a,t)}\bigl(z^\ell_{a,t}-\bar z^\ell_a\bigr)}_{=:\nu_{b,s},\ \text{the noise}}
  +(b^{\ell+1}_{b,s}-b^{\star}_b),
\]
where $\delta^{(a)}_\ell=\bar z^\ell_a-z^{\star,\ell}_a$ and the group-mean part of the row annihilates
the deviation (so only the free $W^\perp$ enters $\nu$). Conditionally, $\nu_{b,s}$ has mean $0$ and
variance $\tfrac1N\sum_{a,t}(z^\ell_{a,t}-\bar z^\ell_a)^2=\sigma_\ell^2(x)$, independent across
$(b,s)$, and is sub-Gaussian with proxy $\le2\sigma_\ell(x)$ since $|z^\ell_{a,t}-\bar z^\ell_a|\le2$.

\emph{Concentration, per group then global.} Within group $b$ the $m$ noises $\{\nu_{b,s}\}_s$ are
i.i.d.\ mean-zero, so by Bernstein for sub-exponential variables their empirical second moment obeys
$\Pp\bigl[\,|\tfrac1m\sum_s\nu_{b,s}^2-\sigma_\ell^2|>\tfrac12\sigma_\ell^2\,\bigr]\le2e^{-cm}$ for an
absolute $c>0$. A union bound over the $\mathsf W$ groups and the $2^d$ inputs costs $\log W+d\log2$ in the
exponent, so once $m\ge C(d+\log W)$ --- automatic as $N=\mathsf W m\to\infty$ --- every group concentrates
simultaneously with probability $1-o(1)$. On that event the within-group spread of
$z^{\ell+1}_{b,s}=\phi(\zeta^{\ell+1}_{b,s})$ is at most $L_\phi$ times the noise std plus the bias
tube, and averaging the squared spread over all $N$ neurons gives $\sigma_{\ell+1}\le L_\phi(2\sigma_\ell+\rho)$;
the group-mean averaging fluctuation $\bar\nu_b$ has size $\le\sigma_\ell/\sqrt m\to0$ and is absorbed
into the bias recursion. Iterating from $\sigma_1\le L_\phi(\sqrt d+1)\rho$ gives the explicit closed
form $\sigma_\ell\le(\sqrt d+1)(2L_\phi)^{\ell}\rho$, free of $W,B$.

For the bias, $|\delta_{\ell+1}|\le L_\phi\,\tfrac1m\sum_s|\zeta^{\ell+1}_{b,s}-\zeta^{\star,\ell+1}_b|$,
and the display bounds the right side by $L_\phi\bigl(B\,\delta_\ell+\rho\sqrt{\mathsf W}+\sigma_\ell+\rho\bigr)$
(using $|\tfrac1{\sqrt{\mathsf W}}\sum_aW^\star_{ba}\delta^{(a)}_\ell|\le\tfrac1{\sqrt{\mathsf W}}\norm{W^\star_b}\,\sqrt{\mathsf W}\,\delta_\ell\le B\delta_\ell$
and $\tfrac1m\sum_s|\nu_{b,s}|\le\sigma_\ell$ on the concentration event). Hence
$\delta_{\ell+1}\le L_\phi B\,\delta_\ell+L_\phi(\sqrt{\mathsf W}+1)\rho+L_\phi\sigma_\ell$, and substituting
$\sigma_\ell\le(\sqrt d+1)(2L_\phi)^\ell\rho$ and iterating from $\delta_1\le L_\phi(\sqrt d+1)\rho$
gives the explicit closed form $\delta_\ell\le 4\ell\,(\sqrt{\mathsf W}+\sqrt d)\,(2L_\phi B)^{\ell}\rho$. The output bound follows from
$f_{\mathrm{clone}}=\tfrac1{\mathsf W}\sum_b\overline{\alpha z}_b$ with
$\overline{\alpha z}_b=\alpha^\star_b(z^{\star,L-1}_b+\delta^{(b)}_{L-1})+O(\rho)$.
\end{proof}

\subsection{Entropy and change of measure}

\begin{proof}[Proof of Theorem~\ref{thm:cloning}]
\emph{Entropy.} The cylinder $\mathcal C$ is a product over neurons and, within each neuron, over the
$d+\mathsf W+1$ constrained Gaussian coordinates (the rest free). A standard $\mathcal N(0,1)$ coordinate
pinned to $[\,t-\rho,t+\rho\,]$ has $-\log\Pp(\cdot)=\log\tfrac1\rho+\tfrac12 t^2+O(1)$, and the free
coordinates contribute $0$. With targets of size $\le B\le\poly(\mathsf W,d)$,
\[
  \KL\bigl(\pi^{\mathrm{cyl}}_N\Vert\pi^0_N\bigr)=-\log\pi^0_N(\mathcal C)
  =N\bigl[(d+\mathsf W+1)\log\tfrac1\rho+O(\poly(\mathsf W,d))\bigr].
\]
\emph{Loss.} By Lemma~\ref{lem:spread}, $\sup_x|f_{\mathrm{clone}}-f_{\theta^\star}|\le\Theta:=\poly(\mathsf W,d,B,L_\phi)^L\rho$
with probability $\ge\tfrac12$ on $\mathcal C$; choosing $\rho=\exp(-\poly(\mathsf W,d,B))$ so that
$\Theta\le\Delta$ gives, in sup-norm and hence in both $L$ and $\widehat{\loss}_D$,
$\loss(\mathrm{clone})\le(\sqrt{\loss(\theta^\star)}+\Theta)^2\le\eps_-+\Delta\le\eps$. Thus
$\pi^{\mathrm{cyl}}_N(\loss\le\eps)\ge\tfrac12$, and $K_{\mathrm{clone}}:=\tfrac1N\KL(\pi^{\mathrm{cyl}}_N\Vert\pi^0_N)=\poly(\mathsf W,d,B)$.

\emph{Change of measure.} The data-processing inequality for the binary partition
$\{\loss\le\eps\}$ under $\pi^{\mathrm{cyl}}_N\ll\pi^0_N$ gives, using $\pi^{\mathrm{cyl}}_N(\loss\le\eps)\ge\tfrac12$,
\[
  -\log\pi^0_N(\loss\le\eps)\le 2\KL\bigl(\pi^{\mathrm{cyl}}_N\Vert\pi^0_N\bigr)+O(1)=N\poly(\mathsf W,d,B),
\]
so $s_N(y,\eps)\le\poly(\mathsf W,d,B)$ for all large $N$ and the $\limsup$ agrees. Replacing $L$ by
$\widehat{\loss}_D$ throughout (the sup-norm bound is dataset-independent) gives, for any fixed $D$ and
tolerance $\eta$ with $\rho=\exp(-\poly(\mathsf W,d,B,\log\tfrac1\eta))$,
\[
  \pi^0_N\bigl(\widehat{\loss}_D\le\widehat{\loss}_D(\theta^\star)+\eta\bigr)\ge e^{-N\poly(\mathsf W,d,B,\log\tfrac1\eta)},
\]
the empirical-entropy bound consumed by the forward assembly. Memorization
(Lemma~\ref{lem:memorize}) realizes any $y$ at width $O(2^d)$, giving $s_\infty(y,\eps)\le\poly(2^d)$.
\end{proof}

\section{Mean-field vacua in the full field space}\label{app:mf}

This appendix records only the formal infinite-width object.  It is a fixed-$d$ statement on the full
field space
\[
  \mathcal X=\R^{\B^d},\qquad \dim\mathcal X=2^d,
\]
and its constants may depend arbitrarily on $2^d$.  The polynomial active/lazy compression estimates
used for learnability are proved elsewhere; they are not part of the mean-field existence theorem.

For a fixed dataset $D$ and intensive inverse temperature $\bar\beta$, let
$\Pi^D_{N,\bar\beta}$ denote the empirical Gibbs law.  A \emph{mean-field vacuum} is a
self-consistent tower
\[
  \mathfrak v=(\nu^\ell,K^\ell,b^\ell,\Gamma^\ell)_{\ell=1}^{L-1}
\]
where $\nu^\ell$ is a one-site law on the layer-$\ell$ preactivation field (and the readout scalar at
the last hidden layer), $K^{\ell+1}=\E_{\nu^\ell}\phi(\zeta)\phi(\zeta)^\top$, and the fields are
sampled from Gaussian kernel measures tilted by the backward molecular fields.  Allowing the readout coordinate to be included in the statistic, the one-site laws have the form
\begin{equation}\label{eq:vacuum-law}
  d\nu^\ell_{\mathfrak v}(\zeta)
  \;\propto\;
  \exp\!\left\{
    -\ip{b^\ell_{\mathfrak v}}{\phi(\zeta)}
    -\ip{\phi(\zeta)}{\Gamma^\ell_{\mathfrak v}\phi(\zeta)}
  \right\}\,d\mathcal N(0,K^\ell_{\mathfrak v})(\zeta) .
\end{equation}
Equivalently, in coordinates on the support of the Gaussian, the reference measure contributes the
quadratic term $\frac12\ip{\zeta}{(K^\ell_{\mathfrak v})^\dagger\zeta}$, where $\dagger$ denotes the
Moore--Penrose inverse; this avoids assuming the kernel is nonsingular.
The linear source $b^\ell$ vanishes in centered symmetric special cases; keeping it avoids making a
spurious symmetry assumption.  Equivalently, the vacua are the minimizers of the finite-dimensional
mean-field free energy
\[
  \mathcal F_D(\mathfrak v)
  =\sum_\ell \KL(\nu^\ell\Vert \nu^{\ell,0}_{K^\ell})
    +\bar\beta\,\widehat{\loss}_D(f_{\mathfrak v})
\]
after imposing the forward kernel-consistency constraints; the multipliers for these constraints are
precisely the backward tilts $(b^\ell,\Gamma^\ell)$.

\begin{lemma}[Mean-field limit as a mixture of vacua]\label{lem:mf}
Fix $d,L,D$ and $\bar\beta<\infty$.  Along every width sequence $N_j\to\infty$ there is a
subsequence, a probability measure $\rho_D$ on the set $\mathfrak V_D$ of minimizing mean-field
vacua, and conditionally chaotic product laws $\Pi_{\mathfrak v}$ such that
\[
  \Pi^D_{N_j,\bar\beta}
  \Longrightarrow
  \int_{\mathfrak V_D}\Pi_{\mathfrak v}\,d\rho_D(\mathfrak v)
\]
for bounded cylinder observables of finitely many neurons and their fields.  Conditional on a vacuum
$\mathfrak v$, any fixed finite collection of neurons in each layer is asymptotically i.i.d. with
one-site laws $\nu^\ell_{\mathfrak v}$, the kernels satisfy
\[
  K^{\ell+1}_{\mathfrak v}(x,x')
  =\E_{\nu^\ell_{\mathfrak v}}\phi(\zeta(x))\phi(\zeta(x')),
  \qquad K^{\ell+1}_{\mathfrak v}(x,x)\le1,
\]
and the output is
\[
  f_{\mathfrak v}(x)=\E_{\nu^{L-1}_{\mathfrak v}}\alpha\phi(\zeta(x)).
\]
If the minimizer is unique, the mixture is a single deterministic mean-field tower.
\end{lemma}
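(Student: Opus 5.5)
Tightness, exchangeability, de Finetti, and a large-deviation/Gibbs-variational identification of the limit points will be used, and the vacuum equations will then be read off as first-order conditions. \emph{Tightness:} by Lemma~\ref{lem:multiplier-app} (empirical version, with $K_D$ finite from memorization plus cloning, Theorem~\ref{thm:cloning}) the KL rate $\tfrac1N\KL(\Pi^D_{N,\bar\beta}\Vert\pi^0_N)$ is bounded uniformly in $N$. Lemma~\ref{lem:ss-kl} then gives $\KL(\bar\rho\Vert\rho^0)\le R$ for the averaged single-site marginal. Lemma~\ref{lem:gauss} gives uniform second moments of readouts and of the fields $\zeta\in\mathcal X=\R^{\B^d}$ (finite-dimensional, so constants may depend on $2^d$). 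Hence the laws of the neuron-level objects $(\alpha_i,\zeta^\ell_i)$ are tight. The Gibbs law is exchangeable in the neuron indices within each layer, since both prior and loss are permutation-invariant once the weights are pushed forward to fields. So I would pass to the empirical measures $\nu^\ell_N=\tfrac1N\sum_i\delta_{\zeta^\ell_i}$ and extract a subsequence along which the joint law of $(\nu^\ell_N)_\ell$ converges weakly to some $\rho_D$ on towers of probability measures. By Hewitt--Savage/de Finetti for exchangeable arrays (Sznitman's characterization of chaos), the finite-neuron marginals then converge to $\int\nu^{\otimes k}\,d\rho_D(\nu)$, which is the conditional chaos statement. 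The kernel identity $K^{\ell+1}=\E_{\nu^\ell}\phi\phi^\top$ follows because, conditionally on layer $\ell$, the layer-$(\ell+1)$ prior preactivation is exactly Gaussian with covariance $\tfrac1N\sum_i\phi(\zeta_i)\phi(\zeta_i)^\top$, and this converges by the law of large numbers under chaos. The bound $K(x,x)\le1$ is immediate from $|\phi|\le1$. The output formula is the same LLN applied to the readout average $\tfrac1N\sum\alpha_i\phi(\zeta_i)$, using uniform integrability of $\alpha$ from the KL moment bound.

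\emph{Identification of the support:} I would show that $\rho_D$ charges only minimizers of $\mathcal F_D$. At the level of layer empirical measures, the prior is a hierarchical Sanov-type large deviation principle: given layer $\ell$'s empirical measure, layer $\ell+1$ is i.i.d.\ Gaussian with the induced kernel. Its rate function is therefore $\sum_\ell\KL(\nu^\ell\Vert\mathcal N(0,K^\ell))$ under the forward kernel constraints, on the finite-dimensional space $\mathcal X$. Since $\widehat{\loss}_D$ is a bounded continuous function of $(\nu^\ell)$ after capping readouts, Varadhan's lemma applies to $e^{-N\bar\beta\widehat\loss_D}$. Then $\Pi^D_{N,\bar\beta}$ concentrates, at exponential rate, on neighbourhoods of the argmin set of $\mathcal F_D$, so every limit point $\rho_D$ lives on $\mathfrak V_D$. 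Lagrange multipliers for the kernel constraints, namely the linear term in $\phi(\zeta)$ and the quadratic term $\Gamma^\ell$ conjugate to $K^{\ell+1}=\E\phi\phi^\top$, then give the Gibbs form \eqref{eq:vacuum-law} by the standard minimizer-of-KL-plus-linear-functional computation. The Moore--Penrose convention handles singular $K^\ell$, because the Gaussian reference is supported on $\operatorname{ran}K^\ell$. If the minimizer is unique, $\rho_D$ is a point mass, and since this holds along every subsequence the full sequence converges.

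\emph{Main obstacle:} the LDP step is the hard part. The readout $\alpha$ is unbounded and the first-layer and hidden weights are not themselves fields, so the rate function must be derived for the pushforward. I would first use exponential tightness from the Gaussian prior to truncate $|\alpha|$ and $\tfrac1{\sqrt N}\norm W_\op$, using Lemma~\ref{lem:optrunc} and its prior-side $e^{-cA^2N}$ tail, which is superexponentially small relative to the $e^{-NR}$ scale. I would then apply the conditional Gaussian structure layer by layer, via the contraction principle plus Dawson--Gärtner, to obtain lower semicontinuity and compact level sets of $\mathcal F_D$. The remaining technical point is the unbounded $\ell=1$ fields, which I expect to control by the same Donsker--Varadhan moment bounds.
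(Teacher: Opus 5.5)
Your proposal is correct and takes essentially the paper's route. Both use Sanov for the product Gaussian prior, and Varadhan's lemma (equivalently the Gibbs variational principle) together with propagation of chaos for exchangeable laws to identify subsequential limits as mixtures over minimizers of $\mathcal F_D$. Both then take the first variation with kernel-consistency multipliers to obtain \eqref{eq:vacuum-law}. You make explicit the bookkeeping the paper calls standard: tightness, de Finetti/Sznitman, the conditional Sanov step, and readout truncation.

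One simplification: the uniform KL-rate bound needs no cloning input. Jensen gives $\tfrac1N\KL(\Pi^D_{N,\bar\beta}\Vert\pi^0_N)\le\bar\beta\,\E_{\pi^0_N}\widehat{\loss}_D\le\bar\beta(1+1/N)$.
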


\begin{proof}[Proof sketch]
This is the standard finite-dimensional mean-field Gibbs variational principle.  The empirical
measures of the neurons take values in probability measures on the finite-dimensional state space
$\mathcal X$ (augmented by the readout scalar in the last layer).  Under the product Gaussian prior,
Sanov's theorem gives the entropy term.  The empirical Gibbs factor has the intensive form
$\exp\{-N\bar\beta\widehat{\loss}_D\}$ and depends continuously on the empirical measures through the
finite vector of outputs and the forward kernel recursion.  Varadhan's lemma, or equivalently the
Gibbs variational principle plus the standard propagation-of-chaos theorem for exchangeable
mean-field Gibbs measures, gives convergence of the free energy to
$\inf_{\mathfrak v}\mathcal F_D(\mathfrak v)$ and identifies all subsequential limits as mixtures of
minimizers.

Taking the first variation of $\mathcal F_D$ at a minimizer gives
$d\nu^\ell/d\nu^{\ell,0}_{K^\ell}\propto\exp\{-\delta\Phi_D/\delta\nu^\ell\}$, where $\Phi_D$ is the
loss plus the kernel-consistency Lagrangian.  Since the loss and constraints depend on a layer only
through the finite feature statistics $\phi(\zeta)$ and their second moments, this variational
derivative is a linear plus quadratic function of $\phi(\zeta)$, giving \eqref{eq:vacuum-law}.  The
Gaussian readout coordinate is handled as part of the one-site state; standard finite-dimensional
exponential-integrability/truncation approximations justify the same variational formula for the
unbounded Gaussian prior.  No polynomial-in-$d$ estimate is asserted here.

In the body of the paper we freely condition on a vacuum $\mathfrak v$.  All regularity and
compression estimates used there are uniform over vacua satisfying the relevant KL/free-energy
budget, so after proving them conditionally one integrates over the mixing measure $\rho_D$.
\end{proof}

\bibliographystyle{plain}
\bibliography{refs}

@article{mmn2018,
  title={A mean field view of the landscape of two-layer neural networks},
  author={Song Mei and Andrea Montanari and Phan-Minh Nguyen},
  journal={Proceedings of the National Academy of Sciences of the United States of America},
  year={2018},
  volume={115},
  pages={E7665 - E7671},
}

@article{sirignano2020,
  title={Mean field analysis of neural networks: A central limit theorem},
  author={Justin A. Sirignano and Konstantinos V. Spiliopoulos},
  journal={Stochastic Processes and their Applications},
  year={2018},
}

@inproceedings{chizatbach2018,
  title={On the Global Convergence of Gradient Descent for Over-parameterized Models using Optimal Transport},
  author={L{\'e}na{\"i}c Chizat and Francis R. Bach},
  booktitle={Neural Information Processing Systems},
  year={2018},
}

@article{rotskoff2022,
  title={Trainability and Accuracy of Artificial Neural Networks: An Interacting Particle System Approach},
  author={Grant M. Rotskoff and Eric Vanden-Eijnden},
  journal={Communications on Pure and Applied Mathematics},
  year={2018},
  volume={75},
}

@article{nguyenpham2023,
  author  = {Nguyen, Phan-Minh and Pham, Huy Tuan},
  title   = {A rigorous framework for the mean field limit of multilayer neural networks},
  journal = {Mathematical Statistics and Learning},
  volume  = {6},
  number  = {3},
  pages   = {201--357},
  year    = {2023}
}

@inproceedings{yanghu2021,
  title={Tensor Programs IV: Feature Learning in Infinite-Width Neural Networks},
  author={Greg Yang and J. Edward Hu},
  booktitle={International Conference on Machine Learning},
  year={2021},
}

@inproceedings{bordelonpehlevan2022,
  author    = {Bordelon, Blake and Pehlevan, Cengiz},
  title     = {Self-consistent dynamical field theory of kernel evolution in wide neural networks},
  author={Blake Bordelon and Cengiz Pehlevan},
  journal={Journal of Statistical Mechanics: Theory and Experiment},
  year={2022},
  volume={2023},
}

@article{lauditi2025,
  title={Adaptive kernel predictors from feature-learning infinite limits of neural networks},
  author={Clarissa Lauditi and Blake Bordelon and Cengiz Pehlevan},
  journal={ArXiv},
  year={2025},
  volume={abs/2502.07998},
}

@inproceedings{rubin2024,
  author    = {Rubin, Noa and Ringel, Zohar and Seroussi, Inbar and Helias, Moritz},
  title     = {A unified approach to feature learning in Bayesian neural networks},
  booktitle = {High-dimensional Learning Dynamics (HiLD) Workshop},
  year      = {2024}
}

@article{seroussi2023,
  title={Separation of scales and a thermodynamic description of feature learning in some CNNs},
  author={Inbar Seroussi and Zohar Ringel},
  journal={Nature Communications},
  year={2021},
  volume={14},
}

@inproceedings{jacot2018,
  author    = {Jacot, Arthur and Gabriel, Franck and Hongler, Cl{\'e}ment},
  title     = {Neural Tangent Kernel: convergence and generalization in neural networks},
  booktitle = {Advances in Neural Information Processing Systems},
  year      = {2018}
}

@inproceedings{lee2018,
  author    = {Lee, Jaehoon and Bahri, Yasaman and Novak, Roman and Schoenholz, Samuel S. and Pennington, Jeffrey and Sohl-Dickstein, Jascha},
  title     = {Deep neural networks as {Gaussian} processes},
  booktitle = {International Conference on Learning Representations},
  year      = {2018}
}

@inproceedings{danielymalach2020,
  author    = {Daniely, Amit and Malach, Eran},
  title     = {Learning parities with neural networks},
  booktitle = {Advances in Neural Information Processing Systems},
  year      = {2020}
}

@inproceedings{barak2022,
  author    = {Barak, Boaz and Edelman, Benjamin L. and Goel, Surbhi and Kakade, Sham and Malach, Eran and Zhang, Cyril},
  title     = {Hidden progress in deep learning: {SGD} learns parities near the computational limit},
  booktitle = {Advances in Neural Information Processing Systems},
  year      = {2022}
}

@article{leshno1993,
  author  = {Leshno, Moshe and Lin, Vladimir Ya. and Pinkus, Allan and Schocken, Shimon},
  title   = {Multilayer feedforward networks with a nonpolynomial activation function can approximate any function},
  journal = {Neural Networks},
  volume  = {6},
  number  = {6},
  pages   = {861--867},
  year    = {1993}
}

@inproceedings{buzaglo2024,
  author    = {Buzaglo, Gon and Harel, Itamar and Nacson, Mor Shpigel and Brutzkus, Alon and Srebro, Nathan and Soudry, Daniel},
  title     = {How Uniform Random Weights Induce Non-uniform Bias: Typical Interpolating Neural Networks Generalize with Narrow Teachers},
  booktitle = {Proceedings of the 41st International Conference on Machine Learning},
  series    = {Proceedings of Machine Learning Research},
  volume    = {235},
  pages     = {5035--5081},
  year      = {2024}
}

@inproceedings{danielygranot2019,
  author    = {Daniely, Amit and Granot, Elad},
  title     = {Generalization Bounds for Neural Networks via Approximate Description Length},
  booktitle = {Advances in Neural Information Processing Systems},
  year      = {2019}
}

@inproceedings{danielygranot2024,
  author    = {Daniely, Amit and Granot, Elad},
  title     = {On the Sample Complexity of Two-Layer Networks: Lipschitz vs. Element-Wise Lipschitz Activation},
  booktitle = {Proceedings of The 35th International Conference on Algorithmic Learning Theory},
  series    = {Proceedings of Machine Learning Research},
  volume    = {237},
  pages     = {505--517},
  year      = {2024}
}

@article{navehringel2021,
  author  = {Naveh, Gadi and Ringel, Zohar},
  title   = {A self consistent theory of Gaussian Processes captures feature learning effects in finite {CNN}s},
  journal = {arXiv preprint arXiv:2106.04110},
  year    = {2021}
}

@inproceedings{rubin2025kernels,
  author    = {Rubin, Noa and Fischer, Kirsten and Lindner, Javed and Seroussi, Inbar and Ringel, Zohar and Kr{"a}mer, Michael and Helias, Moritz},
  title     = {From Kernels to Features: A Multi-Scale Adaptive Theory of Feature Learning},
  booktitle = {Proceedings of the 42nd International Conference on Machine Learning},
  series    = {Proceedings of Machine Learning Research},
  volume    = {267},
  pages     = {52225--52257},
  year      = {2025}
}

@article{rubin2025curse,
  author  = {Rubin, Noa and Davidovich, Orit and Ringel, Zohar},
  title   = {Mitigating the Curse of Detail: Scaling Arguments for Feature Learning and Sample Complexity},
  journal = {arXiv preprint arXiv:2512.04165},
  year    = {2025}
}

@article{davidovichringel2026,
  author  = {Davidovich, Orit and Ringel, Zohar},
  title   = {Algorithmic Task Capture, Computational Complexity, and Inductive Bias of Infinite Transformers},
  journal = {arXiv preprint arXiv:2603.11161},
  year    = {2026}
}

@article{helias2026,
  author  = {Helias, Moritz and Lindner, Javed and Schutzeichel, Lars and Ringel, Zohar},
  title   = {Lecture notes: From Gaussian processes to feature learning},
  journal = {arXiv preprint arXiv:2602.12855},
  year    = {2026}
}

@article{zavatoneveth2025summary,
  author  = {Zavatone-Veth, Jacob A. and Bordelon, Blake and Pehlevan, Cengiz},
  title   = {Summary statistics of learning link changing neural representations to behavior},
  journal = {arXiv preprint arXiv:2504.16920},
  year    = {2025}
}

\end{document}